\documentclass[11pt]{article}

\usepackage{amsmath,amsthm,amssymb,amscd,amstext,amsfonts}
\usepackage{color,verbatim,graphicx,fullpage,url}
\usepackage{xcolor}
\usepackage{algorithm}
\usepackage{algorithmic}
\usepackage{nicefrac}
\usepackage{fullpage,tikz,enumitem}
\usepackage{tikz-cd}
\usepackage{bm}
\usepackage{physics}
\usepackage{todonotes}
\usepackage{mathtools}
\usepackage{thm-restate}

% Links
\usepackage[breaklinks=true, linktocpage=true,]{hyperref}
\hypersetup{
    colorlinks = true,
    linkcolor={purple},
    urlcolor={purple},
    citecolor={blue!80!black}
}

\usepackage[nameinlink, noabbrev, capitalize]{cleveref}

\usepackage[utf8]{inputenc}
 
\Crefname{enumi}{Property}{Properties}

%%%  theorem styles
\theoremstyle{plain}

\newtheorem{thm}{Theorem}[section]
\newtheorem{theorem}[thm]{Theorem}
\newtheorem{atheorem}{Theorem}
 % Use A, B, C

% Tell cleveref how to print it
\crefname{atheorem}{Theorem}{Theorems}
\Crefname{atheorem}{Theorem}{Theorems}

\newtheorem{corollary}[thm]{Corollary}
\newtheorem{lemma}[thm]{Lemma}
\newtheorem{proposition}[thm]{Proposition}
\newtheorem{definition}[thm]{Definition}
\newtheorem{example}[thm]{Example}

\newtheorem{question}[thm]{Question}
\newtheorem{claim}[thm]{Claim}
\newtheorem*{claim*}{Claim}

\theoremstyle{remark}
\newtheorem{remark}[thm]{Remark}

%%%  math operation

\renewcommand\norm[1]{\left|\!\left|#1\right|\!\right|}

% norm
\newcommand{\set}[1]{\{#1\}}
\newcommand{\Set}[1]{\left\{#1\right\}}

\newcommand{\inp}[2]{{\left\langle #1,#2 \right\rangle}}            % inner product

%%% Probability
           % expected value 
           % expected value 
\DeclareMathOperator{\Prob}{Pr}                        % probability
                        % variance

%%% Reals, Naturals, Integers, etc. %%%

\newcommand{\R}{\mathbb{R}}

\newcommand{\bZ}{\mathbb{Z}}

\newcommand{\cB}{\mathcal B}
\newcommand{\cF}{\mathcal F}

\newcommand{\mcV}{\mathcal V}
\newcommand{\cX}{\mathcal X}

\newcommand{\bS}{\mathbf S}

 % from mathbbm.sty

\usepackage{ifthen}
% Approximate gamma-2 norm
\newcommand{\agnorm}[2][]{
	\ifthenelse{\equal{#2}{}}{
		\widetilde{\gamma}_2^{#1}
	}{
		\widetilde{\gamma}_2^{#1}(#2)
	}
}

% Matrices
% \textsf to \texttt

\DeclareFontFamily{U}{mathx}{}
\DeclareFontShape{U}{mathx}{m}{n}{<-> mathx10}{}
\DeclareSymbolFont{mathx}{U}{mathx}{m}{n}
\DeclareMathAccent{\widecheck}{0}{mathx}{"71}

% Theoretical learning stuff
\newcommand{\cC}{\mathcal{C}}
\newcommand{\cH}{\mathcal{H}}
\newcommand{\cA}{\mathcal{A}}

\newcommand{\cE}{\mathcal{E}}

\newcommand{\cQ}{\mathcal{Q}}

\newcommand{\cU}{\mathcal{U}}

% learning theory dimensions

\DeclareMathOperator{\VCdim}{\textsc{vc}}
\DeclareMathOperator{\SCdim}{\textsc{sc}}
\DeclareMathOperator{\LR}{\textsc{lr}}
\DeclareMathOperator{\LC}{L}
\DeclareMathOperator{\SDdim}{\textsc{sd}}

\DeclareMathOperator{\sign}{sign}
\DeclareMathOperator{\ord}{ord}

\renewcommand{\cH}{\mathcal{H}}
\renewcommand{\bS}{\mathbb{S}}

\renewcommand{\Pr}{\mathbb{P}}

\newcommand{\St}{\mathrm{St}}
\newcommand{\Lk}{\mathrm{Lk}}

\DeclareMathOperator{\conv}{conv}                                             % convex hull
\DeclareMathOperator{\supp}{supp}                                             % support
\DeclareMathOperator{\loss}{loss}                                             % loss
\DeclareMathOperator{\signrank}{sign-rank}                                     % signrank

%emptyset 
\let\emptyset\varnothing

\newcommand{\bcube}{{\{ \pm 1 \}^{\cX}}}

\newcommand{\sivan}[1]{\todo[size=\tiny, color=lightgray]{sivan: #1}}

\newcommand{\chavdar}[1]{\todo[size=\tiny, color=brown]{chavdar: #1}}

\begin{document}

\title{Simplicial covering dimension of extremal concept classes}

\author{
    Ari Blondal\thanks{McGill University, \texttt{ari.blondal@mail.mcgill.ca, hamed.hatami@mcgill.ca}. Hamed Hatami is supported by an NSERC grant.} \and Hamed Hatami\footnotemark[1] \and
    Pooya Hatami~\thanks{Ohio State University, \texttt{\{hatami.2, lalov.1, tretiak.2\}@osu.edu}} \and  Chavdar Lalov\footnotemark[2] \and Sivan Tretiak\footnotemark[2]
}

\maketitle

\begin{abstract}

Dimension theory is a branch of topology concerned with defining and analyzing dimensions of geometric and topological spaces in purely topological terms. In this work, we adapt the classical notion of topological dimension (Lebesgue covering) to binary concept classes. The topological space naturally associated with a concept class is its space of realizable distributions. The loss function and the class itself induce a simplicial structure on this space, with respect to which we define a simplicial covering dimension.

We prove that for finite concept classes, this simplicial covering dimension exactly characterizes the list replicability number (equivalently, global stability) in PAC learning. This connection allows us to apply tools from classical dimension theory to compute the exact list replicability number of the broad family of extremal concept classes.  
\end{abstract}

\tableofcontents

\section{Introduction}  
\label{sec:intro}

In recent years, several intriguing advances in learning theory have been made using tools from topology.  These developments point to a potential connection between learning theory and \emph{topological dimension theory}, a classical branch of topology pioneered by Brouwer, Lebesgue, and others in the early twentieth century, who aimed to formalize a general notion of dimension that demonstrates $\mathbb{R}^m$ and $\mathbb{R}^n$ are homeomorphic only when $m = n$.

Motivated by this connection, we introduce a new notion of dimension for concept classes, inspired by the classical Lebesgue covering dimension and the simplicial structures that naturally arise in learning problems.  We focus on binary classification, the task of learning an unknown function that maps the elements of a domain $\cX$ to one of two possible labels, typically denoted by $\pm 1$.  Formally, a binary classification problem is specified as a \emph{binary concept class} $\mathcal{C}$, a set of functions from a domain $\mathcal{X}$ to the label set $\set{\pm 1}$. Throughout this work, we restrict attention to finite domains $\cX$.

We define the \emph{simplicial covering dimension} $\SCdim(\mathcal{C})$ of a concept class by studying the topology of its space of realizable distributions, endowed with the simplicial structure induced by $\mathcal{C}$ and the loss function. Our definition is partly inspired by the line of work~\cite{chase2023replicabilitystabilitylearning, vander2024replicability, DPVV23, chase2023local, BGHH2025stabilitylistreplicabilityagnosticlearners, chornomaz2025spherical}, which showed that  \emph{list-replicable learning} is intrinsically connected to the geometry and topology of the space of realizable distributions. 

In the finite-domain setting, the simplicial covering dimension turns out (\cref{thm:intro_SCD_LR_equivalence}) to coincide with the \emph{list-replicability number} of the class:
\begin{equation}
\label{eq:LCdim_eqls_LR}
\SCdim(\cC) = \LR(\cC) - 1,
\end{equation}
where $\LR(\cC)$ denotes the list-replicability number of $\cC$. This well-studied parameter arose in recent years as part of a broader attempt to formalize the notion of replicability ~\cite{BLM20,malliaris2022unstable, chase2023replicabilitystabilitylearning, bun2023stability, karbasi2023replicability, esfandiari2023replicable, Esfandiarietal23, moran2023bayesian, eaton2024replicable, kalavasis2024replicable, kalavasis2023statistical}, which refers to the requirement that an algorithm produce consistent outcomes when repeated under similar conditions and with similar data. Our framework allows us to import classical tools from dimension theory into the analysis of replicable learning.

In particular, by analyzing $\SCdim$, we determine the list-replicability number of \emph{extremal concept classes} (also known as ample concept classes), a family that includes many of the most natural examples of concept classes. Our main theorem (\Cref{thm:intro_LCdim_extremal})  establishes that every \emph{extremal} concept class $\cE$ over a finite domain $\cX$, satisfies 
\begin{equation}
\label{eq:main_result}
        \SCdim(\cE) = 
        \begin{cases}
            \VCdim(\cE)-1 & \text{ if }\cE=\set{\pm 1}^\cX  \\
            \VCdim(\cE) &\text{otherwise} 
        \end{cases}, 
\end{equation}
where $ \VCdim(\cE)$ denotes the VC dimension of $\cE$. 

This theorem resolves the list-replicability number of several well-studied classes, such as axis-parallel boxes and downward-closed classes~\cite[Section 3.2]{chalopin2022unlabeled}, whose values were previously unknown. Beyond the new cases, it also unifies the characterization of the list replicability number for several previously known cases, such as the binary cube~\cite{chase2023replicabilitystabilitylearning}, threshold functions~\cite{chase2023replicabilitystabilitylearning}, and more generally halfspaces~\cite{blondal2025borsukulamreplicablelearninglargemargin}.

\subsection{Preliminaries and definitions}
 
In this section, we provide precise definitions of the notions discussed in the introduction.
\subsubsection{PAC learning, VC dimension, and list replicability}

In probably approximately correct (PAC) learning, the learner is given parameters $\delta,\epsilon>0$ and receives training data $S$ consisting of $n=n(\cC,\delta,\epsilon)$ independent labeled examples drawn from an unknown but fixed distribution $\mu$ over $\mathcal{X} \times \set{\pm 1}$.  We work in the \emph{realizable} setting: there exists some concept $c \in \mathcal{C}$ that correctly labels all examples in the support of $\mu$. The learner’s task is to use the training data $S$ to output, with probability at least $1-\delta$, a \emph{hypothesis} $h:\cX \to \set{\pm 1}$ whose \emph{population loss}
\[\loss_\mu(h) \coloneqq \Pr_{(x,b) \sim \mu}[h(x) \neq b]\]
is at most $\epsilon$. 

Throughout this work, a \emph{learning rule} refers to a (randomized) function $\bm{\cA}$ that maps any sample
$S \in \bigcup_{n=0}^\infty (\cX \times \Set{\pm 1})^n$ 
to a hypothesis $\bm{\cA}(S) \in \Set{\pm 1}^{\cX}$. Since our primary focus is sample complexity rather than computational efficiency, we impose no computability constraints on $\bm{\cA}$.

\paragraph{VC dimension.}  A  concept class $\cC \subseteq \set{\pm 1}^\cX$ shatters a set $S \subseteq \cX$ if $\set{ c|_S :~c \in \cC } = \set{\pm 1}^S$, where $c|_S$ denotes the restriction of $c$ to $S$.  The  \emph{Vapnik–Chervonenkis} (VC) dimension of $\cC$ is defined as 
\[ \VCdim(\cC) \coloneqq \sup \set{ |S|:~S\subseteq \cX \text{ is shattered by }\cC}. \] 
The fundamental theorem of PAC learning states that the sample complexity of PAC learning is determined by the VC dimension. More precisely, the optimal sample size for PAC learning a binary class $\cC$ with accuracy parameter $\epsilon$ and confidence parameter $\delta$ is $\Theta_{\epsilon,\delta}(\VCdim(\cC))$. 
 
\paragraph{List replicability.} List replicability, introduced in~\cite{chase2023replicabilitystabilitylearning,DPVV23}, is an elegant reformulation of global stability~\cite[Definition 1]{chase2023replicabilitystabilitylearning}.

\begin{definition}[List replicability]
\label{def:list}
The \emph{list replicability number} of a concept class $\cC \subseteq \set{\pm 1}^\cX$, denoted $\LR(\cC)$, is the smallest integer $L$ such that the following holds. For every $\epsilon, \delta > 0$, there exists a sample size $n(\cC,\epsilon, \delta)$ and a learning rule $\bm{\cA}$ such that:

For every realizable distribution $\mu$ on $\cX \times \set{\pm 1}$, there exists a list of hypotheses $h_1, \dots, h_L \in \set{\pm 1}^\cX$ satisfying  
\begin{spreadlines}{1em}
    \begin{align*}
        &\bullet\ \loss_\mu(h_i) \le \epsilon \text{ for all } i=1,\ldots,L;\\
        &\bullet\ \Pr_{S \sim \mu^n} [\bm{\cA}(S) \not\in \Set{h_1,\ldots,h_L}] \le  \delta \ \text{ where } n=n(\epsilon,\delta).
    \end{align*}
\end{spreadlines}

\end{definition}

\subsubsection{Extremal classes} 

A class $\cC$ over a domain $\cX$ \emph{strongly shatters} a set $S \subseteq \cX$ 
if there exists a labeling $a \in \set{\pm 1}^{\cX \setminus S}$ such that 
\[
\set{ c|_S :~c \in \cC,\ c|_{\cX \setminus S}=a } = \set{\pm 1}^S.
\]
In other words, $\cC$ realizes all $2^{|S|}$ labelings on $S$ while fixing the labels outside $S$ in some prescribed way. A class $\cC$ is called \emph{extremal} if every set shattered by $\cC$ is also strongly shattered.

Many natural concept classes are either extremal or admit natural extremal extensions. 
We refer the reader to \cite[Section 3.2]{chalopin2022unlabeled} for a more comprehensive list of known examples; here, we present a few illustrative ones.  
\begin{example}[Sign patterns of convex sets~\cite{MR683734}]
\label{ex:convex}
Let $K \subset \mathbb{R}^n$ be a convex set. The class 
\[
C(K) \coloneqq \set{ \sign(v):~v \in K,\ v_i \neq 0 \ \forall i \le n },
\]
where $\sign(v) \in \set{\pm 1}^n$ denotes the sign pattern of $v$, 
is extremal.
\end{example}

\begin{example}[Homogeneous half-spaces]
\label{ex:point_hom_half_space}
Let $P \subset \mathbb{R}^d$ be a finite set of points.
Define the concept class of homogeneous half-spaces $\cH_P$ over the domain $P$ as follows:
each homogeneous half-space in $\mathbb{R}^d$ whose defining hyperplane avoids $P$ induces a labeling of $P$ by $\set{\pm 1}$.
Explicitly, the label of $p \in P$ is given by
\[
p \mapsto \sign(\inp{v}{p}),
\]
where the inequality $\inp{v}{x} > 0$ defines the half-space.

The set of maps $p \mapsto \inp{v}{p}$ (for all $v \in \mathbb{R}^d$) is convex, and therefore, by \Cref{ex:convex}, the class $\cH_P$ is extremal.
\end{example}

\begin{example}[Axis-parallel boxes]
Let $P \subset \mathbb{R}^d$ be a finite set of points, no two of which share a coordinate.  Each axis-parallel box in $\mathbb{R}^d$ induces a labeling of $P$ by $\pm 1$,  
depending on whether a point lies inside or outside the box.  
The corresponding concept class is extremal. 
\end{example}

\begin{example}[Median classes] 
A class $\cC \subseteq \set{\pm 1}^\cX$ is called \emph{median} if it is closed under taking the majority of three concepts.  
More precisely, for any $c_1,c_2,c_3 \in \cC$, the concept  
\[
x \mapsto \mathrm{maj}(c_1(x),c_2(x),c_3(x))
\] 
also belongs to $\cC$.  
All median classes are extremal~\cite[Proposition~2]{MR2215426}. 
\end{example}

While not every class is extremal, a well-known open problem asks whether every class can be extended to an extremal one without significantly increasing its VC dimension.
 
\begin{question}[\cite{moran2016labeled,chase2024dual}]
\label{q:extremal}
    Does there exist a function $t:\mathbb{N} \to \mathbb{N}$ such that,  
    for every binary class $\cC$ over a finite domain $\cX$, there is an extremal class 
    $\cE \supseteq \cC$ over $\cX$ with
    \[
    \VCdim(\cE) \le t(\VCdim(\cC))?
    \]
\end{question}

\subsubsection{Topological dimension theory}

For two families $\cA = \set{A_i}_{i \in I}$ and $\cB = \set{B_i}_{i \in I}$ of subsets of a set $X$, indexed by the same set $I$, we say that $\cB$ is a \emph{shrinkage} of $\cA$ if $B_i \subseteq A_i$ for all $i \in I$.

More generally, a family $\cB = \set{B_j}_{j \in J}$ of subsets of $X$ is said to be a \emph{refinement} of another family $\cA = \set{A_i}_{i \in I}$, written $\cA \prec \cB$, if for every $B_j$, there exists $A_i$ such that $B_j \subseteq A_i$.
Every shrinkage of $\cA$ is, in particular, a refinement of $\cA$.

The \emph{order} of a family $\cA=\set{A_i}_{i \in I}$ of sets is defined as 
\[ \ord(\cA) \coloneqq \max_{x} (|\set{i \in I : \ x \in A_i}|-1),\] 
i.e., one less than the maximum number of sets in $\cA$ that have a nonempty intersection.

Now suppose $\cA$ is a \emph{finite open cover} of a topological space $X$. The \emph{Lebesgue covering number} of $\cA$ is defined\footnote{Traditionally, $\LC(\cA)$ is defined by taking the minimum of $\ord(\cB)$ over all finite open covers $\cB$ that refine $\cA$. However, as shown in~\cite[Proposition 1.1.7]{MR3242807}, it suffices to consider only shrinkages of $\cA$.} as 
\[\LC(\cA) \coloneqq \min \set{\ord(\cB):~ \text{$\cB$ is an open cover of $X$ and a shrinkage of $\cA$}}.  \]

\begin{definition}[Topological dimension]
\label{def:lebesgue}
The Lebesgue covering dimension (or simply the topological dimension) of $X$ is defined by
\begin{equation}
\label{eq:LCdim}
\dim(X)  \coloneqq \sup \Set{ \LC(\cA) :~\cA \text{ is a finite open cover of } X }.
\end{equation}
\end{definition}

\begin{remark}
\label{rem:outside_X}
In this paper, we will always work with topological spaces of the form $X \subseteq \mathbb{R}^n$ endowed with the subspace topology inherited from $\mathbb{R}^n$. Thus, a set $S \subseteq X$ is open if and only if $S = U \cap X$ for some open set $U \subseteq \mathbb{R}^n$. For such spaces, it is often convenient to allow covers of $X$ whose elements may extend outside $X$. In this case, we identify $\cA = \set{A_i}_{i \in I}$ with $\set{A_i \cap X}_{i \in I}$, and call $\cA$ an open cover of $X$ if  $\set{A_i \cap X}_{i \in I}$ is an open cover of $X$. We write $\cA \prec \cB$ to mean $\set{A_i \cap X}_{i \in I} \prec \set{B_j \cap X}_{j \in J}$. However, we still calculate $\ord(\cA)$ as one less than the maximum number of sets containing any one point $x \in X$, and define $\LC(\cA)$ accordingly.

\end{remark}

The notion of Lebesgue covering dimension of \Cref{def:lebesgue} was formally introduced by Eduard \v{C}ech in 1933, building on a theorem of Lebesgue that serves as one of the main tools in this paper.\footnote{The Lebesgue covering theorem is usually stated for closed sets, as in \cref{thm:lebesgue_covering_thm}. The open-set formulation presented here is derived from the closed version in \Cref{sec:basic_facts}.}

\begin{restatable}[Lebesgue covering theorem]{theorem}{lebesguecoveringtheorem}\label{thm:lebesgue}
    Suppose a $d$-dimensional cube $[-1,1]^d \subset \R^d$  is covered by a finite family $\cA$ of open sets, none of which contains points of opposite faces of the cube.
    Then $\ord(\cA) \geq d$.
\end{restatable}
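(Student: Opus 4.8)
The plan is to deduce the open-set statement from the classical closed-set form of Lebesgue's covering theorem (the result \cref{thm:lebesgue_covering_thm} alluded to in the footnote, due to Lebesgue), the only bridge being the shrinking lemma for finite open covers of normal spaces. Write $Q \coloneqq [-1,1]^d$ and, for $i \le d$ and $\epsilon \in \{-1,+1\}$, let $F_i^{\epsilon} \coloneqq \{x \in Q : x_i = \epsilon\}$ be the two facets perpendicular to the $i$-th axis; the hypothesis that no member of $\cA$ ``contains points of opposite faces'' means that no $A \in \cA$ meets both $F_i^{-1}$ and $F_i^{+1}$, for any $i$. By \cref{rem:outside_X} we may first replace each $A \in \cA$ by $A \cap Q$: this changes neither any intersection $A \cap F_i^{\epsilon}$, so the opposite-faces hypothesis survives, nor $\ord(\cA)$, which is evaluated only at points of $Q$. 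Now suppose, toward a contradiction, that $\ord(\cA) \le d-1$.

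As $Q$ is a compact metric space it is normal, so the shrinking lemma supplies a \emph{closed} shrinking $\cB = \{B_1,\ldots,B_m\}$ of $\cA = \{A_1,\ldots,A_m\}$: each $B_j$ is closed, $B_j \subseteq A_j$, and $\bigcup_j B_j = Q$. Two elementary observations now close the argument. First, passing to subsets cannot raise the order: for every $x \in Q$ we have $\{j : x \in B_j\} \subseteq \{j : x \in A_j\}$, hence $\ord(\cB) \le \ord(\cA) \le d-1$. Second, the opposite-faces condition is inherited, since $B_j \subseteq A_j$ and a set that meets both $F_i^{-1}$ and $F_i^{+1}$ has a superset with the same property. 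Thus $\cB$ is a finite closed cover of the $d$-cube, none of whose members meets a pair of opposite facets, and yet $\ord(\cB) \le d-1$ --- contradicting the closed-set Lebesgue covering theorem, which forces $\ord(\cB) \ge d$. Hence $\ord(\cA) \ge d$, as claimed.

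The only substantive ingredient is therefore the closed-set theorem itself, which I would treat as classical --- it is the theorem of Lebesgue cited above and appears in the paper as \cref{thm:lebesgue_covering_thm} --- but, for a self-contained treatment, the standard proof runs through Sperner's lemma, equivalently Brouwer's fixed-point theorem: one uses the opposite-faces hypothesis to organize the covering sets into families that avoid prescribed facets, transfers this geometric avoidance into a Sperner-type boundary colouring of a fine triangulation of $Q$, and, as the mesh tends to $0$, extracts from the resulting rainbow simplices --- via compactness of $Q$ and closedness of the families --- a point common to $d+1$ of the original sets. The genuinely delicate point, and the main obstacle in any from-scratch argument, is exactly this translation of the geometric ``no member touches two opposite faces'' condition into a combinatorial boundary condition amenable to a fixed-point/degree argument; by contrast, the reduction from the open formulation to the closed one carried out above is routine once the shrinking lemma is available.
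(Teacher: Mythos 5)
Your proof is correct and follows essentially the same route as the paper: reduce the open-cover statement to the classical closed-cover Lebesgue theorem (\cref{thm:lebesgue_covering_thm}) by replacing the open cover with a closed shrinking, noting that this cannot increase the order and that the opposite-faces hypothesis is inherited. The only cosmetic difference is that you invoke the standard shrinking lemma for normal spaces as a black box, whereas the paper proves its own compact-space version (\cref{lemma:open_cover_to_closed}) from scratch; the underlying construction is the same.
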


See~\Cref{fig:lebesgue_covering_dimension} for an illustration of the Lebesgue covering dimension of a square in $\mathbb{R}^2$.

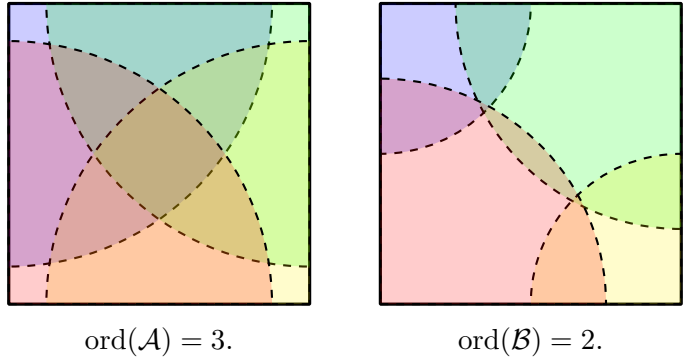
\begin{figure}[H]
    \begin{center}
        % \tikzset{
        %     every node/.style={
        %         % shift only,
        %         % fill=blue!10, % Default fill color
        %         % draw=blue,    % Default draw color
        %         % thick         % Default line thickness
        %         font = \footnotesize
        %     }
        % }
        \begin{tikzpicture}[baseline=(current bounding box.center),scale=1]
            \coordinate (A) at (-2, 2);
            \coordinate (B) at (2, 2);
            \coordinate (C) at (2, -2);
            \coordinate (D) at (-2, -2);
            \draw[very thick, rounded corners=0.1mm] (A) -- (B) -- (C) -- (D) -- cycle;

            \filldraw[thick, dashed, fill=blue, fill opacity=0.2] (A) -- +(3.5,0) arc[start angle=0, delta angle=-90, radius=3.5] -- (A);
            \filldraw[thick, dashed, fill=green, fill opacity=0.2] (B) -- +(-3.5,0) arc[start angle=180, delta angle=90, radius=3.5] -- (B);
            \filldraw[thick, dashed, fill=yellow, fill opacity=0.2] (C) -- +(-3.5,0) arc[start angle=180, delta angle=-90, radius=3.5] -- (C);
            \filldraw[thick, dashed, fill=red, fill opacity=0.2] (D) -- +(3.5,0) arc[start angle=0, delta angle=90, radius=3.5] -- (D);
            \node at (0, -2.5) {$\ord(\cA) = 3$.};
        \end{tikzpicture}
        \quad \quad
        \begin{tikzpicture}[baseline=(current bounding box.center),scale=1]
            \coordinate (A) at (-2, 2);
            \coordinate (B) at (2, 2);
            \coordinate (C) at (2, -2);
            \coordinate (D) at (-2, -2);
            \draw[very thick, rounded corners=0.1mm] (A) -- (B) -- (C) -- (D) -- cycle;
    
            \filldraw[thick, dashed, fill=blue, fill opacity=0.2] (A) -- +(2,0) arc[start angle=0, delta angle=-90, radius=2] -- (A);
            \filldraw[thick, dashed, fill=green, fill opacity=0.2] (B) -- +(-3,0) arc[start angle=180, delta angle=90, radius=3] -- (B);
            \filldraw[thick, dashed, fill=yellow, fill opacity=0.2] (C) -- +(-2,0) arc[start angle=180, delta angle=-90, radius=2] -- (C);
            \filldraw[thick, dashed, fill=red, fill opacity=0.2] (D) -- +(3,0) arc[start angle=0, delta angle=90, radius=3] -- (D);
            \node at (0, -2.5) {$\ord(\cB) = 2$.};
        \end{tikzpicture}
    \end{center}
        
    \caption{The illustrated open cover $\cA$ of the square $[-1,1]^2$ has order $3$, but it admits a shrinkage $\cB$ of order $2$, implying $\LC(\cA) \le 2$. Moreover, since no set in $\cA$ contains points from opposite faces, the Lebesgue covering theorem shows $\LC(\cA) \ge 2$.}
    \label{fig:lebesgue_covering_dimension}
\end{figure}
% \ari{Two choices for this figure!}

We also rely on the following standard facts about topological dimension:

\begin{theorem}[{\cite[Theorems 50.2 and 50.6]{munkres2000topology}}]\label{theorem:LC_of_unions}  Topological dimension satisfies:
\begin{itemize}
\item[(i)] Every compact subspace of $\R^d$ has topological dimension at most $d$.
\item[(ii)] Let $X = Y \cup Z$, where $Y$ and $Z$ are closed subspaces of $X$ having finite topological dimension. Then
    \[
        \dim(X) = \max \Set{ \dim(Y), \dim(Z)}.
    \]
\end{itemize}    
\end{theorem}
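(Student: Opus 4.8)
The plan is to handle the two parts separately; both are classical facts of dimension theory, stated here with a citation to Munkres, so what follows is the standard route I would take for each. For \textbf{part (i)}, $\dim K \le d$ for compact $K \subseteq \R^d$, the single ingredient I would invoke is the classical ``brick'' (staircase) decomposition: for every $N \in \N$ there is a family $\cB_N$ of open subsets of $\R^d$ covering $\R^d$, each of diameter at most $1/N$, with $\ord(\cB_N) \le d$ --- built by induction on $d$ from the cover of $\R$ by overlapping length-$(\approx 1/N)$ intervals, offsetting successive rows so that at most $d+1$ fattened bricks meet at any point. Given a finite open cover $\cA = \set{A_i}_{i \in I}$ of $K$, I would take a Lebesgue number $\lambda > 0$, fix $N$ with $1/N < \lambda$, and for each brick $B \in \cB_N$ meeting $K$ pick $\iota(B) \in I$ with $B \cap K \subseteq A_{\iota(B)}$; then $A_i' \coloneqq \bigcup\set{B \in \cB_N :~ B \cap K \neq \emptyset,\ \iota(B) = i}$ is, by the convention of \Cref{rem:outside_X}, a shrinkage of $\cA$ still covering $K$, and since each index $i$ with $x \in A_i'$ needs its own brick through $x$, its order is at most $\ord(\cB_N) \le d$. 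Hence $\LC(\cA) \le d$ for every $\cA$, i.e.\ $\dim K \le d$; the only mildly nontrivial point is the brick construction itself.

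For \textbf{part (ii)} (write $m \coloneqq \max\set{\dim Y, \dim Z}$), the inequality ``$\ge$'' is monotonicity of $\dim$ on closed subspaces: given a finite open cover $\set{A_i}$ of $Y$ with $A_i = U_i \cap Y$ and $U_i$ open in $X$, I would apply the definition of $\dim X$ to the cover $\set{U_i}_i \cup \set{X \setminus Y}$ of $X$, take a shrinkage of order $\le \dim X$, and intersect back with $Y$ (the set $X \setminus Y$ drops out, and passing to a subspace never raises the order), obtaining $\dim Y \le \dim X$ and symmetrically $\dim Z \le \dim X$. The inequality ``$\le$'' is the finite closed sum theorem; I would use that $X \subseteq \R^d$ is metric, hence normal, so the shrinking and swelling lemmas apply. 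Starting from a finite open cover $\cA$ of $X$, the plan is: (a) shrink $\set{A_i \cap Y}$ inside $Y$ to an open cover of order $\le m$ with $\mathrm{cl}_Y(B_i) \subseteq A_i$; (b) swell the closed-in-$X$ sets $\mathrm{cl}_Y(B_i)$ to sets open in $X$, still inside the $A_i$ and still of order $\le m$, covering a neighborhood $O$ of $Y$; (c) on the closed set $Z' = Z \setminus O \subseteq Z$, shrink $\set{A_i \cap Z'}$ to order $\le m$ and extend it to a family open in $X \setminus Y$; (d) glue the families from (b) and (c), clean up the overlap with one more shrinking step, and obtain a shrinkage of $\cA$ covering $X$ with order $\le m$, whence $\LC(\cA) \le m$ and $\dim X \le m$.

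I expect step (d) to be the main obstacle: keeping the merged family simultaneously an open cover of $X$, a shrinkage of $\cA$, \emph{and} of order $\le m$ across the seam $Y \cap Z$ is precisely where the sum theorem has content, and the decisive device is careful bookkeeping of which ``layer'' (the $Y$-part versus the $Z$-part) each fattened set is permitted to touch. An alternative worth considering is to invoke instead the characterization ``$\dim X \le m$ iff every continuous map from a closed subspace of $X$ into $S^m$ extends over $X$'', under which the extension problems for the two closed pieces patch along $Y \cap Z$; but that trades the sum-theorem bookkeeping for a proof of the equivalence of the two definitions of covering dimension.
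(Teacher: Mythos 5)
The paper does not prove this theorem: it is quoted verbatim as a citation to Munkres (Theorems~50.2 and~50.6), and no argument for it appears anywhere in the text. So there is no ``paper's own proof'' to compare against, and the relevant question is whether your reconstruction is sound on its own terms.

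For part~(i) your plan is correct and is in fact the route Munkres takes: build, for each mesh $1/N$, an open cover $\cB_N$ of $\R^d$ by offset bricks of order $\le d$; take a Lebesgue number $\lambda$ of the given finite open cover $\cA$ of $K$; pick $N$ with $1/N < \lambda$, assign each brick meeting $K$ to a single index $\iota(B)$ with $B\cap K\subseteq A_{\iota(B)}$, and union bricks by index. The order bound is right because each index owns disjoint sets of bricks, so the number of $A_i'$ through a point is at most the number of bricks through it. The one thing you wave at (``the only mildly nontrivial point'') is the brick construction itself, which is exactly Munkres' Lemma preceding 50.6; a reader who grants that lemma can complete your proof without further ideas. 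For part~(ii), your ``$\ge$'' argument (monotonicity of $\dim$ on closed subspaces, by adjoining the open complement $X\setminus Y$, shrinking in $X$, and intersecting back with $Y$) is correct and complete.

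Your part~(ii) ``$\le$'' is where the gap lives, and you have correctly diagnosed it yourself. Steps (a)--(c) are standard: shrink on $Y$, swell the closed shrinkage into $X$ while staying inside the $A_i$ and keeping order $\le m$, then do the same on the closed leftover $Z\setminus O\subseteq Z$. But step~(d) --- merging the two families into a single shrinkage of $\cA$ of order $\le m$ --- is not a cleanup step; it is the entire content of the closed sum theorem. Naively taking $D_i = C_i\cup E_i$ can double the order along the seam (points in $\overline{O}\cap Z$ may lie in several $C_i$ \emph{and} several $E_j$), and the standard remedy is an inductive enlargement scheme that grows the cover one ``layer'' at a time while carrying along a fixed bound on how many enlarged sets may meet --- this is what Munkres' proof of 50.2 actually does, in the more general countable setting. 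As written, your sketch names the ingredients but does not supply the argument that the seam behaves; you flag this honestly, and I agree it is the step that needs to be filled. Your alternative idea (the sphere-extension characterization of dimension) would indeed make the gluing trivial, since extension problems over the two closed pieces do patch along $Y\cap Z$ by a straightforward Tietze argument, but as you note this just moves the work into proving the equivalence of the two definitions of covering dimension, which is itself a substantial theorem.
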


From \Cref{thm:lebesgue}, the cube $[-1,1]^d$ has topological dimension at least $d$, while \Cref{theorem:LC_of_unions}~(i) provides the upper bound of $d$. Hence, its topological dimension is exactly $d$. Consequently, any set homeomorphic to $[-1,1]^d$, such as a $d$-simplex, or the unit ball in $\mathbb{R}^d$, also has topological dimension $d$.  

For further reading on dimension theory, see the classical texts~\cite{dimensiontheory1941,MR482697}.

\subsubsection{Simplicial covering dimension of binary concept classes}

In this section, we describe how the notion of topological dimension can be naturally adapted to the space of realizable distributions associated with a concept class. Our starting point is to represent every realizable distribution as a point in $\mathbb{R}^\cX$.
Every vector $\mu$ on the $\ell_1$-unit sphere in $\R^\cX$ naturally encodes a distribution over $\cX \times \set{\pm 1}$: for each $x \in \supp(\mu)$, the distribution assigns mass $|\mu(x)|$ to the labeled example $(x,\sign(\mu(x)))$. This correspondence identifies the realizable distributions of a concept class $\cC \subseteq \set{\pm 1}^\cX$ with the subset
\begin{equation}
\label{eq:definition_D_C}\Delta_\cC \coloneqq \Set{\mu \in \mathbb{R}^\cX :~\norm{\mu}_1=1 \text{ and } \exists c \in \cC \text{ with } c(x)=\sign(\mu(x)) \ \forall x \in \supp(\mu)} \subseteq \R^\cX.
\end{equation}

Note, however, that the topology of $\mathbb{R}^\cX$ does not capture the connection between $\Delta_\cC$ and the binary classification task, since the relevant notion of proximity in learning is not the Euclidean distance but rather the population loss, which only measures discrepancies on points where the given $\pm 1$ labels differ.   For instance, a given hypothesis $h$ incurs zero population loss on all the realizable distributions in 
\begin{equation}\label{eq:zero-loss_set}
B_h \coloneqq \set{\mu \in \R^\cX :~\norm{\mu}_1 = 1 \text{ and } \loss_\mu(h) = 0},
\end{equation}
even though these distributions may be far apart geometrically in $\mathbb{R}^\cX$. We refer to $B_h$ as the \emph{zero-loss set} around $h$.

Each $B_h$ is a $(|\cX|-1)$-simplex in $\mathbb{R}^\cX$, and $\Delta_\cC = \bigcup_{c \in \cC} B_c$. Hence, unless $\cC$ is empty, $\Delta_\cC$, equipped with the Euclidean topology of $\mathbb{R}^\cX$, has the same topological dimension as each simplex $B_c$, namely $|\cX|-1$. However, from the perspective of population loss, it is natural to regard all distributions within a zero-loss set $B_c$ as being at “distance” zero from one another, since they all induce the same labeling of $\cX$. Consequently, each $B_c$ should be viewed as a $0$-dimensional set in this context.
 
In other words, a suitable notion of topological dimension for $\Delta_\cC$ should consider only those covers  $\mathcal{A}$ in \eqref{eq:LCdim} that respect the equivalence relation induced by each zero-loss set $B_h$.  We formalize this idea in the following definition.

\begin{definition}[Simplicial covering dimension of a binary concept class]  
\label{def:TD_of_binary}
Let $\cC \subseteq \set{\pm 1}^\cX$ be a binary concept class over a finite domain $\cX$, and let $\cB = \set{B_h}_{h \in \set{\pm 1}^\cX}$ be the cover of $\Delta_\cC \subseteq \mathbb{R}^\cX$ defined above. The \emph{simplicial covering dimension} of $\cC$ is
\[\SCdim(\cC) \coloneqq  \sup \Set{ \LC(\cA) :~\cA \text{ is a finite open cover of } \Delta_\cC \text{ satisfying } \cA \prec \cB}. \]
 
\end{definition}

The definition of the simplicial covering dimension for concept classes can be naturally interpreted through the lens of simplicial complexes.

Consider a finite geometric simplicial complex $\Gamma$ in $\mathbb{R}^n$ (\Cref{def:geomSimp}). By definition, $\Gamma$ is a collection of simplices whose union forms the polyhedron $\norm{\Gamma} = \bigcup_{\sigma \in \Gamma} \sigma \subseteq \mathbb{R}^n$. It is natural to study the topological dimension of $\norm{\Gamma}$ relative to this simplicial cover. More generally, for simplicial complexes $\Gamma \subseteq \Gamma'$, the faces $\sigma \in \Gamma'$ need not lie entirely in $\norm{\Gamma}$, but by our convention~\Cref{rem:outside_X} they nevertheless define a cover of $\norm{\Gamma}$.  

\begin{definition}
Let $\Gamma \subseteq \Gamma'$ be geometric simplicial complexes in $\mathbb{R}^n$.  
The \emph{relative simplicial covering dimension} of $\Gamma$ with respect to $\Gamma'$ is defined by  
\[\SCdim_{\Gamma'}(\Gamma) \coloneqq  \sup_\cA \LC(\cA),  \] 
where the supremum is taken over all \emph{finite open covers} $\cA$ of $\norm{\Gamma}$ satisfying $\cA \prec \Gamma'$.
\end{definition} 

The quantity $\SCdim_{\Gamma'}(\Gamma)$ only depends\footnote{See \Cref{prop:abstract_Simp}.} on the abstract simplicial structures (\Cref{def:AbsSimp}) of $\Gamma$ and $\Gamma'$, which allows us to define:

\begin{definition}
\label{def:LC_simplicial}
Let $K \subseteq K'$ be finite abstract simplicial complexes. The relative simplicial covering dimension of $K$ with respect to the faces of $K'$ is  defined by 
\[\SCdim_{K'}(K) \coloneqq \SCdim_{\Gamma'}(\Gamma), \] 
where $\Gamma'$ is any geometric realization of $K'$ and $\Gamma \subseteq \Gamma'$ is its restriction to $K$. 
\end{definition}

Understanding the quantities $\SCdim_K(K)$, and more generally $\SCdim_{K'}(K)$, appears to be an intriguing problem in its own right, independent of the learning-theoretic motivations of this work.

\begin{question}
\label{q:simplicalLC}
Is there a simple combinatorial characterization of  $\SCdim_K(K)$, and more generally $\SCdim_{K'}(K)$, where $K' \supseteq K$ are finite simplicial complexes?
\end{question}

Finally, the simplicial covering dimension of concept classes introduced in \Cref{def:TD_of_binary} arises as a special case of \Cref{def:LC_simplicial}.
Indeed, since  $\Delta_\cC = \bigcup_{c \in \cC}  B_c$, the set $\Delta_\cC$ can be viewed as the polyhedron of the geometric simplicial complex $\Gamma$ whose facets are the simplices $B_c$ for $c \in \cC$.
Letting $\Gamma'$ denote the simplicial complex whose facets are $B_h$ for all $h \in \set{\pm 1}^\cX$, we recover \Cref{def:TD_of_binary} from \Cref{def:LC_simplicial}.

\subsection{Main contributions}

Our first theorem establishes that, for binary concept classes over finite domains, the simplicial covering dimension exactly characterizes the list replicability number.

\begin{atheorem}\label{thm:intro_SCD_LR_equivalence}
Let $\cC \subseteq \set{\pm 1}^\cX$ be a concept class over a finite domain $\cX$. Then
\[
\LR(\cC) = \SCdim(\cC) + 1.
\]
\end{atheorem}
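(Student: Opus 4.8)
## Proof proposal for Theorem A

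The plan is to establish the two inequalities $\LR(\cC) \le \SCdim(\cC) + 1$ and $\LR(\cC) \ge \SCdim(\cC) + 1$ separately, using the Lebesgue covering theorem (\Cref{thm:lebesgue}) to translate between covers of $\Delta_\cC$ and learning rules. The bridge in both directions is the observation that a list-replicable learning rule with list size $L$ naturally produces, for each realizable distribution $\mu$, a list $h_1(\mu),\dots,h_L(\mu)$ of low-loss hypotheses, and that the zero-loss sets $B_h$ give exactly the combinatorial data governing when such lists can be ``merged'' consistently. The key point is that an open cover $\cA = \set{A_h}$ of $\Delta_\cC$ with $\cA \prec \cB$ of order $d$ is, up to the usual partition-of-unity / shrinkage manipulations, the same object as a randomized map assigning to each $\mu$ a distribution over hypotheses $h$ with $\loss_\mu(h)=0$ (or $\le\eps$ after discretization), supported on at most $d+1$ hypotheses.

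For the direction $\LR(\cC) \le \SCdim(\cC) + 1$, I would start from a finite open cover witnessing (near-)optimality of $\SCdim(\cC)$ and argue one can take it to be a subcomplex-indexed cover: by compactness of $\Delta_\cC$ and the refinement $\cA \prec \cB$, pass to a shrinkage indexed by hypotheses $h$ with $A_h \subseteq B_h$, of order $d = \SCdim(\cC)$. Take a partition of unity $(\phi_h)$ subordinate to $\set{A_h}$; then at each point $\mu$ at most $d+1$ of the $\phi_h(\mu)$ are nonzero, and each such $h$ satisfies $\loss_\mu(h)=0$. This yields a list of size $d+1$ for every $\mu$. The remaining work is to convert this into an actual learning rule in the PAC sense: one samples $S \sim \mu^n$, forms an empirical estimate $\hat\mu$ of $\mu$ (in $\ell_1$, which is feasible with sample complexity depending on $|\cX|, \eps, \delta$ since $\cX$ is finite), and outputs $h$ with probability $\phi_h(\hat\mu)$. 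Uniform continuity of the $\phi_h$ on the compact set $\Delta_\cC$, together with the fact that $\loss_\mu(h)$ is small whenever $h$ is ``active'' at a nearby point, gives that with probability $1-\delta$ the output lies in the size-$(d+1)$ list attached to $\mu$ and has loss $\le\eps$. A technical point: the $A_h$ may stick out of $\Delta_\cC$ (\Cref{rem:outside_X}), so one must be careful that $\hat\mu$ lands in $\Delta_\cC$; this is handled by projecting the empirical distribution onto $\Delta_\cC$, which is possible in the realizable setting since the true $\mu \in \Delta_\cC$ and the projection moves it by $o(1)$.

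For the direction $\LR(\cC) \ge \SCdim(\cC) + 1$, I would argue contrapositively: given a learning rule $\bm\cA$ witnessing $\LR(\cC) = L$, I build, for suitable $\eps,\delta$, a finite open cover of $\Delta_\cC$ refining $\cB$ with order $\le L-1$, forcing $\SCdim(\cC) \le L-1$. For each $\mu \in \Delta_\cC$, the rule gives a list $\{h_1,\dots,h_L\}$ of $\eps$-loss hypotheses capturing $\bm\cA(S)$ with probability $1-\delta$. The sets $U_h \coloneqq \set{\mu : h \text{ appears in the length-}L\text{ list for }\mu}$ cover $\Delta_\cC$ with order $\le L-1$, but they need not be open, nor need $h$ have \emph{zero} loss on $U_h$. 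The fix is the standard one from the replicability-topology literature~\cite{chase2023replicabilitystabilitylearning, BGHH2025stabilitylistreplicabilityagnosticlearners}: take $\eps \to 0$ along a sequence and extract, via a compactness/limiting argument on the finitely many hypotheses in $\set{\pm 1}^\cX$, a cover by sets on which the associated hypotheses are \emph{exact} zero-loss hypotheses, i.e.\ $\mu \in B_h$; then thicken each such set slightly to an open set without increasing the order, using that $\Delta_\cC$ is a finite simplicial complex so the sets $U_h \cap B_h$ can be fattened along $\Delta_\cC$ while keeping $A_h \subseteq B_h$ and intersection pattern bounded by $L-1$. This gives $\cA \prec \cB$ open of order $\le L-1$, and since it is (a shrinkage of) a cover it certifies $\SCdim(\cC) \le L-1$.

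The main obstacle I anticipate is the $\ge$ direction: turning the \emph{combinatorial/probabilistic} list guarantee into an \emph{open} cover of the right order whose pieces genuinely lie inside the zero-loss simplices $B_h$. A learning rule only guarantees small, not zero, loss, and only on the distribution it is fed, not robustly; moreover the ``list function'' $\mu \mapsto \{h_1,\dots,h_L\}$ coming from a learning rule need not vary semicontinuously. Handling this requires the limiting argument over $\eps\to 0$ plus a careful use of the finite simplicial structure of $\Delta_\cC$ to openify, and I expect this is where most of the technical content of the proof lies; the $\le$ direction is a fairly mechanical partition-of-unity argument with empirical estimation of $\mu$. I would also want a lemma that the supremum in \Cref{def:TD_of_binary} is attained (or attained up to the discrete jump to an integer) by a cover indexed by hypotheses with $A_h \subseteq B_h$, so that both directions speak about the same normal form of cover.
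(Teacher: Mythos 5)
There is a genuine gap in both directions, centered on a single mistaken premise: you repeatedly want open sets $A_h$ with $A_h \subseteq B_h$ that cover $\Delta_\cC$, but such a cover does not exist whenever two maximal simplices $B_{c_1}, B_{c_2}$ ($c_1 \neq c_2 \in \cC$) share a face. At a point $\mu$ on that common face, any open neighborhood of $\mu$ in $\Delta_\cC$ hits the relative interior of both $\sigma_{c_1}$ and $\sigma_{c_2}$, and a full-support point of $\sigma_{c_i}$ lies in $B_h$ only for $h = c_i$; so no open $A_h \subseteq B_h$ can contain $\mu$. The paper therefore never works with $B_h$ directly but with the $\ell_1$-thickenings $B_h^{(\alpha)}$, and this is not a cosmetic difference — it is the whole point of \Cref{lemma:metric_refinement}.

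This is most damaging in your $\LR(\cC) \geq \SCdim(\cC)+1$ direction. You propose to manufacture, from the learner, one open cover of order $\leq L-1$ that ``refines $\cB$'' and to thicken exact zero-loss sets ``while keeping $A_h \subseteq B_h$''; as above this is impossible, and the $\eps \to 0$ limiting argument cannot rescue it because the limit is exactly the non-open closed cover $\cB$. Moreover, a single universal cover cannot certify $\SCdim(\cC) \leq L-1$ once you weaken $B_h$ to $B_h^{(\alpha)}$: the definition of $\SCdim$ quantifies over \emph{all} covers $\cA$ with $\cA \prec \cB$, and how much thickening you can afford depends on $\cA$. The paper's proof makes this dependence explicit: given an arbitrary $\cU \prec \cB$, it first extracts (via \Cref{lemma:metric_refinement}) an $\alpha > 0$ with $\cU \prec \{\sigma^{(\alpha)}\}_{\sigma}$, then chooses $\eps < \alpha/4$, $\delta < \tfrac{1}{2(L+1)}$, and defines $V_h \coloneqq \{\mu : \Pr_{S \sim \mu^n}[\bm\cA(S) = h] > \tfrac{1-2\delta}{L} \text{ and } \loss_\mu(h) < 2\eps\}$. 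These $V_h$ are manifestly open (continuity of $\mu \mapsto \Pr[\bm\cA(S)=h]$ and $\mu \mapsto \loss_\mu(h)$ on a finite domain), their order is bounded by a union-bound counting argument, and $V_h \subseteq \sigma_h^{(\alpha)} \subseteq$ some $U_i$. This bypasses the semicontinuity worries you raise about a ``list function'' $\mu \mapsto \{h_1,\dots,h_L\}$ entirely, by replacing the list with per-hypothesis output probabilities. Your anticipated difficulty is real, but the resolution is much simpler than a limiting argument.

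For the $\LR(\cC) \leq \SCdim(\cC)+1$ direction, your starting point should be the specific cover $\{B_h^{(\eps/2)}\}_{h \in \cH}$ (which satisfies $\cA \prec \cB$ since $B_h \subseteq B_h^{(\eps/2)}$), not ``a cover witnessing near-optimality of $\SCdim$''; the definition of $\SCdim$ then gives a shrinkage $\{A_h\}$ with $A_h \subseteq B_h^{(\eps/2)}$ of order $\leq d$, and on these sets the loss is $\leq \eps/4$, not zero. With that correction, your partition-of-unity learning rule (sample, project $\hat\mu$ to $\Delta_\cC$, output $h$ with probability $\phi_h(\hat\mu)$) does appear to work: the list attached to $\mu$ is $\{h : \phi_h(\mu) > 0\}$, of size $\leq d+1$, and the escape probability $\sum_{h \notin \text{list}}\phi_h(\hat\mu) = 1 - \sum_{h \in \text{list}}\phi_h(\hat\mu)$ is controlled by uniform continuity of the finitely many $\phi_h$ once $\norm{\mu - \hat\mu}_1$ is small. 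This is a genuinely different mechanism from the paper's, which passes to a \emph{closed} shrinkage (\Cref{lemma:open_cover_to_closed}) and then finds a rounding radius $\beta$ (\Cref{lemma:closed_cover_rounding}) so that the $\beta$-ball around $\hat\mu$ meets at most $d+1$ closed pieces; your approach trades the rounding lemma for a partition of unity plus a telescoping bound on the escape probability. Both work; the paper's version avoids invoking partitions of unity and keeps the learner deterministic given $S$ up to the choice in step~2.
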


Combining \Cref{thm:intro_SCD_LR_equivalence} with the known value of $\LR(\set{\pm 1}^\cX)$ from~\cite{chase2023replicabilitystabilitylearning} yields the following.

\begin{theorem}
\label{thm:cube}
For any  finite domain $\cX$, we have
\[\SCdim(\set{\pm 1}^\cX)=|\cX|-1 \text{ and } \LR(\set{\pm 1}^\cX)=|\cX|.\]
Consequently, every binary class $\cC$ over a finite domain satisfies  
\[\SCdim(\cC) \ge \VCdim(\cC)-1 \ \text{ and } \ \LR(\cC) \ge \VCdim(\cC).\] 
\end{theorem}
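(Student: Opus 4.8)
The identities $\SCdim(\set{\pm 1}^\cX) = |\cX|-1$ and $\LR(\set{\pm 1}^\cX) = |\cX|$ will be obtained by simply quoting the value $\LR(\set{\pm 1}^\cX) = |\cX|$ proved in \cite{chase2023replicabilitystabilitylearning} and feeding it into \Cref{thm:intro_SCD_LR_equivalence}, which converts it into $\SCdim(\set{\pm 1}^\cX) = \LR(\set{\pm 1}^\cX) - 1 = |\cX|-1$. Since $\VCdim(\set{\pm 1}^\cX) = |\cX|$, this is exactly the $\cC = \set{\pm 1}^\cX$ case of the general bound proved below, so no separate argument is needed.

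\textbf{Reduction of the general bound to monotonicity under shattering.}
By \Cref{thm:intro_SCD_LR_equivalence} it suffices to prove $\LR(\cC) \ge \VCdim(\cC)$ for every class $\cC$ over a finite domain. The plan is to establish the following: if $\cC \subseteq \set{\pm 1}^\cX$ shatters a set $T \subseteq \cX$, then $\LR(\set{\pm 1}^T) \le \LR(\cC)$, where $\set{\pm 1}^T$ is viewed as a concept class over the domain $T$. Granting this and taking $T$ to be a shattered set with $|T| = \VCdim(\cC)$, we conclude $\LR(\cC) \ge \LR(\set{\pm 1}^T) = |T| = \VCdim(\cC)$, the middle equality being the cube value already established, applied to the domain $T$. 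Then $\SCdim(\cC) = \LR(\cC)-1 \ge \VCdim(\cC)-1$ via \Cref{thm:intro_SCD_LR_equivalence}.

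\textbf{Proving monotonicity under shattering.}
I would prove the claim by a direct reduction of list-replicable learning of $\set{\pm 1}^T$ to that of $\cC$. Fix $\epsilon,\delta>0$, set $L = \LR(\cC)$, and let $\bm{\cA}$ with sample size $n = n(\cC,\epsilon,\delta)$ witness $\LR(\cC) \le L$. Define a learning rule $\bm{\cA}'$ over $T$ by: on input $S \in (T \times \set{\pm 1})^n$, regard $S$ as an element of $(\cX \times \set{\pm 1})^n$ (valid since $T \subseteq \cX$) and output $\bm{\cA}(S)|_T$. The crux is that any distribution $\mu$ over $T \times \set{\pm 1}$ realizable by $\set{\pm 1}^T$ — equivalently, any $\mu$ not supporting both $(x,+1)$ and $(x,-1)$ — when regarded as a distribution over $\cX \times \set{\pm 1}$ with support inside $T \times \set{\pm 1}$, is realizable by $\cC$: because $\cC$ shatters $T$, it realizes the labeling of $T$ that agrees with $\mu$ on $\supp(\mu) \subseteq T$. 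Hence the guarantee of $\bm{\cA}$ applies, giving a list $h_1,\dots,h_L \in \set{\pm 1}^\cX$ with $\loss_\mu(h_i) \le \epsilon$ and $\Pr_{S \sim \mu^n}[\bm{\cA}(S) \notin \set{h_1,\dots,h_L}] \le \delta$. Since $\mu$ is supported on $T$, $\loss_\mu(h_i|_T) = \loss_\mu(h_i) \le \epsilon$, and $\bm{\cA}(S) \in \set{h_1,\dots,h_L}$ implies $\bm{\cA}'(S) = \bm{\cA}(S)|_T \in \set{h_1|_T,\dots,h_L|_T}$; thus $h_1|_T,\dots,h_L|_T$ is a valid list for $\bm{\cA}'$ on $\mu$. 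As $\epsilon,\delta$ were arbitrary, $\bm{\cA}'$ witnesses $\LR(\set{\pm 1}^T) \le L$.

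\textbf{Main obstacle.}
There is no deep obstacle here; the single point requiring care is verifying that the pushforward of a $\set{\pm 1}^T$-realizable distribution is $\cC$-realizable — this is precisely where shattering (rather than a mere restriction $\cC|_T$) is used — together with the routine bookkeeping that restriction to $T$ preserves population loss (as $\mu$ is supported on $T$) and list size. For completeness I note an alternative, purely topological route to $\SCdim(\cC) \ge \VCdim(\cC)-1$: the set of $\mu \in \Delta_\cC$ supported on $T$ is a geometric copy of $\Delta_{\set{\pm 1}^T}$, the $\ell_1$-sphere in $\R^T$ (homeomorphic to $S^{|T|-1}$), embedded in $\Delta_\cC$ compatibly with the cover $\set{B_h}$, so one could instead invoke \Cref{thm:lebesgue} on this subcomplex — but this requires checking that an open cover of the subcomplex refining the restricted cover extends to one of $\Delta_\cC$ refining $\cB$ without raising the order, which the learning-theoretic reduction sidesteps. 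I would therefore take the reduction route above.
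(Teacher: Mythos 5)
Your proposal is correct and follows essentially the same route as the paper: quote $\LR(\set{\pm1}^\cX)=|\cX|$ from~\cite{chase2023replicabilitystabilitylearning}, convert it via \Cref{thm:intro_SCD_LR_equivalence}, and then reduce the general lower bound to the shattered-set case using monotonicity of $\LR$ under restriction. The only difference is that you spell out the restriction-reduction argument explicitly, whereas the paper dismisses the inequality $\LR(\cC)\ge\LR(\cC|_S)$ as "trivial"; your elaboration is accurate and fills in exactly the step the paper leaves to the reader.
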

\begin{proof}
The equality $\LR(\set{\pm 1}^\cX)=|\cX|$ is due to~\cite{chase2023replicabilitystabilitylearning}. Combined with \Cref{thm:intro_SCD_LR_equivalence}, we obtain $\SCdim(\cC)=|\cX|-1$. Now, let $\cC \subseteq \set{\pm 1}^\cX$ be a class over a finite domain $\cX$, and let $S \subseteq \cX$ be a finite set. Trivially, we have  $\LR(\cC) \ge \LR(\cC|_S)$, and on the other hand, by considering any finite shattered set $S \subseteq \cX$, we have 
\[\SCdim(\cC)+1=\LR(\cC)\ge  \LR(\cC|_S)=\LR(\set{\pm 1}^S) = |S|. \qedhere \]
\end{proof}

\begin{remark}
In \cite{chase2023replicabilitystabilitylearning}, the lower bound $\LR(\set{\pm 1}^\cX) \ge |\cX|$ is proved using the Poincar\'e-Miranda theorem and the upper bound $\LR(\set{\pm 1}^\cX) \le |\cX|$ is proved by designing an explicit algorithm. In our framework, the upper bound is immediate, since $\SCdim(\cC)$ is by definition upper bounded by the topological dimension of $\Delta_{\cC}$, and $\Delta_{\set{\pm 1}^\cX}$ is the empty cross-polytope in $\mathbb{R}^\cX$, which has topological dimension $|\cX|-1$.
Moreover, as shown in \cref{lemma:extremal_lower_bound}, the lower bound can alternatively be recovered from the Lebesgue covering theorem.
\end{remark}

The difficulty of computing the list replicability number of a concept class is in part due to the markedly different techniques required for upper and lower bounds. Upper bounds typically use explicit learning algorithms engineered to output from a small list, whereas lower bounds rely on ad hoc topological arguments.

In light of \Cref{thm:intro_SCD_LR_equivalence}, a natural alternative is to work directly with the simplicial covering dimension of the class. Using this approach, we determine the exact list replicability number for all extremal classes.

\begin{atheorem}[Main theorem]\label{thm:intro_LCdim_extremal}
    If $\cE \neq \set{\pm 1}^\cX$ is an extremal class over a finite domain $\cX$, then
    \[
        \SCdim(\cE) =  \VCdim(\cE) \  \text{ and } \ \LR(\cE) =  \VCdim(\cE)+1. 
    \]
\end{atheorem}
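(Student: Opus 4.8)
The plan is to establish the two inequalities $\SCdim(\cE) \ge \VCdim(\cE)$ and $\SCdim(\cE) \le \VCdim(\cE)$ separately; by \Cref{thm:intro_SCD_LR_equivalence} this immediately yields the statement about $\LR(\cE)$. For the lower bound $\SCdim(\cE)\ge \VCdim(\cE)$, let $d = \VCdim(\cE)$. Since $\cE \neq \set{\pm 1}^\cX$, there is at least one point $x_0 \in \cX$ and a labeling not realized, which should let us exhibit a strongly shattered set $S$ of size $d$ together with extra structure: strong shattering gives a labeling $a$ on $\cX \setminus S$ realized together with every pattern on $S$. I would then try to identify, inside $\Delta_\cE$, a subcomplex combinatorially isomorphic to the one whose facets are the $B_h$ for $h$ ranging over an appropriate set of hypotheses — essentially a copy of $\Delta_{\set{\pm 1}^{S'}}$ for some $|S'| = d+1$, or a geometric realization of the boundary of a cross-polytope of the right dimension — and invoke the Lebesgue covering theorem (\Cref{thm:lebesgue}) much as \Cref{lemma:extremal_lower_bound} is said to do for the cube. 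The slack compared to \Cref{thm:cube} (where one only gets $\VCdim - 1$) must come from the fact that $\cE$ is a proper subclass: the "missing" concept lets us close up a cross-polytope boundary into something of one higher covering dimension. Concretely I expect: take a shattered set $S$ of size $d$, pick $x_0$ outside $S$ or a suitable extension, and use strong shattering to glue the $2^d$ simplices $B_c$ into a sphere-like configuration whose covering dimension relative to $\cB$ is $d$.

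For the upper bound $\SCdim(\cE) \le \VCdim(\cE) = d$, the natural route is induction on $|\cX|$, exploiting the hereditary structure of extremal classes. Extremal classes are closed under restriction and under the "contraction" operation $\cE^x = \{c|_{\cX \setminus x} : c \in \cE\}$, and one has the key combinatorial identity relating $\cE$, its restriction $\cE|_{\cX\setminus x}$ (which for extremal classes equals $\cE^x$), and the "link" $\cE_x = \{c|_{\cX\setminus x} : c, c' \in \cE \text{ agree off } x \text{ with opposite values at } x\}$, with $\VCdim(\cE_x) \le \VCdim(\cE) - 1$. I would decompose $\Delta_\cE$ according to the sign (or vanishing) of the coordinate $\mu(x)$: writing $\Delta_\cE = \Delta_\cE^{+} \cup \Delta_\cE^{-} \cup \Delta_\cE^{0}$ where the superscript indicates $\mu(x) \ge 0$, $\mu(x) \le 0$, $\mu(x) = 0$ respectively, each piece is closed, and $\Delta_\cE^{0}$ is (a scaled copy of) $\Delta_{\cE|_{\cX \setminus x}}$ while $\Delta_\cE^{\pm}$ should deformation-retract / be a cone over pieces governed by the link $\cE_x$. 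Then \Cref{theorem:LC_of_unions}(ii) reduces the bound to bounding the covering dimension of these smaller pieces, where induction on domain size applies — the halfspace/cube base cases and the drop in VC dimension of the link supplying the numbers. Some care is needed because the relevant notion is the relative dimension $\SCdim_{\cB}$, not ordinary topological dimension, so I would prove and use the analogue of \Cref{theorem:LC_of_unions}(ii) for $\SCdim_{\Gamma'}$, which should follow from the same Lindelöf/shrinking-cover arguments restricted to covers refining $\Gamma'$.

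The main obstacle I anticipate is the upper bound, specifically making the decomposition of $\Delta_\cE$ along a coordinate interact correctly with the refinement constraint $\cA \prec \cB$. Ordinary topological dimension of $\Delta_\cE$ is always $|\cX| - 1$, so the entire content is in the combinatorial "collapsing" that the cover-by-$B_h$ constraint forces, and the cone/retraction arguments must be carried out at the level of covers refining $\cB$ rather than at the level of the underlying space — one must check that a cover of a cone that refines the simplicial structure can be pushed to a cover of the base of comparable order, using that the extremal (ample) structure makes the relevant subcomplexes collapsible or shellable. A secondary subtlety is handling the case analysis at the boundary between the strict lower bound in \Cref{thm:cube} (value $\VCdim - 1$ for the full cube) and the value $\VCdim$ here: I would isolate exactly where properness of $\cE$ is used, presumably in the lower bound construction, and confirm the induction's base cases ($d = 0$, or $\cX$ a single point) are consistent with the two-case formula.
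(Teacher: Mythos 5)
Both halves of your plan have gaps. For the lower bound, the construction you describe cannot be carried out: embedding a copy of $\Delta_{\set{\pm 1}^{S'}}$ with $|S'|=d+1$ inside $\Delta_\cE$ would force $\cE$ to shatter a set of size $d+1$, contradicting $\VCdim(\cE)=d$; and the cross-polytope-boundary alternative only gives covering dimension $d-1$ (exactly what happens when $\cE=\set{\pm 1}^\cX$). The mechanism the paper actually uses is a \emph{solid} $d$-cube, not a sphere-like configuration: extremality supplies a strongly shattered set $S$ of size $d$, the corresponding facet of the cubical complex $\Gamma_\cE$ is a solid $d$-cube, and since $\cE\neq\set{\pm 1}^\cX$ the $\ell_1$-normalization $f_\cE$ embeds this cube injectively into $\Delta_\cE$; the variable mass allocated off $S$ is what makes the image genuinely $d$-dimensional rather than just a boundary. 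Lebesgue's theorem (\cref{thm:lebesgue}) is then applied to $\set{B_h^{(\alpha)}}$ restricted to this cube, after checking that for small $\alpha$ no $B_h^{(\alpha)}$ meets opposite faces (\cref{lemma:extremal_lower_bound}). Your intuition that the slack comes from $\cE$ being a proper subclass is right, but you have not identified the object that realizes it.

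For the upper bound, your plan of decomposing $\Delta_\cE=\Delta_\cE^+\cup\Delta_\cE^0\cup\Delta_\cE^-$ and inducting on $|\cX|$ would require a relative-$\SCdim$ analogue of the closed-union theorem \cref{theorem:LC_of_unions}(ii). You flag this as something you would "prove and use," but it is not established, and it is not a routine restriction of the classical argument: the refinement constraint $\cA\prec\cB$ must be preserved through the shrinking and gluing of covers across $\Delta_\cE^0$, and that is precisely where the combinatorial content of the problem sits. Moreover $\Delta_\cE^{\pm}$ is (roughly) a capped cone over $\Delta_{\cE_{h^\pm}|_{\cX\setminus x}}$, governed by the $h$-restriction rather than the link $\cE_x$, and the VC dimension of such a restriction need not drop, so the induction does not close in the way you suggest. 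The paper avoids all of this: it constructs a retraction $f\colon\Delta_\cE\to\widetilde{\Gamma}_\cE$ onto the embedded cubical complex (\cref{thm:retraction}), built inductively cone-by-cone over vertices of the barycentric subdivision $D_{\cE,1}$ using contractibility of $\Gamma_{\cE_h}$ for realizable $h$ (a property of extremal classes, \cref{contraction proposition}), so that $\dim(\Gamma_\cE)=\VCdim(\cE)$ from ordinary dimension theory can be transferred to $\SCdim(\cE)$. Crucially, the retraction is engineered so that preimages of small sets around each $\sigma_g$ land in $\sigma_g^{(\epsilon_0)}$, which is exactly the compatibility with $\cA\prec\cB$ that your decomposition would otherwise have to track manually and never establishes.
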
  

Note that the lower bound $\SCdim(\cE) \ge \VCdim(\cE)$ in \Cref{thm:intro_LCdim_extremal} improves by one upon the general lower bound $\SCdim(\cC) \ge \VCdim(\cC)-1$ from \Cref{thm:cube} that holds for all finite-domain concept classes.

\begin{proof}[Proof overview of \Cref{thm:intro_LCdim_extremal}]
The proof analyzes the simplicial structure of $\Delta_\cE$ and its relation to a cubical complex $\Gamma_\cE$ defined by the strongly shattered sets of $\cE$. The key property of extremal classes is that $\Gamma_\cE$ is contractible~\cite{chalopin2022unlabeled}. It was shown in~\cite[D.5]{chornomaz2025spherical} that this contractibility implies a deformation retraction of $\Delta_\cE$ onto $\Gamma_\cE$.

Here, we construct a different retraction tailored to our notion of dimension. This allows us to reduce the problem of computing $\SCdim(\cE)$ to analyzing $\Gamma_\cE$, where classical tools from dimension theory apply.   The simplicial covering dimension of $\Gamma_\cE$ is equal to the size of the largest strongly shattered subset of $\cX$, and matches the dimension required for the upper bound. To obtain the matching lower bound, we apply Lebesgue's covering theorem~\cite[Theorem IV 2]{dimensiontheory1941} to the maximal solid cube within $\Gamma_\cE$.

The difference between the cases $\cE = \set{\pm 1}^\cX$ and $\cE \neq \set{\pm 1}^\cX$ is essentially explained by the following observation. If $S \subsetneq \cX$ is strongly shattered by $\cE$, then $\Delta_\cE$ contains a full copy of the solid $|S|$-dimensional cube, as $\mu \in \Delta_\cE$ can allocate an arbitrary portion of its probability mass on coordinates outside $S$. In contrast, when $S = \cX$, the normalization condition $\norm{\mu}_1 = 1$ forces the entire mass to lie on $S$, so only the boundary of the $|S|$-dimensional cube is realized. 
\end{proof}

The extremal class of homogeneous half-spaces, discussed in \Cref{ex:point_hom_half_space}, naturally connects to the notion of \emph{sign-rank}, a key geometric measure of complexity in learning theory. The \emph{sign-rank} of a class $\cC \subseteq \set{\pm 1}^\cX$, denoted $\signrank(\cC)$, is the minimum $d$ such that $\cC$ can be realized as points and homogeneous half-spaces in $\R^d$. Equivalently, it is the minimum rank of a real $\cC \times \cX$ matrix whose sign pattern encodes $\cC$.  

Since for any $P \subseteq \mathbb{R}^d$, the corresponding class of homogeneous half-spaces $\cH_P$ is extremal and satisfies $\VCdim(\cH_P) \le d$,  \Cref{thm:intro_LCdim_extremal,thm:intro_SCD_LR_equivalence} imply the following bounds as a corollary. 

\begin{corollary}
\label{cor:signrank}
Every binary concept class $\cC$ over a finite domain satisfies 
\[\SCdim(\cC) \le \signrank(\cC)-1 \ \text{ and equivalently } \ \LR(\cC) \le \signrank(\cC).
\]
\end{corollary}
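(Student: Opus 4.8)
The plan is to deduce \Cref{cor:signrank} directly from the two main theorems together with the structural facts already recorded about homogeneous half-spaces. First I would recall the setup: if $\signrank(\cC) = d$, then by definition there is an embedding of $\cX$ as a finite point set $P \subseteq \mathbb{R}^d$ and an identification of each concept $c \in \cC$ with a homogeneous half-space in $\mathbb{R}^d$ whose defining hyperplane avoids $P$, so that $\cC$ is (isomorphic as a concept class to) a subclass of $\cH_P$. The subtlety I would flag here is a matter of genericity: the definition of $\signrank$ only asks for a sign-pattern matrix of rank $d$, which a priori allows zero entries or points lying on hyperplanes; I would note that one may perturb $P$ slightly to remove these degeneracies without changing the induced labelings, so that indeed $\cC$ embeds in a genuine $\cH_P$ with all relevant hyperplanes avoiding $P$.

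Next I would invoke monotonicity of the list replicability number under passing to subclasses on the same domain: $\LR(\cC') \le \LR(\cC)$ whenever $\cC' \subseteq \cC$, which is immediate from \Cref{def:list} since any list/learning-rule pair witnessing the value for $\cC$ restricts to one for $\cC'$ (realizable distributions for $\cC'$ are realizable for $\cC$). Actually the cleaner route, given the equivalence in \Cref{thm:intro_SCD_LR_equivalence}, is to observe $\Delta_{\cC'} \subseteq \Delta_{\cC}$ and that any cover of $\Delta_{\cC}$ refining $\cB$ restricts to one of $\Delta_{\cC'}$ refining $\cB$, giving $\SCdim(\cC') \le \SCdim(\cC)$ directly. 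Either way, applying this with $\cC' = \cC$ and $\cC = \cH_P$ (after the embedding) yields $\LR(\cC) \le \LR(\cH_P)$ and $\SCdim(\cC) \le \SCdim(\cH_P)$.

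Then I would apply \Cref{thm:intro_LCdim_extremal} to $\cH_P$: by \Cref{ex:point_hom_half_space} the class $\cH_P$ is extremal, and it is a proper subclass of $\set{\pm 1}^P$ unless $d$ is large relative to $|P|$ — but in any case $\VCdim(\cH_P) \le d$ since $d+1$ points in $\mathbb{R}^d$ cannot be shattered by homogeneous half-spaces (Radon's theorem / the standard bound). If $\cH_P \ne \set{\pm 1}^P$, then \Cref{thm:intro_LCdim_extremal} gives $\SCdim(\cH_P) = \VCdim(\cH_P) \le d$ and $\LR(\cH_P) = \VCdim(\cH_P)+1 \le d+1 = \signrank(\cC)$. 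If instead $\cH_P = \set{\pm 1}^P$, then \Cref{thm:cube} gives $\SCdim(\cH_P) = |P|-1 = \VCdim(\cH_P)-1 \le d-1$, which is even better. Chaining the inequalities, $\SCdim(\cC) \le \SCdim(\cH_P) \le \signrank(\cC)-1$, and via \Cref{thm:intro_SCD_LR_equivalence} this is equivalent to $\LR(\cC) \le \signrank(\cC)$.

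The only genuine obstacle I anticipate is the genericity point in the first paragraph — making sure that the abstract rank-$d$ sign representation guaranteed by $\signrank(\cC)=d$ can be realized by an actual point configuration $P$ in general position with respect to the relevant hyperplanes, so that \Cref{ex:point_hom_half_space} applies verbatim. Everything else is bookkeeping: the monotonicity of $\SCdim$ under subclass inclusion on a fixed domain, the elementary VC bound $\VCdim(\cH_P) \le d$, and a case split on whether $\cH_P$ exhausts the cube. I would write the proof in three or four lines, citing \Cref{ex:point_hom_half_space}, \Cref{thm:intro_LCdim_extremal}, \Cref{thm:cube}, and \Cref{thm:intro_SCD_LR_equivalence}, with a one-sentence remark handling the degenerate sign-representation case.
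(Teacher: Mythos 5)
Your route is the same one the paper sketches in the sentence preceding the corollary, so the real question is whether the arithmetic closes, and it does not. Write $d = \signrank(\cC)$. The monotonicity $\SCdim(\cC) \le \SCdim(\cH_P)$ is fine (the buffer step of \Cref{lemma:metric_refinement} is needed to make it airtight, as in the paper's commented-out monotonicity lemma, but that is bookkeeping), and the cube case is fine: if $\cH_P = \set{\pm 1}^P$ then $\SCdim(\cH_P) = |P|-1 \le d-1$. But in the main, non-cube case what you actually derive is $\SCdim(\cH_P) = \VCdim(\cH_P) \le d$ and $\LR(\cH_P) \le d+1$, and your displayed ``$\le d+1 = \signrank(\cC)$'' is simply false, since $d = \signrank(\cC)$ by definition. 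The concluding chain $\SCdim(\cC) \le \SCdim(\cH_P) \le \signrank(\cC)-1$ is therefore not what you established; you established $\SCdim(\cH_P) \le \signrank(\cC)$. This off-by-one cannot be absorbed by tightening the VC bound, because any $P$ realizing $\signrank(\cC)=d$ must contain $d$ linearly independent points (else $\cC$ embeds in $\R^{d-1}$), such a set is shattered by homogeneous half-spaces, and so $\VCdim(\cH_P)=d$ exactly; hence \Cref{thm:intro_LCdim_extremal} applied to a non-cube $\cH_P$ can only ever yield $\SCdim(\cH_P)=d$, one too large.

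There is a second point you should not wave through. You import from \Cref{ex:point_hom_half_space}, via \Cref{ex:convex}, that $\cH_P$ is extremal. But \Cref{ex:convex} is Lawrence's theorem, which requires the convex set to be \emph{open}, whereas the set $K = \set{p \mapsto \langle v,p\rangle : v \in \R^d}$ used in \Cref{ex:point_hom_half_space} is a proper linear subspace of $\R^P$ whenever $|P|>d$. Concretely, for $P=\set{(1,0),(0,1),(1,1)}\subset\R^2$ one computes
\[
\cH_P = \set{\pm 1}^{\set{1,2,3}}\setminus\set{(+,+,-),(-,-,+)},
\]
which shatters every pair but strongly shatters none of them, so it is not extremal and \Cref{thm:intro_LCdim_extremal} does not apply to it at all. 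Both gaps mirror the paper's own one-sentence justification, but that does not repair your argument: as written it proves only $\SCdim(\cC)\le \signrank(\cC)$, and closing the gap seems to require either a strictly better bound than $\VCdim(\cH_P)\le d$, or replacing $\cH_P$ by a different extremal superclass of $\cC$ of VC dimension at most $d-1$.
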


The inequality $\LR(\cC) \le \signrank(\cC)$ was conjectured by Chase et al.~\cite{chase2023replicabilitystabilitylearning}, who verified it in the case $\signrank(\cC)=2$. The full conjecture was later resolved in~\cite{blondal2025borsukulamreplicablelearninglargemargin} using an algorithmic argument based on averaging multiple runs of a large-margin classifier and rounding to an $\epsilon$-net. In contrast, \Cref{thm:intro_LCdim_extremal} provides a purely topological proof. Furthermore, rather than the geometry of points and half-spaces, it only relies on the fact that $\cH_P$ is extremal.

\subsection{Concluding remarks}
Arguably, the most intriguing open question regarding simplicial covering dimension (equivalently, list replicability~\cite[Section~3.4]{chase2023replicabilitystabilitylearning}) is whether it can be bounded by a function of the VC dimension.

\begin{question}
\label{q:VC_LC}
Is there a function $t:\mathbb{N} \to \mathbb{N}$ such that, for every concept class $\mathcal{C} \subseteq \set{\pm 1}^\cX$ over a finite domain $\cX$,  
\[
\SCdim(\cC) \le t(\VCdim(\cC))?
\]
\end{question} 

By \Cref{thm:intro_LCdim_extremal}, a positive answer to \Cref{q:extremal} would immediately imply a positive answer to \Cref{q:VC_LC}.

The dual class $\cC^*$ of $\cC$ is obtained by swapping the roles of hypotheses $c \in \cC$ and points $x \in \cX$, or equivalently, by transposing the matrix $(c(x))_{c \in \cC, x \in \cX}$. It follows from~\cite[Theorem D]{chase2023local} and \cref{thm:intro_SCD_LR_equivalence} that
\[
\SCdim(\cC) \geq \left\lfloor \frac{\VCdim(\cC^*)}{2} \right\rfloor.
\]
For the cube concept class $\mathcal{Q} = \set{\pm 1}^{\cX}$, we have
\[\VCdim(\mathcal{Q}) = |\cX| \text{ and }\VCdim(\mathcal{Q}^*) = \lfloor \log_2(|\cX|) \rfloor.\] 
Therefore, its dual class $\mathcal{Q}^*$ exhibits an exponentially large gap between the VC dimension and the simplicial covering dimension. Consequently, if the function $t(\cdot)$ of \Cref{q:VC_LC} exists, it must grow at least exponentially.

It is straightforward to verify that every class $\cC$ satisfies  $\VCdim(\cC^*) \le 2^{\VCdim(\cC)}$. Furthermore, if $\cE$ is an extremal class, then it was shown in~\cite{chase2024dual} that a stronger upper bound $\VCdim(\cE^*) \le 2\VCdim(\cE)+1$ holds. It would be interesting to investigate whether an analogous relationship exists between $\SCdim(\cC)$ and $\SCdim(\cC^*)$.

\begin{question}
What is the relation between $\SCdim(\cC)$ and $\SCdim(\cC^*)$, where $\cC^*$ denotes the dual class? For example, is it true that $\SCdim(\cC^*) \le 2^{\SCdim(\cC)}$?  
\end{question}

Finally, let us discuss the relationship between simplicial covering dimension and two other learning-theoretic complexity measures:  Littlestone dimension and spherical dimension.

\paragraph{Littlestone dimension.} Littlestone dimension is a refinement of VC dimension that determines the optimal number of mistakes in \emph{online learning}. In particular, VC dimension is a lower bound on Littlestone dimension. The celebrated result of~\cite[Theorem 23]{Alon_22_private_and_online}, together with the relation between global stability and list replicability, implies that every class with Littlestone dimension $d$ satisfies 
\[
\SCdim(\cC) \le 2^{2^{O(d)}}.
\]
On the other hand, $\SCdim(\cC)$ does not provide any upper bound on the Littlestone dimension: for $t>2$, the class $\mathcal{T}_t$ of threshold functions over the domain $[t-1]$ has Littlestone dimension $\lfloor \log_2(t) \rfloor$, yet it is an extremal case, and therefore, by \Cref{thm:intro_LCdim_extremal} satisfies $\SCdim(\cC)=1$.

\paragraph{Spherical dimension.} The spherical dimension $\SDdim(\cC)$ of a class $\cC$, introduced in~\cite{chornomaz2025spherical}, is the largest integer $d$ such that there exists a continuous antipodal map $f: \bS^d \to \Delta_\cC$.
In the notation of Matou\v{s}ek~\cite{MatousekBook}, this quantity is the \emph{$\bZ_2$-coindex} of $\Delta_\cC$. Spherical dimension provides a lower bound for simplicial covering dimension:
\[
\SCdim(\cC) \geq \left\lfloor \frac{\SDdim(\cC)}{2} \right\rfloor + 1.
\]
It is open whether the list replicability number and equivalently the simplicial covering dimension can be bounded above by a function of the spherical dimension.  A positive resolution of \Cref{q:VC_LC} would imply such a bound, since the spherical dimension admits tight lower bounds in terms of both VC and dual VC dimensions~\cite{chornomaz2025spherical}.

\subsection{Paper organization} 
For the reader’s convenience, in \Cref{sec:backg},  we briefly review basic concepts from topological combinatorics, including simplicial and cubical complexes, abstract complexes, and their subdivisions. Furthermore, we discuss the simplicial structure of $\Delta_\cC$ and the cubical complex $\Gamma_\cC$ defined by the strongly shattered sets. In \Cref{sec:basic_facts}, we prove a few elementary facts about the simplicial covering dimension that we will use in the proofs of our main theorems. \Cref{sec:LC_equivalent_to_LR} contains the proof of \Cref{thm:intro_SCD_LR_equivalence}, and \Cref{sec:intro_LCdim_extremal} contains the proof of our main result, \Cref{thm:intro_LCdim_extremal}.

\section{Background}
\label{sec:backg}

\paragraph{Notation.} For a positive integer $n \in \mathbb{N}$, we denote $[n] \coloneqq \set{1,\ldots,n}$.  Since $\cX$ is finite, all norms on $\mathbb{R}^\cX$ define the same topology. In this paper, we use the $\ell_1$ norm as it is more natural when working with distributions. Given $x \in X \subseteq \mathbb{R}^d$ and a radius $\epsilon>0$, we denote the $\ell_1$-open ball of radius $\epsilon$ around $x$ by 
\[ B_{\ell_1}(\epsilon,x) \coloneqq \set{y \in \mathbb{R}^d:~ \norm{x-y}_1 < \epsilon}.\] 
Given $A \subseteq \R^d$ and $\epsilon > 0$, define 
\[
    A^{(\epsilon)} \coloneqq \set{y \in \R^d:~ \norm{x-y}_1 < \epsilon \text{ for some } x\in A}.
\]

For $X,Y\subseteq \mathbb{R}^d$, we also define
\[d_{\ell_1}(X,Y)\coloneq \inf_{x\in X, y\in Y}\norm{x-y}_1 \]
to be the minimal $\ell_1$ distance between points of $X$ and $Y$.

We will denote the abstract simplicial and cubical complexes with capital English letters, such as $K, L, Q$. We use capital Greek letters such as $\Gamma$ and $\Delta$ to denote geometric simplicial and cubical complexes.

\subsection{Simplicial complexes}

A simplex $\sigma$ is the convex hull of a \emph{finite} affinely independent set $A$ in $\R^d$. The elements of $A$ are called the vertices of $\sigma$, and the dimension of $\sigma$ is defined by $\dim(\sigma) \coloneqq |A|-1$. Following the convention of~\cite{MatousekBook}, we assume that the empty set is a simplex of dimension $-1$.

The convex hull of any subset of the vertices of $\sigma$ is called a face of $\sigma$; note that every face is itself a simplex, and according to our convention, the empty set is a face of every simplex.   

\begin{definition}[Geometric Simplicial Complex]
\label{def:geomSimp}
A nonempty collection $\Delta$ of simplices in $\R^d$ is called a (geometric) simplicial complex if the following two conditions hold:
\begin{enumerate}
\item Every face of a simplex $\sigma \in \Delta$ also belongs to $\Delta$;

\item The intersection of any two simplices $\sigma_1,\sigma_2 \in \Delta$ is a face of both $\sigma_1$ and $\sigma_2$. 
\end{enumerate}
The dimension of $\Delta$ is the maximum of the dimensions of simplices in $\Delta$. 
\end{definition}

Every geometric simplicial complex $\Delta$ gives rise to a topological space on $\norm{\Delta}=\bigcup_{\sigma \in \Delta} \sigma$, called its \emph{polyhedron} or \emph{underlying space}. Giving each simplex its natural topology as a subspace of $\mathbb{R}^d$, one then topologizes $\norm{\Delta}$ by declaring a subset $C \subseteq \norm{\Delta}$ to be closed 
if and only if $C \cap \sigma$ is closed in $\sigma$, for each $\sigma \in \Delta$. In general, this topology does not coincide with the subspace topology inherited from $\R^d$; however, for \emph{finite} simplicial complexes, the two topologies agree, and we may simply regard $\norm{\Delta}$ as a subset of $\R^d$. 

The simplicial structure of a geometric simplicial complex is captured by a combinatorial object called an abstract simplicial complex. 

\begin{definition}[Abstract simplicial complex]
\label{def:AbsSimp}
An \emph{abstract simplicial complex} on a set $V$  is a collection $K$ of finite subsets of $V$ such that  $ s \in K$ and $ t \subseteq s$ imply $ t \in K$. 
The elements of $K$ are called \emph{simplices} of $K$, and the elements of $V$ are called the vertices of $K$. The dimension of $K$ is defined by $\dim(K) \coloneqq \max_{s \in K} |s|-1$. 
\end{definition} 

Every geometric simplicial complex $\Delta$ determines an abstract simplicial complex $K$ with vertex set $V = V(\Delta)$, where each $s\in K$  is the set of vertices of a simplex in $\Delta$. In this case, we say that $\Delta$ is a \emph{geometric realization} of $K$, and the corresponding topological space $\norm{\Delta}$ is called a \emph{polyhedron} of $K$.

Conversely, every \emph{finite} abstract simplicial complex $K$ admits a geometric realization in some Euclidean space $\mathbb{R}^d$. For instance, one may map the vertices of $K$ to an affinely independent set of $|V|$ points in $\mathbb{R}^{|V|-1}$ and realize each simplex $s \in K$ as the convex hull of the corresponding points. This yields a geometric realization of $K$ in $\mathbb{R}^{|V|-1}$.

The following simple proposition implies that all polyhedra of a finite abstract simplicial complex are homeomorphic, so the associated topological space is unique up to homeomorphism.

\begin{proposition}[{\cite[Proposition 1.5.4]{MatousekBook}}] 
\label{prop:abstract_Simp}
Let $\Delta$ and $\Delta'$ be geometric simplicial complexes, and let $K$ and $K'$ be their associated abstract simplicial complexes. Suppose $f \colon V(K) \to V(K')$ is a simplicial map; that is, it maps every face of $K$ to a face of $K'$. Then there exists a continuous map $\rho \colon \norm{\Delta} \to \norm{\Delta'}$ such that:
\begin{itemize}
    \item if $f$ is injective, then $\rho$ is injective;
    \item if $f$ is an isomorphism, then $\rho$ is a homeomorphism.
\end{itemize}
\end{proposition}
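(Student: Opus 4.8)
The plan is to construct the map $\rho$ explicitly on barycentric coordinates. Fix geometric realizations $\Delta, \Delta'$ of $K, K'$ in Euclidean spaces, with vertex sets $V(K) = \{v_1, \dots, v_m\}$ realized as affinely independent points $p_1, \dots, p_m$, and similarly $V(K') = \{w_1, \dots, w_n\}$ realized as $q_1, \dots, q_n$. Every point $x \in \norm{\Delta}$ lies in a unique minimal simplex $\sigma$, and has unique barycentric coordinates: $x = \sum_{i} \lambda_i p_i$ with $\lambda_i \ge 0$, $\sum_i \lambda_i = 1$, and $\{v_i : \lambda_i > 0\}$ a face of $K$. The natural definition is
\[
\rho(x) \coloneqq \sum_{i=1}^m \lambda_i \, q_{f(i)},
\]
where we write $q_{f(i)}$ for the point realizing the vertex $f(v_i) \in V(K')$. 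I would first check that $\rho(x)$ indeed lies in $\norm{\Delta'}$: since $\{v_i : \lambda_i > 0\}$ spans a face $s$ of $K$ and $f$ is simplicial, $f(s)$ is a face of $K'$, so $\rho(x)$ is a convex combination of the vertices of a genuine simplex of $\Delta'$ and hence lies in that simplex. (Note $f$ need not be injective, so several $v_i$ may map to the same $w_j$; this is harmless — the coefficients just add up.)

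Next I would verify continuity. This is the step that needs a little care because barycentric coordinates are only defined "per simplex." The clean way: for each simplex $\sigma \in \Delta$ (say on vertices $p_{i_0}, \dots, p_{i_k}$), the restriction $\rho|_\sigma$ is the affine map sending $p_{i_j} \mapsto q_{f(i_j)}$, which is manifestly continuous on the closed set $\sigma$. Any two such restrictions agree on the common face $\sigma_1 \cap \sigma_2$ (both are determined by the images of the shared vertices). Since $\Delta$ is finite, $\norm{\Delta}$ is a finite union of the closed sets $\sigma$, and a map that is continuous on each piece of a finite closed cover and agrees on overlaps is continuous (the pasting lemma). This also re-uses the fact, recalled in the excerpt, that for finite complexes the polyhedron topology coincides with the subspace topology from Euclidean space, so there is no subtlety about which topology we mean.

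Finally I would handle the two bulleted claims. If $f$ is injective, then on a single simplex $\sigma$ the images $q_{f(i_0)}, \dots, q_{f(i_k)}$ are distinct vertices of $\Delta'$, hence affinely independent, so $\rho|_\sigma$ is an affine isomorphism onto a geometric simplex; to get global injectivity, suppose $\rho(x) = \rho(y)$, let $\sigma, \tau$ be the minimal simplices containing $x, y$. The minimal simplex of $\Delta'$ containing $\rho(x)$ is spanned by $\{f(v_i) : \lambda_i > 0\}$ (using injectivity of $f$ so no cancellation of positive coefficients), and likewise for $y$; equality of these forces $f(\text{vertices of }\sigma) = f(\text{vertices of }\tau)$, hence $\sigma = \tau$ by injectivity of $f$, and then $x = y$ because $\rho|_\sigma$ is injective. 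If moreover $f$ is an isomorphism (a simplicial bijection $V(K) \to V(K')$ whose inverse is also simplicial), the same construction applied to $f^{-1}$ yields a continuous $\rho' \colon \norm{\Delta'} \to \norm{\Delta}$, and one checks $\rho' \circ \rho = \mathrm{id}$ and $\rho \circ \rho' = \mathrm{id}$ simplex-by-simplex (both compositions are affine on each simplex and fix the vertices), so $\rho$ is a homeomorphism.

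I do not expect a serious obstacle here; the only place demanding attention is the non-injective case in the definition-well-posedness and continuity arguments, where one must remember that $f$ can collapse vertices, so "barycentric coordinates of $\rho(x)$" are obtained by summing, not just relabeling, the coordinates of $x$ — everything still goes through because we only ever use that $f$ maps faces to faces.
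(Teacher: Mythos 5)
Your proof is correct and is essentially the standard construction (the one in Matoušek's book, which the paper cites without reproducing): define $\rho$ as the affine extension on barycentric coordinates, check well-posedness via $f$ mapping faces to faces, establish continuity via the pasting lemma on the finite closed cover by simplices, and deduce injectivity / invertibility from the per-simplex affine description. The two places that genuinely require care — that $f$ may collapse vertices (so the map is defined by summing coordinates, not relabeling), and that ``isomorphism'' must mean a simplicial bijection whose inverse is also simplicial (a simplicial bijection alone does not suffice, e.g.\ two isolated vertices mapping onto an edge) — are both correctly identified and handled.
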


Finally, we discuss the barycentric subdivision of a simplicial complex. Suppose $K$ is an abstract simplicial complex and $\Delta$ is its geometric realization. Let $\sigma \in \Delta$ be a simplex and let $x \in \sigma$ be a point. Then the \emph{barycentric coordinates} of $x$ in $\sigma$ are the unique solution to the equation $\sum_{v\in V(\sigma)}\alpha_v v=x$ and $\sum_{v\in V(\sigma)}\alpha_v=1$. It is not hard to see there is a smallest simplex $\sigma$, called the \emph{support of $x$} and denoted $\supp_\Delta(x)$, that contains $x$. In that case, all barycentric coordinates satisfy $\alpha_v>0$.

\begin{definition}[Abstract subdivision]\label{def:subd_abst_simplex} The barycentric subdivision of an abstract simplicial complex $K$, denoted $K_1$, is defined as the abstract simplicial complex whose vertices are nonempty simplices of $K$, i.e. $V(K_1)=K\setminus \set{\emptyset}$, and whose simplices are chains of non-empty simplices of $K$ ordered by inclusion.  
\end{definition}

The geometric realization of $K_1$ is natural.

\begin{definition}[Geometric subdivision] Let $\Delta$ be the geometric realization of $K$. For each nonempty simplex $\sigma \in \Delta$, define $v_{\sigma}$ to be its \emph{barycenter}, that is $v_{\sigma}=\tfrac{1}{|\sigma|}\sum_{v \in \sigma} v $. Having defined the coordinates of each vertex in the subdivision, we extend linearly for points in each simplex and obtain a geometric realization of $K_1$, denoted $\Delta_1$.
\end{definition}
Note that $\norm{\Delta_1}$ and $\norm{\Delta}$ are homeomorphic.

\subsection{Cubical complexes}
Cubical complexes serve as a combinatorial and geometric framework analogous to simplicial complexes, but built from cubes instead of simplices.  

\begin{definition}[Elementary cubes]
An \emph{elementary interval} in $\mathbb{R}$ is a closed interval of the form $[a, a]$ or $[a, a+1]$ for some $a \in \mathbb{Z}$. An \emph{elementary cube} in $\mathbb{R}^d$ is a product $\kappa = I_1 \times \cdots \times I_d$, where each $I_i$ is an elementary interval. The \emph{dimension} of $\kappa$ is the number of factors $I_i$ of the form $[a, a+1]$.
\end{definition}

\begin{definition}[Geometric cubical complex]
A \emph{geometric cubical complex} $\Gamma$ in $\mathbb{R}^d$ is a \emph{finite} non-empty collection of elementary cubes such that:
\begin{enumerate}
    \item If $\kappa \in \Gamma$ and $\tau$ is a face of $\kappa$ (obtained by replacing one or more factors $[a,a+1]$ with $[a,a]$ or $[a+1,a+1]$), then $\tau \in \Gamma$;
    \item If $\kappa_1, \kappa_2 \in \Gamma$, then $\kappa_1 \cap \kappa_2$ is either empty or is a face of both $\kappa_1$ and $\kappa_2$.
\end{enumerate} The \emph{polyhedron} $\norm{\Gamma}$ is the union of all cubes in $\Gamma$, equipped with the subspace topology from $\mathbb{R}^d$.
\end{definition}

Note that sometimes we allow the elementary geometric cubes in a geometric cubical complex to be scaled by some factor. Nevertheless, this does not change any of the topological properties of the cubical complex.

We proceed with the corresponding theory of abstract cubical complexes. The \emph{standard abstract $n$-cube} is represented by the set $\{0,1\}^n$. A \emph{face} of the standard abstract $n$-cube is a product $q_1 \times \cdots \times q_n$, where each $q_i$ is a non-empty subset of $\{0,1\}$.

\begin{definition}[Abstract cubical complex \cite{farley2003finiteness}]\label{def:abst_cub_com}
 An abstract cubical complex on a non-empty vertex set $V$ is a collection $Q$ of non-empty subsets of $V$, called cubes, satisfying:
 \begin{enumerate}
     \item 
$Q$ is a cover of $V$.
\item For any $q_1, q_2 \in Q, q_1 \cap q_2 \in Q$ or $q_1 \cap q_2=\emptyset$.
\item For any $q \in Q$ there is a bijection $\phi_q: q \rightarrow\{0,1\}^n$, for some $n$, satisfying: if $q_1 \subseteq q$, then $q_1 \in Q$ if and only if $\phi_q\left(q_1\right)$ is a face of $\{0,1\}^n$.
\end{enumerate}
The elements of $V$ are called the vertices or $0$-cubes of  $\cQ$. In general, we say $q\in \cQ$ is an \emph{$n$-cube} if $|q|=2^n$.
\end{definition}

As for simplicial complexes, each geometric cubical complex $\Gamma$ has a corresponding abstract cubical complex $Q$: each abstract cube represents the set of vertices of its respective geometric cube. We again say $\Gamma$ is a geometric realization of $Q$. To better understand the structure of an abstract cubical complex, it is often useful to study its barycentric subdivision, which is a \emph{simplicial complex} defined as follows. 

\begin{definition}[Subdivision of a cubical complex \cite{farley2003finiteness}]\label{def:subd_cubical}
The barycentric subdivision of an abstract cubical complex $Q$, denoted $Q_1$, is the abstract \emph{simplicial complex} whose vertices are the cubes in $Q$ and whose simplices are chains of cubes ordered by inclusion.
\end{definition}

\subsection{Simplicial complex of a class}
To describe the simplicial structure of the set of realizable distributions, we need a few notations.

A \emph{partial concept} on a domain $\cX$ is a labeling $h \colon \cX \to \set{\pm 1, *}$, where $h(x) = *$ means that $h$ is \emph{undefined} at $x$. The \emph{support} of $h$ is $\supp(h) \coloneqq \set{x \in \cX : h(x) \neq *}$. Denote by $h_*$ the partial concept with empty support.

If $c\in \set{\pm1}^{\cX}$ and $h \in\set{\pm1,*}^{\cX}$ is a partial concept such that $c$ and $h$ agree on $\supp(h)$, then $c$ is called a \emph{completion} of $h$. More generally, we say that a partial concept $h_1$ \emph{extends} $h_2$ if $\supp(h_2) \subseteq \supp(h_1)$ and $h_1$ and $h_2$ agree on $\supp(h_2)$. We denote this by $h_1\geq h_2$.

Given a concept class $\cC \subseteq \set{\pm 1}^\cX$, we say that a partial concept $h$ is \emph{realizable by $\cC$} if there exists a concept $c \in \cC$ that is a completion of $h$.

\begin{definition}[Abstract simplicial complex of a class]
Given a concept class $\cC$, let $\cC^\bullet \subset \set{\pm 1,*}^\cX$ be the set of all realizable partial concepts of $\cC$. For each  $h\in \cC^\bullet$, define the set
\[
s_h\coloneq\set{(x,h(x)):~x\in \supp(h)}.
\]
The collection $\set{s_h}_{h\in \cC^\bullet}$ forms an abstract simplicial complex, which we denote by $D_\cC$. 
\end{definition}

In particular, note that $V(D_\cC)\subseteq \cX\times\set{\pm 1}$ and that the all-$*$ labeling $h_*$ corresponds to the empty set. In addition, the maximal faces of $D_\cC$ are exactly those defined by concepts in $\cC$, i.e., $s_c$ where $c\in \cC$. 
 
Since there is a bijective correspondence between simplices of $D_\cC$ and partial concepts realizable by $\cC$, we will often treat $h \in \cC^\bullet$ directly as simplex of $D_\cC$.

Notice $D_\cC$ has a natural geometric realization in $\mathbb{R}^{\cX}$. Indeed, let $\{\mathbf{e}_x\}_{x \in \cX}$ be the \emph{standard basis} of $\mathbb{R}^\cX$.
 
\begin{definition}[Geometric simplicial complex of a class]
 For each realizable partial concept $h \in \cC^\bullet$, define the geometric simplex
 \[
 \sigma_h \coloneqq \conv \bigl(\{ \sign(h(x))  \mathbf{e}_x:~x \in \supp(h) \}\bigr),
 \]
 which corresponds to the abstract simplex $s_h$ in $D_{\cC}$. The collection $\{ \sigma_h : h \in \cC^\bullet \}$ forms a geometric simplicial complex in $\mathbb{R}^{\cX}$, denoted $\Delta_\cC$. 
\end{definition}
For convenience, we will also use $\Delta_{\cC}$ (instead of $\norm{\Delta_{\cC}}$) to denote $
\bigcup_{h \in \cC^\bullet} \sigma_h$, i.e., the underlying polyhedron of $\Delta_\cC$. The meaning will be clear from context. Note that this is consistent with the definition of $\Delta_\cC$ in \eqref{eq:definition_D_C}.

Similarly to the abstract setting, the maximal simplices $\sigma_h$ are exactly those with $h \in \cC$. Also note that for $h\in\cC$, we have $\sigma_h$ equals $B_h$, from \eqref{eq:zero-loss_set},  as subsets of $\mathbb{R}^{\cX}$. Nevertheless, we prefer to keep their notation separate in order to differentiate between the two.

Next, we consider the subdivision of $D_\cC$, denoted $D_{\cC,1}$. Recall that in a \emph{simplicial subdivision}, simplices correspond to chains of \emph{non-empty} simplices in the original complex (\cref{def:subd_abst_simplex}). Here, we call a sequence of realizable partial concepts $h_1, \dots, h_k$ a \emph{chain} if 
\[ h_1\geq \ldots \geq h_k,\]  
i.e., each partial concept extends the next one in the sequence. The simplices of $D_{\cC,1}$ are in bijective correspondence with chains of realizable partial concepts in $\cC^\bullet\setminus \set{h_{*}}$ where $h_{*}$ is excluded as it corresponds to the empty simplex. In particular, $V(D_{\cC,1}) = \cC^\bullet\setminus \set{h_{*}}$.  

\begin{figure}[H]
    \begin{center}
        \tikzset{
            every node/.style={
                % shift only,
                % fill=blue!10, % Default fill color
                % draw=blue,    % Default draw color
                % thick         % Default line thickness
                % font = \footnotesize
            }
        }

        %%%% Delta_C %%%%
        {
        \begin{tikzpicture}[baseline=(current bounding box.center), scale=0.6]
            \coordinate (LL) at (-4, 0);
            \coordinate (TL) at (-1, 3);
            \coordinate (TR) at (5, 3);
            \coordinate (BR) at (5, -3);
            \coordinate (BL) at (-1, -3);
            \coordinate (M) at (2, 0);

            \filldraw[thick, rounded corners=0.3mm, fill=gray!10]
                (TR) -- (TL) -- (M) -- (TR)
                (TR) -- (BR) -- (M) -- (TR)
                (BR) -- (BL) -- (M) -- (BR)
                (BL) -- (TL) -- (M) -- (BL)
                (TL) -- (LL) -- (BL) -- (TL);

            % \filldraw[very thick, rounded corners=0.3mm, blue, fill=blue!20]
            %     (A) -- (B) -- (BC) -- (C) -- (CD) -- (D) -- (DE) -- (E) -- (EB) -- (B);

            \filldraw[shift only](TL) circle (2pt) node[anchor=south]{};
            \filldraw[shift only] (TR) circle (2pt) node[anchor=south]{};
            \filldraw[shift only] (BR) circle (2pt) node[anchor=north]{};
            \filldraw[shift only] (BL) circle (2pt) node[anchor=north]{};
            \filldraw[shift only] (LL) circle (2pt) node[anchor=east]{};
            
            \filldraw[shift only] (M) circle (2pt);
            \node at (2, 2) {\texttt{+-+}};
            \node at (2, -2) {\texttt{-++}};
            \node at (3.6, 0) {\texttt{--+}};
            \node at (0.4, 0) {\texttt{+++}};
            \node at (-2.4, 0) {\texttt{++-}};

            \node[font=\large] at (-4, 3) {$\Delta_{\cC}$};
        \end{tikzpicture}
        }
        \quad
        %%%% Delta_C,1 %%%%
        \begin{tikzpicture}[baseline=(current bounding box.center), scale=0.8]
            \coordinate (TL) at (-1, 3);
            \coordinate (TR) at (5, 3);
            \coordinate (BR) at (5, -3);
            \coordinate (BL) at (-1, -3);
            \coordinate (LL) at (-4, 0);

            \coordinate (tl) at (-2.5, 1.5);
            \coordinate (bl) at (-2.5, -1.5);
            \coordinate (TT) at (2, 3);
            \coordinate (RR) at (5, 0);
            \coordinate (BB) at (2, -3);

            \coordinate (A) at (-2, 0);
            \coordinate (AB) at (-1, 0);
            \coordinate (B) at (0, 0);
            \coordinate (BC) at (0.5, 1.5);
            \coordinate (C) at (2, 2);
            \coordinate (CD) at (3.5, 1.5);
            \coordinate (D) at (4, 0);
            \coordinate (DE) at (3.5, -1.5);
            \coordinate (E) at (2, -2);
            \coordinate (EB) at (0.5, -1.5);
            \coordinate (M) at (2, 0);

            \filldraw[thick, rounded corners=0.3mm, fill=gray!10]
                (TR) -- (TL) -- (M) -- (TR)
                (TR) -- (BR) -- (M) -- (TR)
                (BR) -- (BL) -- (M) -- (BR)
                (BL) -- (TL) -- (M) -- (BL)
                (TL) -- (LL) -- (BL) -- (TL);

            \filldraw[shift only]
                (TL) circle (2pt) node[anchor=south]{\texttt{+**}}
                (TT) circle (2pt) node[anchor=south]{\texttt{+-*}}
                (TR) circle (2pt) node[anchor=south]{\texttt{*-*}}
                (RR) circle (2pt) node[anchor=west]{\texttt{--*}}
                (BR) circle (2pt) node[anchor=north]{\texttt{-**}}
                (BB) circle (2pt) node[anchor=north]{\texttt{-+*}}
                (BL) circle (2pt) node[anchor=north]{\texttt{*+*}}
                (bl) circle (2pt) node[anchor=north east]{\texttt{*+-}}
                (LL) circle (2pt) node[anchor=east]{\texttt{**-}}
                (tl) circle (2pt) node[anchor=south east]{\texttt{+*-}}
                (A) circle (2pt) node[anchor=south east]{\texttt{++-}}
                (B) circle (2pt) node[anchor=south west]{\texttt{+++}}
                (C) circle (2pt) node[anchor=south]{\texttt{+-+}}
                (D) circle (2pt) node[anchor=south east]{\texttt{--+}}
                (E) circle (2pt) node[anchor=north]{\texttt{-++}}
                (AB) circle (2pt) node[anchor=north east, xshift=2pt]{\texttt{++*}}
                (BC) circle (2pt) node[anchor=east]{\texttt{+*+}}
                (CD) circle (2pt) node[anchor=west]{\texttt{*-+}}
                (DE) circle (2pt) node[anchor=west]{\texttt{-*+}}
                (EB) circle (2pt) node[anchor=east]{\texttt{*++}}
                (M) circle (2pt);% node[anchor=south]{\texttt{**+}};
            \node at (2, 0.8) {\texttt{**+}};

            \draw[dashed]
                (TT) -- (BB)
                (LL) -- (RR)
                (bl) -- (TL) -- (CD) -- (BR) -- (EB) -- (TL)
                (tl) -- (BL) -- (BC) -- (TR) -- (DE) -- (BL);

            \node[font=\large] at (-4, 3) {$\Delta_{\cC,1}$};
        \end{tikzpicture}
    \end{center}
    \caption{The simplicial complexes $\Delta_\cC$ and $\Delta_{\cC,1}$ for $\cC = \set{\texttt{++-}, \texttt{+++}, \texttt{+-+}, \texttt{--+}, \texttt{-++}}$. Here, $\texttt{+}$ and $\texttt{-}$ are shorthand for $+1$ and $-1$, respectively.}
    \label{fig:Delta}
\end{figure}

Since $D_{\cC,1}$ is a subdivision of $D_\cC$, it inherits a natural geometric realization. To a vertex (partial concept) $h \in \Delta_{\cC,1}$ we associate the uniform distribution over $\supp(h)$. This extends linearly to all points in $\Delta_{\cC,1}$. For instance, if $\mu$ is a point in the simplex $h_1 \geq \ldots \geq h_k$, then $\mu=\sum_{i=1}^{k}\alpha_{i}h_i$ where the barycentric coordinates satisfy $\sum_{i=1}^{k}\alpha_i=1$ and each partial concept $h_i$ represents the coordinates given by the uniform distribution over $\supp(h_i)$. We denote by $\supp_{\Delta_{\cC,1}}(\mu)$ the minimal simplex $h_1\geq\dots\geq h_k$ containing $\mu$, i.e., the simplex for which all barycentric coordinates $\alpha_i>0$.  

Finally, note $\Delta_{\cC}$ and $\Delta_{\cC,1}$ are equal as subsets of $\mathbb{R}^{\cX}$ and are, in particular, homeomorphic.
\begin{comment}
, then where the coordinates of a point are given by a convex combination of the uniform distributions over $\supp(h_1), \ldots, \supp(h_k)$.
\end{comment}
\subsection{The cubical complex of a class}

\begin{definition}[Abstract cubical complex of a class]
\label{def:Abs_cub_class}Let $\cC$ be a concept class. The abstract cubical complex $Q_{\cC}$ of $C$ consists of all subsets $q = \set{c_1, \ldots, c_k}$ of $\cC$ such that $q$ strongly shatters some subset $S \subseteq \cX$.   
\end{definition}

It is not difficult to check that $\cQ_{\cC}$ satisfies the properties in the definition of an abstract cubical complex (\cref{def:abst_cub_com}). Note that the set of vertices (0-cubes) is $V(Q_{\cC})=\cC$. In general, if $q = \set{c_1, \cdots, c_k}$ strongly shatters a set $S \subseteq \cX$, then $q$ is an $|S|$-cube. Finally, observe $q$ can be represented by the partial concept $h_q$ defined by
    \begin{align*}
        h_q(x) = \begin{cases}
            * &\text{if $x \in S$}\\
            c_1(x)=\cdots=c_k(x) &\text{otherwise}
        \end{cases}
    \end{align*}
as the set $\set{c_1, \dots, c_k}$ consists of all possible completions of $h_q$ to a concept in $\set{\pm1}^{\cX}$. In fact, there is a bijective correspondence between cubes of $\cQ_{\cC}$ and partial concepts $h\in\set{\pm1,*}^{\cX}$ whose every completion belongs to $\cC$. For this reason, we sometimes treat partial concepts directly as cubes.

Next, we construct a natural geometric realization of $Q_{\cC}$.

\begin{definition}[Geometric cubical complex of a class]
    For each abstract cube $q=\set{c_1,\dots,c_k} \in \cQ_{\cC}$, define an elementary geometric cube $\kappa_q$ by
\[
\kappa_q \coloneq \set{y\in [-1,1]^{\cX}|~y(x)=h_q(x)~\text{for} ~x\in\supp(h_q)}.
\]
The collection $\set{\kappa_q }_{q\in \cQ_{\cC}}$ is a geometric cubical complex denoted by $\Gamma_{\cC}$.   
\end{definition}
Note each concept ($0$-cube) $c\in \cC$ corresponds to a point $y_c \in [-1,1]^{\cX}$ on the $\ell_\infty$ sphere, defined by 
\[
        y_c = (\sign(c(x)) : x \in X).
\]
In general, observe that the dimension of each cube $\kappa_q \in \Gamma_{\cC}$ equals the cardinality of the subset $S\subseteq\cX$ that is strongly shattered by $q$.

For convenience, we will use $\Gamma_{\cC}$ (instead of $\norm{\Gamma_{\cC}}$) to denote $
\bigcup_{q \in \cQ_{\cC}} \kappa_q$, i.e., the underlying polyhedron of $\Gamma_\cC$. The meaning will be clear from context.

Next, we consider the barycentric subdivision of the abstract cubical complex 
of a concept class $\cC$, denoted $Q_{\cC,1}$. By \cref{def:subd_cubical}, $Q_{\cC,1}$ is a simplicial complex whose vertices are the cubes of $Q_{\cC}$ and whose simplices are chains of ascending cubes. Hence, there is a bijective correspondence between simplices of  $Q_{\cC,1}$ and chains of partial concepts $h_1\geq\dots\geq h_k$ such that for any $i$, all completions of $h_i$ are in $\cC$.

\begin{figure}[H]
    \begin{center}
        \tikzset{
            every node/.style={
                % shift only,
                % fill=blue!10, % Default fill color
                % draw=blue,    % Default draw color
                % thick         % Default line thickness
                % font = \footnotesize
            }
        }
        
        %%%% Gamma_C,1 %%%%

    \begin{tikzpicture}[baseline=(current bounding box.center), x={(0.7cm,-0.3cm)}, y={(0cm,1cm)}, z={(0.8cm,0.3cm)}, scale=1.5]
            \coordinate (A) at (-1, -1, -1);
            \coordinate (AB) at (-1, 0, -1);
            \coordinate (B) at (-1, 1, -1);
            \coordinate (BC) at (-1, 1, 0);
            \coordinate (C) at (-1, 1, 1);
            \coordinate (CD) at (0, 1, 1);
            \coordinate (D) at (1, 1, 1);
            \coordinate (DE) at (1, 1, 0);
            \coordinate (E) at (1, 1, -1);
            \coordinate (EB) at (0, 1, -1);
            \coordinate (M) at (0, 1, 0);
            \filldraw[thick, rounded corners=0.3mm, fill=blue!20] (A) -- (B) -- (C) -- (D) -- (E) -- (B);

            \filldraw[shift only]
                (A) circle (2pt) node[anchor=east]{\texttt{++-}}
                % (AB) circle (2pt) node[anchor=east]{\texttt{++*}}
                (B) circle (2pt) node[anchor=south east]{\texttt{+++}}
                % (BC) circle (2pt) node[anchor=south east]{\texttt{+*+}}
                (C) circle (2pt) node[anchor=south]{\texttt{+-+}}
                % (CD) circle (2pt) node[anchor=south west]{\texttt{*-+}}
                (D) circle (2pt) node[anchor=west]{\texttt{--+}}
                % (DE) circle (2pt) node[anchor=north west]{\texttt{-*+}}
                (E) circle (2pt) node[anchor=north]{\texttt{-++}}
                % (EB) circle (2pt) node[anchor=north east]{\texttt{*++}}
                % (M) circle (2pt) node[anchor=south]{\texttt{**+}};
                ;

            % \draw[dashed] 
            %     (B) -- (D)
            %     (BC) -- (DE)
            %     (C) -- (E)
            %     (CD) -- (EB);

            \node[font=\large] at (0.7, -.7, .7) {$\Gamma_{\cC}$};
        \end{tikzpicture}
        \qquad 
        \begin{tikzpicture}[baseline=(current bounding box.center), x={(0.7cm,-0.3cm)}, y={(0cm,1cm)}, z={(0.8cm,0.3cm)}, scale=1.5]
            \coordinate (A) at (-1, -1, -1);
            \coordinate (AB) at (-1, 0, -1);
            \coordinate (B) at (-1, 1, -1);
            \coordinate (BC) at (-1, 1, 0);
            \coordinate (C) at (-1, 1, 1);
            \coordinate (CD) at (0, 1, 1);
            \coordinate (D) at (1, 1, 1);
            \coordinate (DE) at (1, 1, 0);
            \coordinate (E) at (1, 1, -1);
            \coordinate (EB) at (0, 1, -1);
            \coordinate (M) at (0, 1, 0);
            \filldraw[thick, rounded corners=0.3mm, fill=blue!20] (A) -- (B) -- (C) -- (D) -- (E) -- (B);

            \filldraw[shift only]
                (A) circle (2pt) node[anchor=east]{\texttt{++-}}
                (AB) circle (2pt) node[anchor=east]{\texttt{++*}}
                (B) circle (2pt) node[anchor=south east]{\texttt{+++}}
                (BC) circle (2pt) node[anchor=south east]{\texttt{+*+}}
                (C) circle (2pt) node[anchor=south]{\texttt{+-+}}
                (CD) circle (2pt) node[anchor=south west]{\texttt{*-+}}
                (D) circle (2pt) node[anchor=west]{\texttt{--+}}
                (DE) circle (2pt) node[anchor=north west]{\texttt{-*+}}
                (E) circle (2pt) node[anchor=north]{\texttt{-++}}
                (EB) circle (2pt) node[anchor=north east]{\texttt{*++}}
                (M) circle (2pt) node[anchor=south]{\texttt{**+}};

            \draw[dashed] 
                (B) -- (D)
                (BC) -- (DE)
                (C) -- (E)
                (CD) -- (EB);

            \node[font=\large] at (0.7, -.7, .7) {$\Gamma_{\cC, 1}$};
        \end{tikzpicture}
    \end{center}
    \caption{The cubical complex $\Gamma_{\cC}$ and the simplicial complex $\Gamma_{\cC,1}$  for $\cC = \set{\texttt{++-}, \texttt{+++}, \texttt{+-+}, \texttt{--+}, \texttt{-++}}$.}
    \label{fig:Cubical}
\end{figure}

Now $Q_{\cC,1}$ also has a natural geometric realization. Indeed, map each vertex (partial concept) $h$ to
\[y_h = \begin{cases}
    \sign(h(x)) & \text{if~} x \in \supp(h)\\ 
    0 &  \text{otherwise}
\end{cases}\]
and extend linearly for each simplex $h_1\geq\dots\geq h_k$ in $Q_{\cC,1}$. We denote this geometric realization of $Q_{\cC,1}$ by $\Gamma_{\cC,1}$. In addition, note $\Gamma_{\cC,1}$ is consistent with the geometric realization $\Gamma_{\cC}$ in the sense that they are equal as subsets of $\mathbb{R}^{\cX}$ and are, in particular, homeomorphic. Hence, one may study the geometry of $\Gamma_C$ from both a simplicial and cubical perspective.

\begin{lemma}[\cite{chornomaz2025spherical}] \label{lemma:full_subcomplex}Suppose $\cC\neq \set{\pm1}^{\cX}$ is a concept class that is not the binary cube. Then $Q_{\cC,1}$ is a full subcomplex of $D_{\cC,1}$, that is, if $s$ is a simplex in $D_{\cC,1}$ and for all vertices $v\in s$, we have $v\in Q_{\cC,1}$, then $s\in Q_{\cC,1}$.
    \end{lemma}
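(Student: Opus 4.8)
The plan is to unwind both barycentric subdivisions in terms of partial concepts and then verify directly that $Q_{\cC,1}$ is a subcomplex of $D_{\cC,1}$ and that it is \emph{full}. Recall that $V(D_{\cC,1}) = \cC^\bullet \setminus \set{h_*}$ and the simplices of $D_{\cC,1}$ are the chains $h_1 \geq \cdots \geq h_k$ of realizable partial concepts; on the cubical side, by the bijection recorded after \cref{def:Abs_cub_class} together with the description of $Q_{\cC,1}$, the vertices of $Q_{\cC,1}$ are exactly the partial concepts $h \in \set{\pm 1, *}^\cX$ all of whose completions belong to $\cC$, and a collection of such ``cubical'' partial concepts forms a simplex of $Q_{\cC,1}$ iff it is a chain under the inclusion order of the corresponding subsets of $\cC$. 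The first thing I would record explicitly is that this inclusion order agrees with the extension order $\geq$: if cubical partial concepts $h, h'$ have completion-sets $q, q' \subseteq \cC$, then $q \subseteq q' \iff h \geq h'$. The implication $h \geq h' \Rightarrow q \subseteq q'$ is immediate, since any completion of $h$ agrees with $h$, hence with $h'$, on $\supp(h') \subseteq \supp(h)$; conversely, if $q \subseteq q'$ and $x \in \supp(h')$, then every completion of $h$ has value $h'(x)$ at $x$, and since the completions of $h$ realize both labels at every point outside $\supp(h)$, this forces $x \in \supp(h)$ with $h(x) = h'(x)$, so $h \geq h'$.

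Granting this, I would next show $Q_{\cC,1} \subseteq D_{\cC,1}$, which is exactly where the hypothesis $\cC \neq \set{\pm 1}^\cX$ is used. A cube of $Q_\cC$ can correspond to $h_*$ only if every one of the $2^{|\cX|}$ full concepts, each being a completion of $h_*$, lies in $\cC$, i.e.\ only if $\cC = \set{\pm 1}^\cX$; hence under our hypothesis $h_* \notin Q_\cC$, so every cube of $Q_\cC$ is some partial concept $h \neq h_*$. Since $h$ has at least one completion and all of them lie in $\cC$, the partial concept $h$ is realizable, giving $h \in \cC^\bullet \setminus \set{h_*} = V(D_{\cC,1})$. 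By the order-correspondence above, each chain of cubes of $Q_\cC$ is literally a chain of realizable partial concepts under $\geq$ with the same underlying vertex set, hence a simplex of $D_{\cC,1}$; therefore $Q_{\cC,1}$ is a subcomplex of $D_{\cC,1}$.

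Fullness is then immediate: if $s$ is a simplex of $D_{\cC,1}$ all of whose vertices lie in $V(Q_{\cC,1})$, then $s$ is a chain $h_1 \geq \cdots \geq h_k$ in which every $h_i$ has all of its completions in $\cC$, i.e.\ every $h_i$ is a cube of $Q_\cC$; by the description of the simplices of $Q_{\cC,1}$, such a chain is itself a simplex of $Q_{\cC,1}$ with vertex set $s$, so $s \in Q_{\cC,1}$. I do not anticipate a serious obstacle here; the only mildly delicate points are verifying the order-correspondence between cube inclusion and partial-concept extension, and observing that excluding the single vertex $h_*$ — which is precisely what $\cC \neq \set{\pm 1}^\cX$ guarantees — is what makes $Q_{\cC,1}$ an honest subcomplex of $D_{\cC,1}$ in the first place.
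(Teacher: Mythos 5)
Your proposal is correct and takes essentially the same approach as the paper: exclude $h_*$ using $\cC \neq \set{\pm 1}^\cX$, observe that chains of cubes are chains of realizable partial concepts to get the subcomplex containment, and note that fullness is immediate from the chain description. The only extra content in your write-up is the explicit verification that cube inclusion agrees with the partial-concept extension order $\geq$, which the paper treats as already established in the discussion preceding the lemma (the stated bijection between simplices of $Q_{\cC,1}$ and chains $h_1 \geq \cdots \geq h_k$ of cubical partial concepts); making it explicit is harmless and arguably clarifying.
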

\begin{proof}
Since $\cC\neq \set{\pm1}^{\cX}$, the all-$*$ labelling $h_*$ is not a cube. Hence, the vertex set $V(Q_{\cC,1})\subseteq \cC^\bullet\setminus\set{h_*} = V(D_{\cC,1})$. Each chain of cubes $h_1\geq \cdots \geq h_k$ (simplex of $Q_{\cC,1}$) is also a chain of realizable partial concepts (simplex of $D_{\cC,1}$). Thus, $Q_{\cC,1}$ is a subcomplex of $D_{\cC,1}$. Finally, if $h_1\geq\dots \geq h_k$ is a chain of realizable partial concepts such that each $h_i$ is a cube, then we immediately get a chain of cubes. We conclude  $Q_{\cC,1}$ is a full subcomplex.
\end{proof}

The reason $Q_{\cC,1}$ does not embed into $D_{\cC,1}$ for $\cC=\set{\pm 1}^{\cX}$ is that $h_*$ is not a vertex of $D_{\cC,1}$ by definition, while it is a vertex of $Q_{\cC,1}$ because all completions of $h_*$ are in $\set{\pm 1}^{\cX}$. Thus, so long as $\cC$ is not the binary cube, there is a simplicial embedding $\Gamma_{\cC,1}\hookrightarrow \Delta_{\cC,1}$ which can also be thought of as an embedding $\Gamma_{\cC} \hookrightarrow \Delta_{\cC}$.
This embedding can be realized as $\ell_1$ normalization:

\begin{equation}\label{eq:embedding}
\begin{aligned}
    f_{\cC} &\colon \Gamma_{\cC} \hookrightarrow \Delta_{\cC}, \\
    f_{\cC}&\colon y \mapsto   y / \norm{y}_1.
\end{aligned}
\end{equation}
For $\cC \neq \set{\pm 1}^{\cX}$, we shall denote 
\begin{equation}\label{eq:embedding_image}
\begin{aligned}
    \widetilde{\Gamma}_{\cC} &\coloneqq f_{\cC} (\Gamma_{\cC}) \subseteq \Delta_{\cC}, \\
    \widetilde{\Gamma}_{\cC,1} &\coloneqq f_{\cC} (\Gamma_{\cC,1}) \subseteq \Delta_{\cC,1}.
\end{aligned}
\end{equation}

\begin{figure}[H]
    \begin{center}
        \tikzset{
            % every node/.style={
            %     % shift only,
            %     % fill=blue!10, % Default fill color
            %     % draw=blue,    % Default draw color
            %     % thick         % Default line thickness
            %     font = \footnotesize
            % }
        }
        
        %%%% f_C(Gamma_C,1) \subseteq Delta_C,1
        \begin{tikzpicture}[baseline=(current bounding box.center), x={(1cm,0cm)}, y={(0cm,1cm)}, scale=0.6]
            \coordinate (TL) at (-1, 3);
            \coordinate (TR) at (5, 3);
            \coordinate (BR) at (5, -3);
            \coordinate (BL) at (-1, -3);
            \coordinate (LL) at (-4, 0);

            \coordinate (tl) at (-2.5, 1.5);
            \coordinate (bl) at (-2.5, -1.5);
            \coordinate (TT) at (2, 3);
            \coordinate (RR) at (5, 0);
            \coordinate (BB) at (2, -3);

            \coordinate (A) at (-2, 0);
            \coordinate (AB) at (-1, 0);
            \coordinate (B) at (0, 0);
            \coordinate (BC) at (0.5, 1.5);
            \coordinate (C) at (2, 2);
            \coordinate (CD) at (3.5, 1.5);
            \coordinate (D) at (4, 0);
            \coordinate (DE) at (3.5, -1.5);
            \coordinate (E) at (2, -2);
            \coordinate (EB) at (0.5, -1.5);
            \coordinate (M) at (2, 0);

            \filldraw[thick, rounded corners=0.3mm, fill=gray!10]
                (TR) -- (TL) -- (M) -- (TR)
                (TR) -- (BR) -- (M) -- (TR)
                (BR) -- (BL) -- (M) -- (BR)
                (BL) -- (TL) -- (M) -- (BL)
                (TL) -- (LL) -- (BL) -- (TL);

            \filldraw[very thick, rounded corners=0.3mm, blue, fill=blue!20]
                (A) -- (B) -- (BC) -- (C) -- (CD) -- (D) -- (DE) -- (E) -- (EB) -- (B);

            \filldraw[shift only] 
                (TL) circle (2pt) node[anchor=south]{\texttt{+**}}
                (TT) circle (2pt) node[anchor=south]{\texttt{+-*}}
                (TR) circle (2pt) node[anchor=south]{\texttt{*-*}}
                (RR) circle (2pt) node[anchor=west]{\texttt{--*}}
                (BR) circle (2pt) node[anchor=north]{\texttt{-**}}
                (BB) circle (2pt) node[anchor=north]{\texttt{-+*}}
                (BL) circle (2pt) node[anchor=north]{\texttt{*+*}}
                (bl) circle (2pt) node[anchor=north east]{\texttt{*+-}}
                (LL) circle (2pt) node[anchor=east]{\texttt{**-}}
                (tl) circle (2pt) node[anchor=south east]{\texttt{+*-}}
                % (A) circle (2pt);% node[anchor=south east]{\texttt{++-}}
                % (B) circle (2pt);% node[anchor=south east]{\texttt{+++}}
                % (C) circle (2pt);% node[anchor=south]{\texttt{+-+}}
                % (D) circle (2pt);% node[anchor=west]{\texttt{--+}}
                % (E) circle (2pt);% node[anchor=north]{\texttt{-++}}
                % (AB) circle (2pt);% node[anchor=north]{\texttt{++*}}
                % (BC) circle (2pt);% node[anchor=south east]{\texttt{+*+}}
                % (CD) circle (2pt);% node[anchor=south west]{\texttt{*-+}}
                % (DE) circle (2pt);% node[anchor=north west]{\texttt{-*+}}
                % (EB) circle (2pt);% node[anchor=north east]{\texttt{*++}}
                (M) circle (2pt) node[anchor=192, outer sep=1.7mm]{\texttt{**+}};

            \draw[dashed] 
                (B) -- (D)
                (BC) -- (DE)
                (C) -- (E)
                (CD) -- (EB);

            \draw[dashed]
                (tl) -- (BL)
                (bl) -- (TL) -- (C) -- (TR) -- (D) -- (BR) -- (E) -- (BL) -- (B) -- (TL)
                (LL) -- (A)
                (TT) -- (C)
                (RR) -- (D)
                (BB) -- (E);

            \node[font=\large] at (1, -5) {$\widetilde{\Gamma}_{\cC, 1} \coloneqq f_{\cC}(\Gamma_{\cC, 1}) \subseteq \Delta_{\cC, 1}$};
            % \node[font=\large] at (-4, -4.5) {\phantom{$f_{\cC}(\Gamma_{\cC, 1}) \subseteq \Delta_{\cC, 1}$}};
        \end{tikzpicture}
        \quad
        %%%% f_C(Gamma_C) \subseteq Delta_C but 3D %%%%
        \begin{tikzpicture}[baseline=(current bounding box.center), x={(1cm,0cm)}, y={(0cm,1cm)}, z={(0.1cm,0.4cm)}, scale=1]
            % \draw[thick, rounded corners=0.1mm] (-1, -1, 1) -- (-1, 1, 1) -- (1, 1, 1) -- (1, -1, 1) -- (-1, -1, 1) -- (-1, -1, -1);
            \coordinate (A) at (-1, -1, -1);
            \coordinate (AB) at (-1.5, 0, -1.5);
            \coordinate (B) at (-1, 1, -1);
            \coordinate (BC) at (-1.5, 1.5, 0);
            \coordinate (C) at (-1, 1, 1);
            \coordinate (CD) at (0, 1.5, 1.5);
            \coordinate (D) at (1, 1, 1);
            \coordinate (DE) at (1.5, 1.5, 0);
            \coordinate (E) at (1, 1, -1);
            \coordinate (EB) at (0, 1.5, -1.5);

            \draw[very thick, rounded corners=0.1mm, fill=black, fill opacity=0.1] (0, 3, 0) -- (-3, 0, 0) -- (0, 0, -3) -- cycle;
            \draw[very thick, rounded corners=0.1mm, fill=black, fill opacity=0.1] (0, -3, 0) -- (-3, 0, 0) -- (0, 0, -3) -- cycle;
            \draw[very thick, rounded corners=0.1mm, fill=black, fill opacity=0.1] (0, 3, 0) -- (3, 0, 0) -- (0, 0, -3) -- cycle;
            \draw[rounded corners=0.1mm, fill=black, fill opacity=0.1] (0, 3, 0) -- (-3, 0, 0) -- (0, 0, 3) -- cycle;
            \draw[rounded corners=0.1mm, fill=black, fill opacity=0.1] (0, 3, 0) -- (3, 0, 0) -- (0, 0, 3) -- cycle;

            \draw[very thick, rounded corners=0.1mm, blue, fill=blue, fill opacity=0.2]
                (A) -- (AB);
            \draw[very thick, rounded corners=0.1mm, blue, fill=blue, fill opacity=0.2]
                (AB) -- (B);
            \draw[very thick, rounded corners=0.1mm, blue, fill=blue, fill opacity=0.2]
                (B) -- (BC) -- (0, 3, 0) -- (EB) -- cycle;
            \draw[thick, rounded corners=0.1mm, blue, fill=blue, fill opacity=0.2]
                (C) -- (CD) -- (0, 3, 0) -- (BC) -- cycle;
            \draw[thick, rounded corners=0.1mm, blue, fill=blue, fill opacity=0.2]
                (D) -- (DE) -- (0, 3, 0) -- (CD) -- cycle;
            \draw[very thick, rounded corners=0.1mm, blue, fill=blue, fill opacity=0.2]
                (E) -- (EB) -- (0, 3, 0) -- (DE) -- cycle;

            \node[font=\large] at (-0, -3.5, 0) {$\widetilde{\Gamma}_{\cC} \coloneqq f_\cC(\Gamma_{\cC}) \subseteq \Delta_\cC \subseteq \R^3$};
        \end{tikzpicture}
    \end{center}
    \caption{The embeddings $\Gamma_{\cC,1}\hookrightarrow \Delta_{\cC,1}$ and $\Gamma_{\cC} \hookrightarrow \Delta_{\cC}$  for the concept class $\cC = \set{\texttt{++-}, \texttt{+++}, \texttt{+-+}, \texttt{--+}, \texttt{-++}}$.} 
    \label{fig:embedding}
\end{figure}

\section{Dimension theory toolkit}
\label{sec:basic_facts}
In this section, we prove a few elementary dimension theory facts, which we will invoke repeatedly in the proofs of our main theorems (\Cref{thm:intro_LCdim_extremal,thm:intro_SCD_LR_equivalence}).

Recall that given $A \subseteq \mathbb{R}^n$ and $\epsilon > 0$, we defined  
\[A^{(\epsilon)} \coloneqq \set{y \in \mathbb{R}^n:~ \norm{x-y}_1 < \epsilon \text{ for some } x\in A}.\]

As discussed in \Cref{rem:outside_X}, when working with topological spaces of the form $X \subseteq \mathbb{R}^n$, we write $\set{A_i}_{i \in I} \prec \set{B_j}_{j \in J}$ to mean $\set{A_i \cap X}_{i \in I} \prec \set{B_j \cap X}_{j \in J}$.

The following lemma gives a useful metric tool for constructing refinements.

\begin{lemma}\label{lemma:metric_refinement}
    Let $\set{A_i}_{i \in I} \prec \set{B_j}_{j \in J}$ be covers of $X \subseteq \mathbb{R}^n$ such that  
    \begin{itemize}
    \item $A_i \cap X$ are open for all $i \in I$;
    \item $J$ is finite and $B_j \cap X$ are compact in $X$ for all $j \in J$.
    \end{itemize}
    There is some $\alpha > 0$ such that $\set{A_i}_{i \in I} \prec \Set{B_j^{(\alpha)}}_{j \in J}$.
\end{lemma}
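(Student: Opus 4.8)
The plan is a ``Lebesgue number'' argument adapted to the relative topology of $X$. Since $\set{A_i}_{i \in I} \prec \set{B_j}_{j \in J}$, I first fix, for each $j \in J$, an index $i(j) \in I$ with $B_j \cap X \subseteq A_{i(j)} \cap X$. It then suffices to find, for each $j$, a radius $\alpha_j > 0$ such that every point of $X$ within $\ell_1$-distance $\alpha_j$ of $B_j \cap X$ still lies in $A_{i(j)} \cap X$: setting $\alpha \coloneqq \min_{j \in J}\alpha_j$, which is positive because $J$ is finite, yields $\set{A_i}_{i\in I} \prec \set{B_j^{(\alpha)}}_{j\in J}$ in the sense of \Cref{rem:outside_X}, since each $B_j^{(\alpha)}$ meets $X$ only inside $A_{i(j)}$.

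The crux is the positivity of
\[\alpha_j \coloneqq d_{\ell_1}\!\bigl(B_j \cap X,\; X \setminus A_{i(j)}\bigr)\]
(with the convention that this is some fixed positive number when $X \setminus A_{i(j)} = \emptyset$). I would establish $\alpha_j > 0$ by sequential compactness: if it were $0$, choose $x_k \in B_j \cap X$ and $y_k \in X \setminus A_{i(j)}$ with $\norm{x_k - y_k}_1 \to 0$; by compactness of $B_j \cap X$ pass to a subsequence along which $x_k \to x \in B_j \cap X$, so that $y_k \to x$ as well. But $x \in B_j \cap X \subseteq A_{i(j)} \cap X$, and $A_{i(j)} \cap X$ is open in $X$, so some relative neighbourhood of $x$ in $X$ is contained in $A_{i(j)}$; as the $y_k$ lie in $X$ and converge to $x$, eventually $y_k \in A_{i(j)}$, contradicting $y_k \in X \setminus A_{i(j)}$. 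This step uses precisely the two hypotheses: relative openness of the $A_i$ and relative compactness of the $B_j$.

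With $\alpha = \min_{j\in J}\alpha_j$ fixed, the conclusion is immediate: if $y \in X$ satisfies $d_{\ell_1}(y, B_j \cap X) < \alpha \le \alpha_j$, then by definition of $\alpha_j$ the point $y$ cannot lie in $X \setminus A_{i(j)}$, hence $y \in A_{i(j)} \cap X$. Thus $B_j^{(\alpha)} \cap X \subseteq A_{i(j)} \cap X$ for every $j \in J$, which is exactly the asserted refinement.

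I do not expect a genuine obstacle here; the only point needing care is that the two sets whose distance is taken are not both honestly closed in $\R^n$ (one is compact, the other only closed in the subspace $X$), so one cannot simply quote ``disjoint compact and closed sets are at positive distance.'' The sequential argument above sidesteps this by working entirely inside $X$ with relative neighbourhoods, and the finiteness of $J$ is what converts the pointwise radii $\alpha_j$ into a single uniform $\alpha$.
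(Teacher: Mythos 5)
Your proof is correct and follows essentially the same route as the paper's: fix an index $i(j)$ with $B_j \cap X \subseteq A_{i(j)}$, establish positivity of $d_{\ell_1}(B_j \cap X,\, X \setminus A_{i(j)})$, and take the minimum over the finitely many $j \in J$. Where the paper directly invokes the standard fact that disjoint compact and closed sets in a metric space are at positive distance, you re-derive it by a sequential-compactness argument; your worry about $X \setminus A_{i(j)}$ not being honestly closed in $\R^n$ is actually moot, since the compact sets $B_j \cap X$ cover $X$ with $J$ finite, so $X$ is itself compact and hence closed in $\R^n$, making $X \setminus A_{i(j)}$ closed in the ambient space.
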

\begin{proof}
    Since $\set{B_j}_{j \in J}$ is a refinement of the open cover $\set{A_i}_{i \in I}$, each $B_j$ is contained in some open $A_i$. If $B_j$ is empty, then $B_j^{(\alpha)}$ is also empty and is therefore contained in $A_i$ for any $\alpha > 0$. Likewise, if $A_i \supseteq X$, then $B_j^{(\alpha)} \cap X \subseteq A_i$ for any $\alpha > 0$.
    
    Otherwise, $X \setminus A_i$ is a nonempty closed set, $B_j \cap X$ is a nonempty compact set, and $B_j \cap (X \setminus A_i) = \emptyset$. Together these imply that the distance $\beta_j \coloneqq d_{\ell_1}(X \setminus A_i,B_j \cap X)$ is positive, and so $B_j^{(\beta_j)} \cap X \subseteq A_i$. Since there are finitely many $j\in J$, taking $\alpha \coloneqq \min_{j \in J}\beta_j$ completes the proof.
\end{proof}

The next lemma allows us to pass from an open cover to a closed refinement without increasing order.

\begin{lemma}\label{lemma:open_cover_to_closed}
    Any open cover $\set{U_i}_{i \in I}$ of a compact set $X\subseteq\R^n$ has a closed refinement $\set{F_i}_{i\in I}$, and in particular, $\ord(\set{F_i}_{i \in I}) \leq \ord(\set{U_i}_{i \in I})$.
\end{lemma}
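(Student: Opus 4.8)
The plan is to use compactness twice: once to reduce to a finite subcover, and once to perform a "shrinking" of that finite open cover to a closed cover with the same index set, so that order cannot increase. First I would extract a finite subcover $\set{U_{i_1},\dots,U_{i_m}}$ of $X$ from $\set{U_i}_{i\in I}$. Then I would apply the standard shrinking lemma for finite open covers of a normal space (every compact metric space, in particular $X \subseteq \R^n$ with the subspace topology, is normal): there exist open sets $V_{i_k}$ with $\overline{V_{i_k}} \subseteq U_{i_k}$ (closure taken in $X$) such that $\set{V_{i_1},\dots,V_{i_m}}$ still covers $X$. Setting $F_{i_k} \coloneqq \overline{V_{i_k}}$ gives a finite closed cover of $X$ with $F_{i_k} \subseteq U_{i_k}$. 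For the remaining indices $i \in I \setminus \set{i_1,\dots,i_m}$, set $F_i \coloneqq \emptyset$ (which is closed and trivially contained in $U_i$). Then $\set{F_i}_{i\in I}$ is a closed refinement of $\set{U_i}_{i\in I}$, in fact a shrinkage, so it is indexed by the same set $I$.

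For the order bound, the key point is that we built a \emph{shrinkage}: $F_i \subseteq U_i$ for every $i \in I$. Hence for any point $x \in X$, the set $\set{i \in I : x \in F_i}$ is contained in $\set{i \in I : x \in U_i}$, so the maximum over $x$ of $|\set{i : x \in F_i}| - 1$ is at most the corresponding maximum for $\set{U_i}$. This is precisely $\ord(\set{F_i}_{i \in I}) \le \ord(\set{U_i}_{i \in I})$.

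The only mildly delicate point is justifying the shrinking step; I would either cite the standard shrinking lemma for finite covers of normal spaces (e.g.\ \cite[Lemma 36.1]{munkres2000topology}), or give the one-line inductive argument: having shrunk $U_{i_1},\dots,U_{i_{j-1}}$ to $V_{i_1},\dots,V_{i_{j-1}}$ while preserving the covering property, the set $C_j \coloneqq X \setminus \bigl(V_{i_1}\cup\cdots\cup V_{i_{j-1}}\cup U_{i_{j+1}}\cup\cdots\cup U_{i_m}\bigr)$ is closed, hence compact, and contained in $U_{i_j}$; by normality (or directly, since $X$ is metric, using the distance function) choose an open $V_{i_j}$ with $C_j \subseteq V_{i_j} \subseteq \overline{V_{i_j}} \subseteq U_{i_j}$, and the covering property is maintained. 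I do not expect any real obstacle here; the main thing to be careful about is that closures are taken within $X$ (so the $F_i$ are closed \emph{in $X$}, which is all that is needed), and that the empty-set padding handles infinite index sets $I$ cleanly.
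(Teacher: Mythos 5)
Your proof is correct. The route is genuinely a bit different from the paper's, though both are variations on the same compactness-plus-normality theme and both in fact produce a \emph{shrinkage} $F_i \subseteq U_i$, from which the order bound is immediate. You first pass to a finite subcover and then invoke (or reprove) the standard shrinking lemma for finite open covers of a normal space, padding unused indices with $\emptyset$. The paper instead works pointwise: for each $x\in X$ it picks an open $V_x$ with $x\in V_x\subseteq\overline{V_x}\subseteq U_{i(x)}$, extracts a finite subcover $\set{V_{x_j}}_{j\in J}$ by compactness, and then defines $F_i$ as the union of those $\overline{V_{x_j}}$ that happen to be contained in $U_i$. Your version is a bit more modular (one can just cite the shrinking lemma) and makes the shrinkage property explicit by construction; the paper's is fully self-contained and avoids the inductive shrinking argument at the cost of a slightly less transparent grouping step. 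Your care about taking closures in $X$ is appropriate, though since $X$ is compact hence closed in $\R^n$, closures of subsets of $X$ agree in $X$ and in $\R^n$, so no real issue arises either way. One very small stylistic note: the lemma statement asks for a closed \emph{refinement} indexed by $I$, so your padding-by-$\emptyset$ is exactly the right move to keep the same index set; just make sure you do not later use nonemptiness of the $F_i$ (neither you nor the paper does).
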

\begin{proof}
    Since $\set{U_i}_{i \in I}$ is a cover of $X$, for each $x \in X$ there exists an $i\in I$ such that $x \in U_i$. Because $U_i$ is open and $X$ is a subset of $\R^n$, there exists some open $V_x$ such that
    \[
        x \in V_x \subseteq \overline{V_x} \subseteq U_i.
    \]
    By the compactness of $X$, the open cover $\set{V_x}_{x \in X}$ admits a finite subcover $\set{V_{x_j}}_{j \in J}$. Hence, $\set{\overline{V_{x_j}}}_{j \in J}$ is a finite closed cover. Using this cover, we define for each $i \in I$ a closed set
    \[
        F_i \coloneq \bigcup \Set{\overline{V_{x_j}} : j \in J, \overline{V_{x_j}} \subseteq U_i}.
    \]
    By construction, each $\overline{V_{x_j}}$ is contained in some $F_i$, and so $\set{F_i}_{i \in I}$ is a closed cover of $X$. Furthermore, for each distinct $F_i$ and point $x \in X$ such that $F_i$ contains $x$, there is a distinct $U_i$ such that $x \in F_i \subseteq U_i$. We conclude that $\set{F_i}_{i \in I}$ refines $\set{U_i}_{i \in I}$ without increasing order, as claimed.
\end{proof}

Here we present the usual formulation of the Lebesgue covering theorem with closed sets, which can be used to derive the open version stated in \cref{thm:lebesgue}.

\begin{theorem}[{Lebesgue covering theorem~\cite[Theorem IV 2.]{dimensiontheory1941}}]\label{thm:lebesgue_covering_thm}
    Suppose a $d$-dimensional cube $[-1,1]^d \subseteq \R^d$ is covered by a finite family $\cA$ of closed sets, none of which contains points of opposite faces of the cube.
    Then $\ord(\cA) \geq d$.
\end{theorem}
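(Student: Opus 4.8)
The plan is to prove \cref{thm:lebesgue_covering_thm} by reducing it to the non-retraction theorem: the statement, equivalent to Brouwer's fixed-point theorem, that there is no continuous retraction of the cube $Q \coloneqq [-1,1]^d$ onto its boundary $\partial Q$. Suppose, for contradiction, that $\ord(\cA) \le d-1$; equivalently, no $d+1$ members of $\cA$ share a common point. First I would fatten the closed cover $\cA = \set{A_1,\dots,A_m}$ to an open one with the same features. For each coordinate $i$, write $L_i^- \coloneqq \set{x\in Q : x_i = -1}$ and $L_i^+ \coloneqq \set{x\in Q: x_i = +1}$ for the opposite faces. Each compact $A_k$ misses at least one face of every such pair, hence lies at positive $\ell_1$-distance from it; and for each $(d+1)$-element $T\subseteq[m]$ the compact sets $\set{A_k}_{k\in T}$ have empty intersection, so the same holds for their $\rho$-neighbourhoods once $\rho$ is small enough (otherwise a limit point would lie in all of them). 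Choosing $\rho>0$ below all these finitely many thresholds and putting $U_k \coloneqq A_k^{(\rho)}$ gives an open cover $\cU = \set{U_1,\dots,U_m}$ of $Q$ with $\ord(\cU)\le d-1$ and still no $U_k$ meeting two opposite faces; it then suffices to reach a contradiction from $\cU$.

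The core of the argument is a map $g\colon Q\to Q$ built from $\cU$. Fix a partition of unity $\set{\phi_k}$ subordinate to $\cU$; since at most $d$ members of $\cU$ pass through any point, at most $d$ of the $\phi_k$ are positive at any point. Because no $U_k$ meets opposite faces, each $U_k$ has a well-defined \emph{type vector} $t_k\in\set{-1,0,1}^d$, with $(t_k)_i$ equal to $-1$, $+1$, or $0$ according as $U_k$ meets $L_i^-$, meets $L_i^+$, or meets neither. Set $g(x) \coloneqq \sum_k \phi_k(x)\,t_k$, a convex combination of vectors in $[-1,1]^d$, so $g$ maps $Q$ into $Q$. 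Two features matter. \emph{Boundary behaviour:} if $x_i=\pm1$ then every $U_k$ containing $x$ meets $L_i^{\pm}$, so $g_i(x)=\pm1=x_i$; thus $g(\partial Q)\subseteq\partial Q$, and $g$ agrees with each $x\in\partial Q$ on every coordinate where $x$ is extremal, so the straight-line homotopy $(x,s)\mapsto (1-s)g(x)+sx$ stays in the (convex) common face of $g(x)$ and $x$ and exhibits $g|_{\partial Q}\simeq\mathrm{id}_{\partial Q}$ within $\partial Q$. \emph{Thinness:} at each $x$, $g(x)$ lies in the convex hull of at most $d$ of the finitely many points $\set{t_k}\subseteq\R^d$, so $g(Q)$ is contained in a finite union of simplices of dimension $\le d-1$ and hence has $d$-dimensional Lebesgue measure zero.

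By thinness there is a point $p\in\mathrm{int}(Q)\setminus g(Q)$. Let $\pi_p\colon Q\setminus\set{p}\to\partial Q$ be the radial retraction pushing each point away from $p$ onto $\partial Q$; it is continuous and fixes $\partial Q$ pointwise. Since $g(Q)\subseteq Q$ and $p\notin g(Q)$, the composition $r\coloneqq \pi_p\circ g\colon Q\to\partial Q$ is a well-defined continuous map with $r|_{\partial Q}=g|_{\partial Q}$, which by the previous paragraph is homotopic to $\mathrm{id}_{\partial Q}$. As $(Q,\partial Q)=(D^d,S^{d-1})$ has the homotopy extension property, $r$ extends $g|_{\partial Q}$ and $g|_{\partial Q}\simeq\mathrm{id}_{\partial Q}$, so the identity of $\partial Q$ also extends over $Q$, i.e.\ $Q$ retracts onto $\partial Q$ — contradicting the non-retraction theorem. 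Hence $\ord(\cA)\ge d$.

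The step I expect to be the crux is the \emph{thinness} property and the way it is exploited: rather than trying to extract a point lying in $d+1$ members of $\cA$ directly (which the hypothesis does not obviously yield), one turns the order bound into a measure-theoretic deficiency of the image of a carefully chosen map, making an interior target available to retract from; the non-retraction theorem then closes the argument. The ``no opposite faces'' hypothesis plays its role entirely in the boundary behaviour — it is exactly what makes the type-vector construction consistent and forces $g$ to act like the identity on $\partial Q$. An essentially equivalent route replaces the non-retraction theorem by a Sperner/Kuhn-type combinatorial lemma applied to a triangulation of $Q$ whose mesh lies below a Lebesgue number for the finitely many empty $(d+1)$-fold intersections, but arranging the labeling so that the opposite-faces condition and the order bound combine correctly is the same obstacle in a different guise.
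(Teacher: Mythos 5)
Your proposal is correct, but note that the paper does not actually prove \cref{thm:lebesgue_covering_thm}: it imports it verbatim from Hurewicz--Wallman \cite[Theorem IV 2]{dimensiontheory1941} and only derives the open-set variant (\cref{thm:lebesgue}) from it via \cref{lemma:open_cover_to_closed}. So there is no internal argument to compare against; what you have supplied is a self-contained proof of the cited classical fact. Your route --- fatten the closed cover to an open cover with the same order and opposite-face property (the compactness argument for choosing $\rho$ is fine), build the map $g = \sum_k \phi_k t_k$ from a subordinate partition of unity and the type vectors, observe that $g$ fixes every extremal coordinate on $\partial Q$ so that $g|_{\partial Q} \simeq \mathrm{id}_{\partial Q}$ within $\partial Q$, use the order bound to confine $g(Q)$ to a measure-zero union of hulls of at most $d$ points, and then compose with the radial projection from an interior point missed by the image to manufacture a retraction of the cube onto its boundary --- is essentially the standard nerve-style proof of Lebesgue's theorem reduced to Brouwer's non-retraction theorem, and each step is sound: the subordinate partition of unity guarantees $\phi_k(x)>0$ only for $x\in U_k$, the type vector is well defined precisely because no $U_k$ meets opposite faces, the straight-line homotopy stays in the common face of $x$ and $g(x)$, and the homotopy extension property of $(Q,\partial Q)$ legitimately upgrades $r|_{\partial Q}\simeq \mathrm{id}_{\partial Q}$ to a genuine retraction. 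This is a different (and more hands-on) route than the one the paper relies on by citation, but it proves exactly the statement the paper needs, including the closed-set formulation used in the proof of \cref{thm:lebesgue} and in \cref{lemma:extremal_lower_bound}.
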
 
\begin{proof}[Proof of \cref{thm:lebesgue}]
    Suppose that $\cA$ is a finite family of open sets covering the cube $[-1,1]^d$ such that no member of $\cA$ contains points of opposite faces of the cube.
    By \cref{lemma:open_cover_to_closed}, there exists a closed refinement $\cF$ of $\cA$ that is also a cover, such that $\ord(\cF) \leq \ord(\cA)$.
    
    Since $\cF$ refines $\cA$, it also has no member set containing points of opposite faces of the cube. Applying \cref{thm:lebesgue_covering_thm} to the closed cover $\cF$ yields that $\ord(\cF) \ge d$. We conclude that $\ord(\cA) \ge d$ as claimed.
\end{proof}

Finally, we record the following useful lemma, which shows that for any closed cover of order $d$ of a compact space, there exists a sufficiently small radius $\alpha>0$ such that every $\alpha$-neighbourhood around any point meets at most $d+1$ sets of the cover. 

\begin{lemma}\label{lemma:closed_cover_rounding}
    If a finite closed cover $\cF$ of a compact set $X\subseteq\R^n$ has order $d$, then there is some $\alpha > 0$ such that, for any $x \in X$, the ball $ B_{\ell_1}(\alpha,x) \cap X$ intersects at most $d+1$ sets of $\cF$.
\end{lemma}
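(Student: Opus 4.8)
The plan is to argue by contradiction via a standard Lebesgue-number-style compactness argument. Suppose no such $\alpha$ exists. Then for every $k \in \N$ the choice $\alpha = 1/k$ fails, so there is a point $x_k \in X$ together with $d+2$ distinct indices $i_1^{(k)},\dots,i_{d+2}^{(k)}$ from the (finite) index set of $\cF$ such that $F_{i_j^{(k)}} \cap B_{\ell_1}(1/k,x_k) \cap X \neq \emptyset$ for every $1 \le j \le d+2$. (Here I read ``intersects at most $d+1$ sets of $\cF$'' in terms of indices, which is the notion matching the definition of $\ord$; if $\cF$ has at most $d+1$ members the statement is trivial anyway, and in fact $\ord(\cF)=d$ forces $|\cF| \ge d+1$, and the case $X = \emptyset$ is also trivial.)

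Since $\cF$ is a finite family, it has only finitely many $(d+2)$-element subsets of indices, so by pigeonhole one fixed subset $\{i_1,\dots,i_{d+2}\}$ is chosen for infinitely many $k$; restricting to that subsequence, I may assume the selected index set is the same for all $k$. Because $X \subseteq \R^n$ is compact, I then pass to a further subsequence along which $x_k \to x^\ast$ for some $x^\ast \in X$. For each $j$ pick a witness $y_{j,k} \in F_{i_j} \cap B_{\ell_1}(1/k,x_k) \cap X$; then by the triangle inequality for $\norm{\cdot}_1$ we have $\norm{y_{j,k} - x^\ast}_1 \le \norm{y_{j,k} - x_k}_1 + \norm{x_k - x^\ast}_1 < 1/k + \norm{x_k - x^\ast}_1 \to 0$, so $y_{j,k} \to x^\ast$. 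Since each $F_{i_j}$ is closed, $x^\ast \in F_{i_j}$ for all $j$, hence $x^\ast$ lies in $d+2$ sets of $\cF$ with distinct indices. This gives $\ord(\cF) \ge d+1$, contradicting the hypothesis $\ord(\cF) = d$, and so the required $\alpha$ exists.

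I do not expect any real obstacle here: the argument is a routine ``uniformity from compactness'' extraction. The only points needing a bit of care are (i) keeping the $d+2$ indices distinct throughout, so that a common point genuinely witnesses order $\ge d+1$, and (ii) performing the two successive passes to subsequences in the right order — first using finiteness of the index set to stabilize which $d+2$ sets appear, then using compactness of $X$ to make $x_k$ converge. Both are standard.
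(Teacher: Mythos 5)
Your proof is correct and follows essentially the same compactness-plus-pigeonhole strategy as the paper's proof; the only cosmetic differences are that you stabilize the $(d+2)$-element index set before extracting a convergent subsequence (the paper does these in the opposite order, which also works), and you track explicit witness points $y_{j,k}$ rather than arguing via the distance function $d_{\ell_1}(F_j,\cdot)$.
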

\begin{proof}
    For the sake of contradiction, assume that for every $\alpha > 0$ there is some $x \in X$ such that $B_{\ell_1}(\alpha,x) \cap X$ intersects at least $d+2$ sets of $\cF$. Then we may pick some sequence $\set{x_k}_{k=1}^\infty$ such that $B_{\ell_1}(\frac{1}{2^k},x_k) \cap X$ intersects at least $d+2$ sets of $\cF$ for every $k\in\mathbb{N}$.
    
    Since $X$ is compact, $\set{x_k}_{k=1}^\infty$ has a subsequence $\set{x_{k_i}}_{i=1}^\infty$ which converges to some $x^* \in X$. Furthermore $\cF$ is finite, so there are sets $F_1,\dots,F_{d+2} \in \cF$, each of which intersects $B_{\ell_1}(\frac{1}{2^{k_i}},x_{k_i}) \cap X$   for infinitely many $i \in \mathbb{N}$. Putting these observations together, we see that for any $j \in [d+2]$, we have
    \[
        d_{\ell_1}(F_j,x^*) \leq d_{\ell_1}(F_j,x_{k_i}) + d_{\ell_1}(x_{k_i},x^*),
    \]
    where the sum of the right-hand terms can be made arbitrarily small for an appropriate choice of $i \in \mathbb{N}$. Thus, each of the closed sets $F_1,\dots,F_{d+2}$ has $x^*$ as a limit point, whereby $x^*$ is contained in $d+2$ sets of $\cF$, which contradicts that $\cF$ has order $d$.
\end{proof}

\section{\Cref{{thm:intro_SCD_LR_equivalence}}: Simplicial covering dimension and replicability}\label{sec:LC_equivalent_to_LR}

In the introduction, when discussing the problem of learning a class $\cC$, we assumed that the learner is allowed to output any hypothesis $h \in \set{\pm 1}^\cX$ with small loss. However, in some settings, one restricts the learner to output hypotheses from some class $\cH \supseteq \cC$. For example, in \emph{proper learning}, the output of the learner must be from the original class $\cC$. 

\paragraph{A more general setting.} Let us denote by $\LR_\cH(\cC)$ the list replicability number of $\cC$, with the extra requirement that the list in \Cref{def:list} must consist of hypotheses in $\cH$. With this notation, the unrestricted case is $\LR(\cC) = \LR_{\set{\pm 1}^\cX}(\cC)$.
Similarly, define \[\SCdim_{\cH}(\cC)\coloneq \SCdim_{\Delta_\cH}(\Delta_\cC),\] so that $\SCdim(\cC) = \SCdim_{\set{\pm 1}^\cX}(\cC)$ in the unrestricted setting.

The following theorem, a generalization of \Cref{thm:intro_SCD_LR_equivalence}, establishes a precise quantitative connection between simplicial covering dimension and list replicability.

\begin{theorem}[General form of \Cref{thm:intro_SCD_LR_equivalence}]\label{thm:SCD_LR_equivalence}
    For any finite domain $\cX$, concept class $\cC \subseteq \set{\pm 1}^\cX$, and hypothesis class $\cH\subseteq \set{\pm 1}^\cX $ such that $\cC \subseteq \cH$, we have $\LR_{\cH}(\cC) = \SCdim_{\cH}(\cC)+1$. 
\end{theorem}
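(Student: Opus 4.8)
The plan is to establish the two inequalities $\LR_\cH(\cC)\le\SCdim_\cH(\cC)+1$ and $\SCdim_\cH(\cC)\le\LR_\cH(\cC)-1$ separately. Two elementary facts will be used throughout: since $\cX$ is finite, $\Delta_\cC$ and $\Delta_\cH$ are compact; and for $\mu$ on the $\ell_1$-sphere we have $\loss_\mu(h)=\tfrac12\bigl(1-\inp{h}{\mu}\bigr)$, so $\mu\mapsto\loss_\mu(h)$ is continuous (indeed affine), $\loss_\mu(h)\le\tfrac12 d_{\ell_1}(\mu,B_h)$, and $\loss_\mu(h)\le\epsilon$ forces $d_{\ell_1}(\mu,B_h)\le 2\epsilon$ (flip the coordinates on which $h$ disagrees with $\sign\mu$).

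\textbf{From a cover to an algorithm ($\LR_\cH(\cC)\le\SCdim_\cH(\cC)+1$).} Write $d:=\SCdim_\cH(\cC)$ and fix accuracy $\epsilon$ and confidence $\delta$. I would apply the definition of $\SCdim_\cH(\cC)$ to the coarse open cover $\cA_0:=\{B_h^{(\epsilon/4)}\}_{h\in\cH}$ of $\Delta_\cC$ — which satisfies $\cA_0\prec\Delta_\cH$, since every simplex of $\Delta_\cH$ lies in some $B_h\subseteq B_h^{(\epsilon/4)}$ — to obtain an open shrinkage $\{V_h\}_{h\in\cH}$ with $V_h\cap\Delta_\cC\subseteq B_h^{(\epsilon/4)}$ and $\ord(\{V_h\})\le d$. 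Then \Cref{lemma:open_cover_to_closed} supplies a closed refinement $\{F_h\}_{h\in\cH}$ of the same order, and \Cref{lemma:closed_cover_rounding} gives $\alpha>0$ such that every $\ell_1$-ball of radius $\alpha$ in $\Delta_\cC$ meets at most $d+1$ of the $F_h$. The learning rule is then: on input $S$, compute the empirical distribution $\hat\mu(S)\in\Delta_\cC$ (realizable, as the concept realizing $\mu$ is consistent with $S$) and output the least-indexed $h\in\cH$ with $\hat\mu(S)\in F_h$. Fixing $r\le\min(\alpha,\epsilon/4)$ and, via Hoeffding and $|\cX|<\infty$, a sample size $n$ with $\Pr_{S\sim\mu^n}[\norm{\hat\mu(S)-\mu}_1\ge r]\le\delta$, one sets $\mathrm{list}(\mu):=\{h\in\cH: F_h\cap B_{\ell_1}(r,\mu)\neq\emptyset\}$ and checks: $|\mathrm{list}(\mu)|\le d+1$; each $h\in\mathrm{list}(\mu)$ has $d_{\ell_1}(\mu,B_h)<r+\epsilon/4\le\epsilon/2$, hence $\loss_\mu(h)<\epsilon$; and the output lies in $\mathrm{list}(\mu)$ whenever $\norm{\hat\mu(S)-\mu}_1<r$, i.e.\ with probability at least $1-\delta$. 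As $\epsilon,\delta$ were arbitrary, $\LR_\cH(\cC)\le d+1$.

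\textbf{From an algorithm to a cover ($\SCdim_\cH(\cC)\le\LR_\cH(\cC)-1$).} Put $L:=\LR_\cH(\cC)$; I must show $\LC(\cA)\le L-1$ for every finite open cover $\cA$ of $\Delta_\cC$ with $\cA\prec\Delta_\cH$. Fix such an $\cA$. Since $\{B_h\cap\Delta_\cC\}_{h\in\cH}$ is a finite cover of $\Delta_\cC$ by compact sets refining $\cA$, \Cref{lemma:metric_refinement} yields $\alpha>0$ with $\{B_h^{(\alpha)}\cap\Delta_\cC\}_{h\in\cH}$ still refining $\cA$. Now choose $\epsilon<\alpha/2$ and $\delta<\tfrac{1}{L+1}$ — legitimate since $\LR_\cH(\cC)=L$ furnishes a list-replicable rule $\bm{\cA}$ for every accuracy and confidence — with sample size $n$, and for $h\in\cH$ set $U_h:=\{\mu\in\Delta_\cC:\Pr_{S\sim\mu^n}[\bm{\cA}(S)=h]>\delta\}$. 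The map $\mu\mapsto\Pr_{S\sim\mu^n}[\bm{\cA}(S)=h]=\sum_{S}\bigl(\prod_{i=1}^n\max(0,b_i\mu(x_i))\bigr)\Pr[\bm{\cA}(S)=h]$ is a finite sum of continuous functions (here $S=((x_1,b_1),\dots,(x_n,b_n))$), so each $U_h$ is open. If $\mu\in U_h$ then $h$ must appear in the length-$L$ list for $\mu$ (otherwise $\bm{\cA}$ errs with probability $>\delta$), so (i) $\loss_\mu(h)\le\epsilon$, whence $U_h\subseteq B_h^{(\alpha)}$ and thus $U_h$ lies in a single member of $\cA$; and (ii) at most $L$ distinct indices $h$ can lie in that list, so $\ord(\{U_h\}_{h\in\cH})\le L-1$. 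Finally $\{U_h\}_{h\in\cH}$ covers $\Delta_\cC$, since for each $\mu$ some list member is output with probability at least $(1-\delta)/L>\delta$. Hence $\{U_h\}_{h\in\cH}$ is an open refinement of $\cA$ of order $\le L-1$, giving $\LC(\cA)\le L-1$, and therefore $\SCdim_\cH(\cC)\le L-1$.

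\textbf{Main obstacle.} The two directions are genuinely asymmetric, and the lower bound is the subtler one. Its crux is that the covers $\cA$ admissible in the definition of $\SCdim_\cH(\cC)$ are \emph{coarse} — each must engulf an entire zero-loss simplex — so one is free to run the replicable learner at an accuracy $\epsilon$ that is small \emph{relative to} $\cA$; combined with the fact that the $\loss$-$\epsilon$ sublevel set of $h$ inside $\Delta_\cC$ collapses onto $B_h\cap\Delta_\cC$ as $\epsilon\to0$ (a compactness statement packaged by \Cref{lemma:metric_refinement}), this is exactly what forces the ``hypothesis-fiber'' cover $\{U_h\}$ to refine the given $\cA$. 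Everything else — the order and covering properties of $\{U_h\}$, the continuity needed for openness, and the Hoeffding/rounding bookkeeping in the upper bound — is routine.
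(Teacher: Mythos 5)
Your proof is correct and follows essentially the same approach as the paper: both directions use the same lemmas (metric refinement, closed refinement, and rounding radius) and the same translation between covers indexed by $\cH$ and list-replicable rules. The only minor variation is in the lower bound, where you define $U_h$ by the single threshold $\Prob[\bm\cA(S)=h]>\delta$ and derive both the order bound and the refinement from membership in the size-$L$ list, whereas the paper bakes the loss condition into $V_h$, sets the threshold at $(1-2\delta)/L$, and gets the order bound by a probability pigeonhole — an equivalent bookkeeping choice.
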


\begin{proof}
    For convenience, let $d \coloneqq \SCdim_{\cH}(\cC)$ and let $L \coloneqq \LR_\cH(\cC)$. Recall that by an abuse of notation, we use $\Delta_\cC$ to denote both the simplicial complex and its polyhedron $\norm{\Delta_\cC}$. Since we are working with the topological space $\Delta_\cC \subseteq \mathbb{R}^\cX$, recall from \Cref{rem:outside_X} that we write $\cA \prec \cB$ to mean $\set{A_i \cap \Delta_\cC}_{i \in I} \prec \set{B_j \cap \Delta_\cC}_{j \in J}$.
    
    First, we will show that $d \leq L - 1$. This amounts to proving that if $\cU$ is an open cover of $\Delta_\cC$, such that
    $\cU \prec \set{\sigma}_{\sigma \in \Delta_\cH}$,
    then $\cU$ has a refinement of order at most $L - 1$ that is an open cover of $\Delta_\cC$.
    We begin by applying \cref{lemma:metric_refinement} to produce an $\alpha>0$ such that
    $\cU \prec \set{\sigma^{(\alpha)}}_{\sigma \in \Delta_\cH}$.
    Fix $0<\epsilon<\frac{\alpha}{4}$ and $0<\delta<\frac{1}{2(L+1)}$.
    Since the list replicability number of $\cC$ with respect to $\cH$ is $L$, there exists an $(\epsilon, L)$-list replicable learner $\bm\cA$ for $\cC$ with outputs in $\cH$ and with sample complexity $n \coloneqq n(\epsilon,\delta)$.
    Using this learning rule, for each $h \in \cH$, define an open set
    \[
        V_h \coloneqq \Set{\mu \in \Delta_{\cC} :~ \Prob_{S\sim \mu^n}[\bm\cA(S) = h] > 
        \frac{1-2\delta}{L} \text{ and } \loss_\mu(h) < 2\epsilon}.
    \]
    We will show that the family $\set{V_h}_{h \in \cH}$ is an open cover of $\Delta_\cC$, and is also indeed a refinement of $\cU$ with order at most $L-1$.

    The sets $V_h$ are open in $\Delta_\cC$ by continuity of $\Prob_{S\sim \mu^n}[\bm\cA(S) = h]$ and $\loss_\mu(h)$ in $\mu$.
    To see why $\set{V_h}_{h \in \cH}$ covers $\Delta_\cC$, first recall that every $\mu \in\Delta_\cC$ is a distribution realizable by $\cC$, and so the list replicable learner $\bm\cA$ guarantees a list $\set{h_1,\dots,h_L} \subseteq \cH$ such that
    \[
        \Prob_{S\sim \mu^n}[\bm\cA(S) \in \set{h_1,\dots,h_L}] \geq 1-\delta
    \]
    and $\loss_\mu (h_i) \leq \epsilon$ for each $i \in [L]$. It follows that there is some $h \in \set{h_1,\dots,h_L}$ such that
    \[
        \Prob_{S\sim \mu^n}[\bm\cA(S) = h] \geq \frac{1-\delta}{L} > \frac{1-2\delta}{L}
        \quad \text{and} \quad
        \loss_\mu(h) \leq \epsilon < 2\epsilon,
    \]
    whereby $\mu \in V_h$. Hence, $\set{V_h}_{h \in \cH}$ covers $\Delta_\cC$.
    
    As for the order of $\set{V_h}_{h \in \cH}$, suppose for contradiction that $h_1,\dots,h_{L+1} \in \cH$ are distinct hypotheses such that
    \[
        \mu \in \bigcap_{i \in [L+1]} V_{h_{i}}
    \]
    for some $\mu \in \Delta_\cC$. It follows that the probability of outputting each of the distinct hypotheses $h_1,\dots,h_{L+1}$ is greater than $\frac{1-2\delta}{L}$. As these events are disjoint, we deduce that
    \[
        \Prob_{S\sim \mu^n}[\bm\cA(S) \in \set{h_1,\dots,h_{L+1}}] > 
        (L+1)\cdot\frac{1-2\delta}{L} >
        \frac{L+1}{L} \cdot \left(1-\frac{1}{(L+1)}\right) = 1,
    \]
    which gives the desired contradiction.
    
     It remains to argue that
    \[
        \cU \prec \set{\sigma^{(\alpha)}}_{\sigma \in \Delta_\cH} \prec \set{V_h}_{h \in \cH}.
    \]
    The first refinement is already verified by the choice of $\alpha$, and the second refinement holds by construction since $\loss_\mu(h) < 2\epsilon < \frac{\alpha}{2}$ and  
    \[\loss_\mu(h) = \sum_{\substack{x \in \supp(\mu) \\ h(x) \neq \sign(\mu(x))} } |\mu(x)| =\frac{1}{2}\norm{\mu-|\mu| h}_1\ge \frac{1}{2} d_{\ell_1}(\mu,\sigma_h), \]  
     imply $\mu \in \sigma_h^{(\alpha)}$.

    The second half of the proof is to show that $L \leq d+1$. To this end, fix $\epsilon > 0$. If the relative simplicial covering dimension of $\cC$ with respect to $\cH$ is $d$, then there exists a refinement $\cU$ of
    $\set{B_h^{(\epsilon/2)}}_{h\in\cH}$
    such that $\cU$ has order no more than $d$ and is an open cover of $\Delta_\cC$. Now pick any ordering on $\cH$ and for $h\in \cH$ let
    \[
        V_h \coloneqq \bigcup \Set{U \in \cU :~ \text{$h$ is the first hypothesis in $\cH$ satisfying $U \subseteq B_h^{(\epsilon/2)}$}}.
    \]
    Note that by definition, $\mcV \coloneqq \set{V_h}_{h\in\cH}$ is a refinement of $\set{B_h^{(\epsilon/2)}}_{h\in\cH}$ and has order no more than $d$.
    By \cref{lemma:open_cover_to_closed}, this open cover $\mcV$ has a closed refinement $\cF$ (also indexed by $\cH$) of order at most $d$. Lastly, applying \cref{lemma:closed_cover_rounding} guarantees a $\beta > 0$ such that $\set{\nu \in \Delta_\cC :~ \norm{\nu-\mu}_1 \le \beta}$ intersects at most $d+1$ sets of $\cF$ for any $\mu \in \Delta_\cC$.

    Now we are ready to give an $(\epsilon,d+1$)-list replicable learner for $\cC$. For any distribution $\mu \in \Delta_\cC$ and sample $S \sim \mu^k$ for some $k \in \mathbb{N}$, let $\hat\mu$ denote the empirical estimate of $\mu$ using $S$. Since the distributions in $\Delta_\cC$ are defined on a finite domain, there exists, for any $\delta>0$, some positive integer $n \coloneqq n(\epsilon,\beta,\delta)$ such that
    \[
        \Prob_{S\sim \mu^n}\left[
            \norm{\mu-\hat\mu}_1 < \min\left(
                \frac{\epsilon}{2},\beta
            \right)
        \right] \geq 1-\delta
    \]
    for all $\mu \in \Delta_\cC$.

    Given a replicability parameter $\delta > 0$, an unknown distribution $\mu$, and a sample $S \sim \mu^n$, the learning rule is as follows:
    \begin{enumerate}
        \item Let $\hat\mu$ be the empirical estimate of $\mu$ using $S$.
        \item Select an arbitrary closed set $F_h \in \cF$ containing $\hat\mu$.
        \item Output $h$.
    \end{enumerate}
    First, let us check accuracy. We have that
    \[
        \loss_\mu(h)\leq \loss_{\hat\mu}(h) + \norm{\mu-\hat\mu}_1,
    \]
    where $\loss_{\hat\mu}(h) \leq \frac{\epsilon}{2}$ because $\hat\mu \in F_h \subseteq B_h^{(\epsilon/2)}$, and $\norm{\mu-\hat\mu}_1 \leq \epsilon/2$ with probability at least $1-\delta$.
    Second, we check list-replicability. With probability at least $1-\delta$ we have $\norm{\mu-\hat\mu}_1 < \beta$, in which case there are by choice of $\beta$ at most $d+1$ closed sets of $\cF$ that could contain $\hat\mu$.
\end{proof}

\section{\Cref{thm:intro_LCdim_extremal}: Simplicial covering dimension of extremal classes}
\label{sec:intro_LCdim_extremal}

In this section, we classify the simplicial covering dimension $\SCdim(\cE)$ of extremal classes. The argument is split between a lower bound given in \cref{lemma:extremal_lower_bound} and a matching upper bound relying on \cref{thm:retraction}.
The classification of $\LR(\cE)$ follows as a direct consequence of \cref{thm:SCD_LR_equivalence}. Before presenting the proofs, we give a brief overview and discuss why the binary cube $\bcube$ must be treated as a separate case in both of our bounds.

\subsection{Discussion and overview of the proof}

The cubical complex $\Gamma_\cE$ of a class $\cE$ is composed of the union of all its cubes, where a cube of dimension $k$ corresponds to a strongly shattered set of size $k$.  
By \cref{theorem:LC_of_unions},  $\Gamma_\cE$ has topological dimension equal to that of any maximum cube, which in the case of extremal $\cE$ is exactly $\VCdim(\cE)$.

\begin{proposition}[Dimension of the cubical complex]\label{prop:dimension_of_ccs}
For any concept class $\cC$, the topological dimension of its cubical complex $\Gamma_\cC$ is equal to the size of the largest strongly shattered set by $\cC$. In particular, if $\cE$ is an extremal class, then
\[
    \dim(\Gamma_\cE) = \VCdim(\cE).
\]
\end{proposition}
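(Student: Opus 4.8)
The plan is to compute $\dim(\Gamma_\cC)$ by exploiting that, by construction, $\Gamma_\cC = \bigcup_{q \in \cQ_\cC} \kappa_q$ is a \emph{finite} union of geometric cubes, each a closed subset of $\Gamma_\cC$ whose topological dimension is already understood. (We may assume $\cC \neq \emptyset$, since otherwise $\Gamma_\cC$ is empty.) First I would record that, for $q \in \cQ_\cC$ strongly shattering a set $S_q \subseteq \cX$, the cube $\kappa_q$ is obtained from $[-1,1]^{S_q}$ by fixing the coordinates outside $S_q$ to prescribed $\pm 1$ values, hence is homeomorphic to the solid cube $[-1,1]^{|S_q|}$ (rescaling a cube, which the definition of a geometric cubical complex permits, is a homeomorphism and is irrelevant here). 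As observed in the excerpt right after \Cref{theorem:LC_of_unions}, combining the Lebesgue covering theorem (\Cref{thm:lebesgue}) with \Cref{theorem:LC_of_unions}(i) shows that any space homeomorphic to $[-1,1]^{|S_q|}$ has topological dimension exactly $|S_q|$; thus $\dim(\kappa_q) = |S_q|$.

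Next I would promote \Cref{theorem:LC_of_unions}(ii) from a union of two closed pieces to a finite union by a one-line induction: if $X = Y_1 \cup \cdots \cup Y_m$ with each $Y_i$ closed in $X$ and of finite topological dimension, then $\dim(X) = \max_i \dim(Y_i)$. Applying this to $\Gamma_\cC = \bigcup_{q \in \cQ_\cC}\kappa_q$ — valid because each $\kappa_q$ is closed in $\R^\cX$, hence closed in $\Gamma_\cC$, and has dimension $|S_q| \le |\cX| < \infty$ — yields
\[
\dim(\Gamma_\cC) \;=\; \max_{q \in \cQ_\cC}\dim(\kappa_q) \;=\; \max_{q \in \cQ_\cC}|S_q|.
\]
It then remains to identify $\max_{q \in \cQ_\cC}|S_q|$ with the size of the largest set strongly shattered by $\cC$. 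One direction is immediate: each $q \in \cQ_\cC$ is a subfamily of $\cC$ strongly shattering $S_q$, so $\cC$ strongly shatters $S_q$. For the converse, if $S$ is strongly shattered by $\cC$ via a witness labeling $a \in \set{\pm 1}^{\cX \setminus S}$, then $q \coloneqq \set{c \in \cC :~ c|_{\cX \setminus S} = a}$ has exactly $2^{|S|}$ elements and strongly shatters $S$, so $q \in \cQ_\cC$ with $|S_q| = |S|$. This establishes the first assertion.

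For the ``in particular'' claim, I would observe that strong shattering implies shattering, so for every class the largest strongly shattered set has size at most its VC dimension; when $\cE$ is extremal the two notions coincide by definition, so this largest size equals $\VCdim(\cE)$, giving $\dim(\Gamma_\cE) = \VCdim(\cE)$. I do not expect a real obstacle: all the needed ingredients are already present in the excerpt. The only steps demanding a little care are verifying that each $\kappa_q$ is closed (so the finite-union formula applies) and keeping the bookkeeping straight so that ``$\dim\kappa_q$'', ``$|S_q|$'', and the sizes of strongly shattered sets all line up — in particular, that $\max_{q\in\cQ_\cC}|S_q|$ ranges over precisely the strongly shattered sets of $\cC$.
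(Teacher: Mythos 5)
Your proposal is correct and takes essentially the same approach as the paper: the paper's justification (stated just before the proposition) is exactly to observe that $\Gamma_\cC$ is a finite union of closed cubes and then apply \Cref{theorem:LC_of_unions}. You simply flesh out the one-line induction promoting part (ii) to finite unions and the routine combinatorial identification of $\max_q |S_q|$ with the size of the largest strongly shattered set, both of which the paper leaves implicit.
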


In view of \cref{prop:dimension_of_ccs}, the lower bound of \cref{thm:intro_LCdim_extremal} can be achieved by embedding $\Gamma_\cE$ into $\Delta_\cE$ and applying Lebesgue's covering theorem (\cref{thm:lebesgue}).
The upper bound follows from a retraction from $\Delta_{\cE, 1}$ to the embedding of $\Gamma_{\cE, 1}$, which we denoted by $\widetilde{\Gamma}_{\cE, 1}$ in \eqref{eq:embedding_image}. In particular, each vertex $h \in \Delta_{\cE, 1}$ is mapped to a point in a simplex of $\widetilde{\Gamma}_{\cE,1}$ supported on concepts extending $h$.
The topological dimension of $\Gamma_\cE$ is used to construct a suitable open cover of order $\VCdim(\cE)$, which is pulled back to $\Delta_\cE$ without increasing the order.

When $\cE = \bcube$, however, the embedding given in \cref{lemma:full_subcomplex} fails. In fact, $\Gamma_{\bcube}$ is the geometric cube of dimension $|\cX|$, whereas $\Delta_{\bcube}$ is the cross-polytope of dimension $|\cX|-1$, so no one-to-one embedding exists. We can recover our argument by instead considering the boundary of $\Gamma_{\bcube}$, which is homeomorphic to $\Delta_{\bcube}$, and has dimension one less than the cube.
\[
    \dim(\partial \Gamma_{\bcube}) = \VCdim(\bcube) - 1.
\]
This modification leads to the off-by-one case in \cref{thm:intro_LCdim_extremal}.

\subsection{Lower bound for simplicial covering dimension of extremal classes}

\begin{lemma}\label{lemma:extremal_lower_bound}
    Let $\cE \subseteq \bcube$ be an extremal class. Then
    \[
        \SCdim(\cE) \ge
        \begin{cases}
            \VCdim(\cE)-1 & \text{ if }\cE=\set{\pm 1}^\cX  \\
            \VCdim(\cE) &\text{otherwise.} 
        \end{cases}
    \]
\end{lemma}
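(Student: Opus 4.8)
The plan is to prove the lower bound by exhibiting an explicit open cover of $\Delta_\cE$ refining $\cB = \set{B_h}_{h \in \bcube}$ whose Lebesgue covering number is at least the claimed quantity, and to do this by producing, inside $\Delta_\cE$, a copy of a solid cube (or the boundary of a cube) of the appropriate dimension to which Lebesgue's covering theorem (\cref{thm:lebesgue}) applies.

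\textbf{Step 1: Locate a large cube inside $\Delta_\cE$.} Let $d \coloneqq \VCdim(\cE)$. Since $\cE$ is extremal, by definition there is a set $S \subseteq \cX$ with $|S| = d$ that is strongly shattered by $\cE$; fix a witnessing labeling $a \in \set{\pm 1}^{\cX \setminus S}$, so that every completion of the partial concept $h_S$ (defined by $h_S|_S = *$, $h_S|_{\cX\setminus S} = a$) lies in $\cE$. Geometrically, the cube $\kappa_{h_S} \subseteq \Gamma_\cE$ has dimension $d$, and its image $f_\cE(\kappa_{h_S}) \subseteq \widetilde\Gamma_\cE \subseteq \Delta_\cE$ under the $\ell_1$-normalization map is a homeomorphic copy of $[-1,1]^d$ when $S \subsetneq \cX$ — because $\mu \in \Delta_\cE$ may place an arbitrary nonzero fraction of its mass on the coordinates in $\cX\setminus S$ (whose signs are pinned to $a$), so the normalization never collapses a face. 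When $S = \cX$ (which forces $\cE = \bcube$, since strong shattering of all of $\cX$ with an empty outside-labeling means all $2^{|\cX|}$ concepts appear), the normalization constraint $\norm{\mu}_1 = 1$ is exactly the boundary of the cube, so we only get a copy of $\partial [-1,1]^d \cong \bS^{d-1}$, accounting for the off-by-one.

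\textbf{Step 2: Apply Lebesgue's covering theorem to the embedded cube.} Suppose $\cA$ is an arbitrary finite open cover of $\Delta_\cE$ with $\cA \prec \cB$. Pull it back along the homeomorphism $g \colon [-1,1]^d \to f_\cE(\kappa_{h_S})$ (or $g\colon \partial[-1,1]^d \to \cdots$ in the cube case — then use that $\partial[-1,1]^d$ contains a copy of $[-1,1]^{d-1}$, namely a facet, and restrict attention to that facet). The key point: because $\cA \prec \cB$, each set of $\cA$ lies inside some zero-loss simplex $B_h = \sigma_h$, and a single simplex $\sigma_h$ intersected with the image $f_\cE(\kappa_{h_S})$ cannot contain points of two opposite faces of the cube — opposite faces of $\kappa_{h_S}$ correspond to fixing some coordinate $x_0 \in S$ to $+1$ versus $-1$, and no single hypothesis $h$ is consistent with both after normalization, since $h(x_0)$ is a fixed sign. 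Hence the pulled-back cover satisfies the hypothesis of \cref{thm:lebesgue}, giving $\ord(\text{pullback}) \ge d$ (resp. $\ge d-1$). Since pulling back along an embedding and intersecting with a subspace cannot increase order, and the order computed on the subspace is a lower bound for $\ord(\cA)$ — here one should be a little careful and phrase it via the shrinkage formulation of $\LC$, noting that any open shrinkage $\cB'$ of $\cA$ restricts to an open shrinkage of the pulled-back cover, still with no opposite-face-crossing sets — we get $\LC(\cA) \ge d$ (resp. $d-1$). Taking the supremum over all such $\cA$ yields $\SCdim(\cE) \ge d$ (resp. $d-1$).

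\textbf{Main obstacle.} The delicate point is verifying that "no member of the cover contains points of opposite faces" transfers correctly through the normalization map $f_\cE$ and the restriction to a subspace, and that this property is preserved under passing to open shrinkages (which is what $\LC$ quantifies over). Concretely, I would: (i) carefully identify which sub-simplices $\sigma_h$ meet $f_\cE(\kappa_{h_S})$ and check that each corresponds to a coherent sign assignment on $S$, so it avoids at least one pair of opposite faces; (ii) argue that a set in $\cA$ contained in such a $\sigma_h$, when intersected with the cube, still avoids that pair of opposite faces (this uses that opposite faces of the geometric cube $\kappa_{h_S}$, under $f_\cE$, land in disjoint sub-simplices $\sigma_{h^+}, \sigma_{h^-}$ with $h^+(x_0) = +1 \ne -1 = h^-(x_0)$); (iii) handle the $\cE = \bcube$ case by further restricting to one facet of $\partial \Gamma_{\bcube}$, where the same argument gives dimension $d - 1 = \VCdim(\bcube) - 1$. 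Everything else is bookkeeping with the conventions of \cref{rem:outside_X} and the definitions of $\ord$ and $\LC$.
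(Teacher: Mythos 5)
Your core geometric idea is exactly the paper's: identify a maximal strongly shattered set $S$, embed the corresponding cube $\kappa_{h_S} \subseteq \Gamma_\cE$ into $\Delta_\cE$ via $\ell_1$-normalization, and apply Lebesgue's covering theorem using the observation that opposite faces of the cube correspond to opposite signs at some coordinate $x_0 \in S$. The handling of the $\cE = \bcube$ case by dropping to a $(\VCdim(\cE)-1)$-dimensional cube also matches the paper.

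There is, however, a genuine error in Step 2 that invalidates the argument as written: you have reversed the direction of $\prec$. The paper defines $\cA \prec \cB$ to mean that $\cB$ refines $\cA$, i.e.\ every zero-loss set $B_h$ is contained in some $A_i$ (so the covers $\cA$ appearing in the definition of $\SCdim$ are \emph{coarser} than the zero-loss family). You instead write ``because $\cA \prec \cB$, each set of $\cA$ lies inside some zero-loss simplex $B_h$,'' which is the opposite containment. With the correct reading, the claim ``$\LC(\cA) \ge d$ for an arbitrary finite open cover $\cA$ with $\cA \prec \cB$'' is simply false: the singleton cover $\cA = \set{\Delta_\cE}$ satisfies $\cA \prec \cB$ and has $\LC(\cA) = 0$. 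Neither $\cA$ nor its shrinkages need satisfy the opposite-face hypothesis, so the pullback to the cube fails the conditions of \Cref{thm:lebesgue}.

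The fix is to exhibit \emph{one specific} admissible cover, not to argue for all of them. Take $\cA = \set{B_h^{(\alpha)}}_{h \in \bcube}$ with $\alpha < 1/|\cX|$. Then $\cA \prec \cB$ because $B_h \subseteq B_h^{(\alpha)}$. Crucially, any shrinkage $\cA'$ of this particular $\cA$ has $A'_h \subseteq B_h^{(\alpha)}$, and the quantitative estimate you need is: if $y$ lies on a face of $\kappa_{h_S}$ with $y_{x_0} = 1$, then $f_\cE(y)_{x_0} \ge 1/|\cX|$, while every $z \in B_h$ with $h(x_0) = -1$ has $z_{x_0} \le 0$, so $f_\cE(y) \notin B_h^{(\alpha)}$. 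This shows the pulled-back shrinkage satisfies the opposite-face hypothesis, and hence has order $\ge d$, giving $\LC(\cA) \ge d$. Your informal reasoning in the ``Main obstacle'' paragraph is on the right track geometrically, but the bookkeeping must be anchored to this one concrete $\cA$ and its shrinkages rather than to an arbitrary cover satisfying $\cA \prec \cB$.
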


\begin{proof}
    If $\cE$ is not the binary cube $\bcube$, let $Q$ be a largest cube within the cubical complex $Q_\cE$. If $\cE = \bcube$, then instead take $Q$ to be an arbitrary $(\VCdim(\cE)-1)$-dimensional cube in $Q_\cE$. Let $\Pi \subseteq \Gamma_\cE$ be the geometric realization of $Q$.
    By \Cref{prop:dimension_of_ccs}, $\dim(\Pi)=\VCdim(\cE)$ if $\cE \neq \bcube$ and $\dim(\Pi)=\VCdim(\cE)-1$  if $\cE = \bcube$.

    Since $f_\cE$, defined in~\eqref{eq:embedding}, is an injection, any cover $\cB$ of $\Delta_\cE$ induces a cover $g(\cB) = f^{-1}_{\cE}(\cB) \cap \Pi$ of $\Pi$ with $\ord(\cB) \geq \ord(g(\cB))$.
    Furthermore, any refinement of $\cB$ induces a refinement of $g(\cB)$.
    We will construct a cover $\cA$ of $\Delta_\cE$ which is refined by the zero-loss sets $\set{B_h}_{h \in \set{\pm 1}^\cX}$ and such that $g(\cA)$ fulfills the conditions for \cref{thm:lebesgue}.
    Thus, any refinement of $\cA$ must have order at least $d$.
    
    Let $\alpha < 1 / |\cX|$. We define the open cover $\cA$ of $\Delta_\cE$ as follows.
    \begin{align*}
        \cA = \set{B_h^{(\alpha)}:~h \in \set{\pm 1}^\cX}.
    \end{align*}
    It is clear that $\cA$ is refined by the zero-loss sets $\set{B_h}_{h \in \set{\pm 1}^\cX}$.

    Let $y$ lie on some face of $\Pi$, so that without loss of generality $y_x = 1$ for some $x \in \cX$, and let $h$ be any hypothesis in $\set{\pm 1}^\cX$ such that $h(x) = -1$.
    Set $y' = f_\cE(y)$, so that $y'_x \geq \frac{1}{|\cX|}$.
    Since any point $z \in B_h$ has $z_x \leq 0$, $y'$ is too far from $B_h$ to lie in $B_h^{(\alpha)}$.
    \begin{align*}
        \min_{z \in B_h} \norm{y'-z}_1 &\geq \min_{z \in B_h} |y'_x - z_x|
        \geq |y'_x| \geq \frac{1}{|\cX|} > \alpha.
    \end{align*}
    Thus, if $y$ and $z$ are two points on opposite faces of $\Pi$, neither can lie in the same set $g(B_h^{(\alpha)})$.
    This means that $g(\cA)$ satisfies the conditions of \cref{thm:lebesgue}, and we are done.
\end{proof}

This lower bound requires $\cE$ to be extremal, as we require the largest shattered set to form a solid cube in the cubical complex.

\subsection{Upper bound for simplicial covering dimension of extremal classes}

The upper bound utilizes a ``topological equivalence'' between the simplicial complex of realizable distributions $\Delta_{\cE}$ and the cubical complex of strongly shattered sets $\Gamma_{\cE}$. This equivalence is made precise by the following definition. A \emph{retraction} from a topological space $X$ to a subspace $A$ is a continuous mapping $r\colon X\rightarrow A$ such that the restriction of $r$ to $A$ is the identity, i.e. $r(a)=a$ for all $a\in A$. 

The following theorem is related to \cite[Theorem 9]{chornomaz2025spherical}, and the proof uses similar ideas, particularly the inductive construction of retractions. However, both the theorem and its proof include new elements and techniques that place the result in the framework of simplicial covering dimension. For instance, in \cite{chornomaz2025spherical}, the retraction is mainly concerned with preserving antipodality, while in our case, the retraction is designed to produce a suitable open cover of $\Delta_{\cE}$, which requires subtle analysis.  

\begin{theorem}\label{thm:retraction}
    Let $\cE\neq \{\pm1\}^{\cX}$ be an extremal class, and let $\epsilon_0>0$. Then there is a retraction $f$ from $\Delta_{\cE}$ to $\widetilde{\Gamma}_{\cE}$ and an open cover $\set{U_g}_{g\in \cE}$ of $\widetilde{\Gamma}_{\cE}$ such that the collection $\{f^{-1}(U_{g}\cap \widetilde{\Gamma}_{\cE})\}_{g\in \cE}$ is an open cover of $\Delta_{\cE}$ of order $\VCdim(\cE)$ and  $f^{-1}(U_g\cap \widetilde{\Gamma}_{\cE})\subseteq \sigma_g^{(\epsilon_0)}$ for all $g\in \cE$, where $\sigma_g$ is the maximal simplex in $\Delta_{\cE}$ corresponding to the concept $g$.
\end{theorem}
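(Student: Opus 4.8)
The plan is to build the retraction $f\colon \Delta_{\cE}\to \widetilde\Gamma_{\cE}$ by induction on the skeleta of the subdivided simplicial complex $\Delta_{\cE,1}$, mimicking the inductive construction of \cite[Theorem 9]{chornomaz2025spherical} but tracking, at each stage, which simplex $\sigma_g$ the preimage lands close to. Recall from \Cref{lemma:full_subcomplex} that $\widetilde\Gamma_{\cE,1}$ is a \emph{full} subcomplex of $\Delta_{\cE,1}$ (here we use $\cE\neq\bcube$ crucially), and that the vertices of $\Delta_{\cE,1}$ are the nonempty realizable partial concepts $h\in\cE^\bullet\setminus\{h_*\}$, while the vertices of $\widetilde\Gamma_{\cE,1}$ are exactly those partial concepts all of whose completions lie in $\cE$. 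The first step is to define $f$ on vertices: a vertex $h$ of $\widetilde\Gamma_{\cE,1}$ is fixed, and a vertex $h\notin\widetilde\Gamma_{\cE,1}$ is sent to (the $\ell_1$-normalization of) some cube $q\geq h$ of $Q_\cE$, i.e. to a point of $\widetilde\Gamma_{\cE,1}$ whose support corresponds to a strongly shattered set containing $\supp(h)$ on which all completions agree with $h$; the contractibility of $\Gamma_\cE$ for extremal $\cE$ \cite{chalopin2022unlabeled} is what guarantees these choices can be made coherently, so that $f$ extends simplex-by-simplex over the higher skeleta of $\Delta_{\cE,1}$ — this is the content borrowed from \cite{chornomaz2025spherical}, and I would cite it rather than reprove it, except where the open-cover bookkeeping forces changes.

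Next I would construct the open cover. For each facet $\sigma_g$ ($g\in\cE$) of $\Delta_\cE$, let $\widetilde\sigma_g\subseteq\widetilde\Gamma_\cE$ be the corresponding point $y_g/\norm{y_g}_1$ (a vertex of $\widetilde\Gamma_{\cE,1}$), and take $U_g$ to be a small open neighborhood in $\widetilde\Gamma_\cE$ of the open star of $\widetilde\sigma_g$ relative to the simplicial structure of $\widetilde\Gamma_{\cE,1}$; since the open stars of the vertices of a simplicial complex form a cover of order equal to its dimension, and $\dim\widetilde\Gamma_{\cE,1}=\dim\Gamma_\cE=\VCdim(\cE)$ by \Cref{prop:dimension_of_ccs}, we get $\ord(\set{U_g}_{g\in\cE})=\VCdim(\cE)$. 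The key point is then that $f^{-1}$ does not increase the order: because $f$ is simplicial with respect to the subdivisions $\Delta_{\cE,1}\to\widetilde\Gamma_{\cE,1}$ and each simplex of $\Delta_{\cE,1}$ maps \emph{into} a simplex of $\widetilde\Gamma_{\cE,1}$, a point $\mu$ lying in $f^{-1}(U_{g_1})\cap\cdots\cap f^{-1}(U_{g_{k}})$ forces $f(\mu)$ into the common part of the corresponding open stars, so $\widetilde\sigma_{g_1},\dots,\widetilde\sigma_{g_k}$ all lie in one simplex of $\widetilde\Gamma_{\cE,1}$, giving $k\le \VCdim(\cE)+1$. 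One must choose the neighborhoods $U_g$ (fattening the stars slightly) so that openness in $\Delta_\cE$ is preserved under $f^{-1}$ while the star-intersection pattern is unchanged; \Cref{lemma:metric_refinement} applied on the compact $\widetilde\Gamma_\cE$ supplies the needed uniform slack.

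Finally, the containment $f^{-1}(U_g\cap\widetilde\Gamma_\cE)\subseteq\sigma_g^{(\epsilon_0)}$ is where the induction must be run \emph{quantitatively}: when defining $f$ on a simplex $\tau=\{h_1\geq\cdots\geq h_m\}$ of $\Delta_{\cE,1}$ whose vertices map under $f$ to points of a simplex of $\widetilde\Gamma_{\cE,1}$ supported on concepts that all extend $h_m$, one shows $f(\tau)$ stays within $\ell_1$-distance $O(\epsilon_0)$ of $\sigma_{g}$ whenever $g$ is a completion of $h_m$ realizing the relevant cube; by subdividing $\Delta_{\cE,1}$ further (iterated barycentric subdivision, whose mesh shrinks geometrically) one makes all simplices small enough that "near a vertex of $\widetilde\Gamma_{\cE,1}$ lying in $U_g$" implies "within $\epsilon_0$ of $\sigma_g$". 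I expect the main obstacle to be exactly this coordination: ensuring that the retraction can simultaneously (i) be well-defined and continuous across shared faces — which is the part inherited from \cite{chornomaz2025spherical} and relies on contractibility of $\Gamma_\cE$ — and (ii) respect the fine metric control needed so that the pulled-back stars sit inside the $\epsilon_0$-fattenings $\sigma_g^{(\epsilon_0)}$; reconciling the combinatorial freedom used in (i) with the rigidity demanded by (ii), especially for partial concepts $h$ with several admissible extending cubes $q$, is the delicate step, and I would handle it by making a consistent global choice of a "canonical" cube above each partial concept (e.g. via a fixed linear order on $\cX$ and on $\cE$) before running the induction.
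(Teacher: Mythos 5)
Your proposal captures the right general strategy (retract $\Delta_{\cE}$ onto $\widetilde{\Gamma}_{\cE}$ using contractibility, then pull back a small cover), but it has several concrete gaps, and a couple of points where it adds unnecessary machinery.

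First, the claim that the retraction can be made \emph{simplicial} via a ``canonical cube above each partial concept'' is the crux, and it is not justified. For $f$ to be simplicial you need: for every chain $h_1 \geq h_2$ of realizable partial concepts, the chosen cubes $q(h_1) \geq h_1$ and $q(h_2) \geq h_2$ must themselves form a chain. Since $h_1 \geq h_2$ only gives that the cubes over $h_1$ are a subset of those over $h_2$, nothing forces $q(h_1)$ and $q(h_2)$ to be comparable — a fixed linear order on $\cX$ or $\cE$ does not produce this monotonicity. The paper avoids the issue entirely: its retraction is built cone-by-cone using the contraction of $\Gamma_{\cE_w}$, and is \emph{not} a simplicial map. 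Relatedly, you invoke the simplicial property to bound the order of $\set{f^{-1}(U_g)}$, but that is never needed: for any function $f$, preimages of a family of order $d$ have order at most $d$, simply because $f(\mu)\in \bigcap_i U_{g_i}$ whenever $\mu\in\bigcap_i f^{-1}(U_{g_i})$. The paper uses exactly this trivial observation.

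Second, the proposed cover by (fattened) open stars of the concept-vertices $\widetilde\sigma_g$ in $\widetilde{\Gamma}_{\cE,1}$ does not cover $\widetilde{\Gamma}_{\cE,1}$ before fattening: a point sitting at a vertex of $\widetilde{\Gamma}_{\cE,1}$ that is a higher-dimensional cube lies in the open star of no concept. Fattening enough to cover then threatens the order bound, and you give no argument controlling that trade-off. The paper instead takes any shrinkage $\set{U_g}$ of $\set{\sigma_g^{(\epsilon)}}_{g\in\cE}$ of order $\VCdim(\cE)$, which exists because $\dim(\widetilde{\Gamma}_{\cE,1}) = \VCdim(\cE)$ (\Cref{prop:dimension_of_ccs}) and the open cover $\set{\sigma_g^{(\epsilon)}}$ restricts to $\widetilde{\Gamma}_{\cE,1}$; this is both simpler and avoids the coverage issue.

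Third — and this is the genuine heart of the difficulty — the containment $f^{-1}(U_g \cap \widetilde{\Gamma}_{\cE}) \subseteq \sigma_g^{(\epsilon_0)}$ is a \emph{global} constraint on where $f$ may send points, not a local mesh condition. Iterated barycentric subdivision of the \emph{domain} $\Delta_{\cE,1}$ does not help: the worry is that a point $\mu$ living in a facet $\sigma_{g'}$ far from $\sigma_g$ could, under a general retraction, be moved all the way into $U_g$. Refining the domain gives no control over the image of $f$. The paper solves this with a strong invariant carried through the induction (its Property~1): $f(\mu)$ is required to lie in $\widetilde{\Gamma}_{\cE_{h[\mu]},1}$, the sub-cubical-complex of $\cE$ restricted to the minimal partial concept supporting $\mu$. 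Combined with a metric separation lemma (\Cref{distance subcomplexes}) and the observation (\cref{claim intersection}) that $U_g\cap \widetilde{\Gamma}_{\cE_w,1}\neq\emptyset$ forces $w\in\sigma_g$, this localizes $f^{-1}(U_g)$ near $\sigma_g$. Your sketch has no analog of this invariant, so the distance bound does not follow. To repair the proposal you would need to formulate and propagate a similar ``$f(\mu)$ stays over $h[\mu]$'' invariant; once you do so, the inductive cone construction of the paper (adding one minimal non-cube vertex $w$ at a time and retracting its star via the contraction of the $h$-restricted cubical complex $\Gamma_{\cE_w}$) becomes the natural way to make the choices coherent, and the ``canonical cube'' device becomes unnecessary.
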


We defer the proof of \cref{thm:retraction} to the following subsections in order to focus immediately on its consequences. Namely, the theorem provides an upper bound on the simplicial covering dimension of $\cE$ that is needed to complete the proof of our main result.

Recall that  $\LR_\cH(\cC)$ denotes the list replicability number of $\cC$, with outputs in $\cH$. Similarly, we have $\SCdim_{\cH}(\cC)\coloneq\SCdim_{\Delta_\cH}(\Delta_\cC)$.

\begin{theorem}[General form of \Cref{thm:intro_LCdim_extremal}]\label{thm: general form of THM A}
     Let $\cE \subseteq \set{\pm1}^{\cX}$ be an extremal class. Then for every  $\cE \subseteq \cH \subseteq \set{\pm1}^{\cX}$, we have 
     \[
        \SCdim_{\cH}(\cE) =
        \begin{cases}
            \VCdim(\cE)-1 & \text{ if }\cE=\set{\pm 1}^\cX  \\
            \VCdim(\cE) &\text{otherwise} 
        \end{cases}.
    \]

\end{theorem}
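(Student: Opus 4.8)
\textbf{Proof plan for \Cref{thm: general form of THM A}.}
The plan is to reduce the general hypothesis-class version to the two structural ingredients already isolated: the lower bound from \cref{lemma:extremal_lower_bound} and the retraction of \cref{thm:retraction}. First, I would observe that the lower bound is independent of the hypothesis class: for any $\cH \supseteq \cE$ we have $\SCdim_\cH(\cE) \ge \SCdim(\cE) = \SCdim_{\set{\pm1}^\cX}(\cE)$, because allowing covers refined by the larger family $\set{\sigma}_{\sigma\in\Delta_\cH}$ only enlarges the supremum defining $\SCdim_\cH$. Hence \cref{lemma:extremal_lower_bound} already gives the ``$\ge$'' half in both cases (cube and non-cube). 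So the real content is the matching upper bound $\SCdim_\cH(\cE) \le \VCdim(\cE)$ (resp.\ $\VCdim(\cE)-1$) for \emph{every} $\cH$, and since $\SCdim_\cH(\cE)$ is monotone increasing in $\cH$, it suffices to prove the upper bound for the largest choice $\cH = \set{\pm1}^\cX$, i.e.\ to bound $\SCdim(\cE)$ itself.

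For the upper bound in the non-cube case $\cE \ne \set{\pm1}^\cX$, I would take an arbitrary finite open cover $\cA$ of $\Delta_\cE$ with $\cA \prec \cB = \set{B_h}_{h\in\set{\pm1}^\cX}$ and show it admits an open shrinkage of order at most $\VCdim(\cE)$. By \cref{lemma:metric_refinement} (using that each $B_h\cap\Delta_\cE$ is compact) there is $\epsilon_0>0$ with $\cA \prec \set{B_h^{(\epsilon_0)}}_{h\in\set{\pm1}^\cX}$; shrink $\epsilon_0$ further if needed so that $\sigma_g^{(\epsilon_0)}\cap\Delta_\cE \subseteq B_g^{(\epsilon_0)}\cap\Delta_\cE$ lies inside whatever member of $\cA$ it must. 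Now apply \cref{thm:retraction} with this $\epsilon_0$: it produces a retraction $f\colon\Delta_\cE\to\widetilde\Gamma_\cE$ and an open cover $\set{f^{-1}(U_g\cap\widetilde\Gamma_\cE)}_{g\in\cE}$ of $\Delta_\cE$ of order exactly $\VCdim(\cE)$, with each $f^{-1}(U_g\cap\widetilde\Gamma_\cE)\subseteq\sigma_g^{(\epsilon_0)}$. Since $\sigma_g = B_g$ as subsets of $\R^\cX$, each such set lies in $B_g^{(\epsilon_0)}$ and hence in a member of $\cA$; thus this cover refines $\cA$, and restricting each $f^{-1}(U_g\cap\widetilde\Gamma_\cE)$ to the appropriate index of $\cA$ (taking unions over $g$ assigned to the same $A_i$) yields an open shrinkage of $\cA$ of order at most $\VCdim(\cE)$. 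Therefore $\LC(\cA)\le\VCdim(\cE)$ for every admissible $\cA$, giving $\SCdim(\cE)\le\VCdim(\cE)$.

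For the cube case $\cE = \set{\pm1}^\cX$, I would instead invoke \cref{thm:cube}: the general lower bound argument there already shows $\SCdim(\set{\pm1}^\cX) = |\cX|-1 = \VCdim(\set{\pm1}^\cX)-1$ via \Cref{thm:intro_SCD_LR_equivalence} and the known value of $\LR(\set{\pm1}^\cX)$; alternatively one can note that $\Delta_{\set{\pm1}^\cX}$ is the boundary of the cross-polytope, which is homeomorphic to $S^{|\cX|-1}$ and hence has topological dimension $|\cX|-1$, giving the upper bound directly (and $\SCdim$ is always at most the topological dimension of $\Delta_\cC$). Combining both cases with the $\cH$-monotonicity reduction established in the first paragraph completes the proof.

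\textbf{Main obstacle.} The crux of the whole argument is \cref{thm:retraction} itself — in particular, arranging that the pulled-back cover has order \emph{exactly} $\VCdim(\cE)$ rather than the ambient $|\cX|-1$. This requires that the retraction $f$ collapse $\Delta_\cE$ onto $\widetilde\Gamma_\cE$ in a way compatible with the cubical (strongly-shattered-set) structure, so that the open cover of $\widetilde\Gamma_\cE$ built from the $\VCdim(\cE)$-dimensional cubical complex $\Gamma_\cE$ pulls back without order blow-up; controlling simultaneously the order of the cover and the containment $f^{-1}(U_g\cap\widetilde\Gamma_\cE)\subseteq\sigma_g^{(\epsilon_0)}$ is the delicate part, and is exactly where the contractibility of $\Gamma_\cE$ for extremal $\cE$ and the inductive retraction construction of \cite{chornomaz2025spherical} must be adapted.
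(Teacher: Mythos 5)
Your overall skeleton matches the paper's: reduce via $\cH$-monotonicity, invoke \cref{lemma:extremal_lower_bound} for the lower bound, and invoke \cref{thm:retraction} plus \cref{lemma:metric_refinement} for the upper bound, handling the cube separately. But the monotonicity direction you assert is reversed, and that reversal causes the reduction to fail.

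Recall $\cA\prec\set{\sigma}_{\sigma\in\Delta_\cH}$ means every face $\sigma\in\Delta_\cH$ is contained in some member of $\cA$. Enlarging $\cH$ enlarges $\Delta_\cH$, so \emph{more} faces must be contained, which is a \emph{stronger} constraint on $\cA$; fewer covers are admissible, and the supremum shrinks. Hence $\SCdim_\cH(\cE)$ is monotone \emph{decreasing} in $\cH$: for $\cE\subseteq\cH\subseteq\set{\pm1}^\cX$ one has $\SCdim_{\set{\pm1}^\cX}(\cE)\le\SCdim_\cH(\cE)\le\SCdim_\cE(\cE)$, as the paper states. Your lower-bound reduction ($\SCdim_\cH(\cE)\ge\SCdim(\cE)$) therefore reaches the right conclusion despite a muddled justification, but your upper-bound reduction is wrong: you propose to prove $\SCdim(\cE)\le\VCdim(\cE)$ at $\cH=\set{\pm1}^\cX$, which is the \emph{smallest} of the three quantities, not the largest. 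Establishing that bound does not control $\SCdim_\cH(\cE)$ for general $\cH$ and in particular says nothing about $\SCdim_\cE(\cE)$, which is the extreme the theorem requires. The paper instead bounds $\SCdim_\cE(\cE)$ from above, taking an arbitrary cover $\cA\prec\set{\sigma}_{\sigma\in\Delta_\cE}$, applying \cref{lemma:metric_refinement} to get $\cA\prec\set{\sigma_g^{(\alpha)}}_{g\in\cE}$ for the maximal faces only, and then invoking \cref{thm:retraction}.

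The fix is local: run your upper-bound argument starting from $\cA\prec\set{\sigma}_{\sigma\in\Delta_\cE}$ rather than $\cA\prec\set{B_h}_{h\in\set{\pm1}^\cX}$. Since every face of $\Delta_\cE$ lies in some maximal face $\sigma_g$ with $g\in\cE$, \cref{lemma:metric_refinement} then gives $\cA\prec\set{\sigma_g^{(\alpha)}}_{g\in\cE}$, after which the application of \cref{thm:retraction} and the refinement/shrinkage bookkeeping you describe go through unchanged, and the cube case $\cE=\set{\pm1}^\cX$ (where necessarily $\cH=\cE$) is handled exactly as you wrote.
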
  
\begin{proof}
Note that $\SCdim(\cE) \leq \SCdim_{\cH}(\cE) \leq \SCdim_{\cE}(\cE)$ for any $\cH \supseteq \cE$. Hence, to determine $\SCdim_{\cH}(\cE)$, we bound $\SCdim(\cE)$ from below and $\SCdim_{\cE}(\cE)$ from above. Immediately by \cref{lemma:extremal_lower_bound}, we have the lower bounds
\[
        \SCdim(\cE) \ge
        \begin{cases}
            \VCdim(\cE)-1 & \text{ if }\cE=\set{\pm 1}^\cX  \\
            \VCdim(\cE) &\text{otherwise.} 
        \end{cases}
\]

As for the upper bounds, consider first the case where $\cE=\bcube$. Hence, $\SCdim_{\cE}(\cE)=\SCdim(\cE)$. Recall that
\[
    \SCdim(\Delta_\cE) = \sup \Set{ \SCdim(\cA) : \cA \text{ is a finite open cover of } \Delta_\cE},
\] 
whereas $\SCdim(\cE)$ is defined by taking the supremum over covers $\cA$ that are refined by particular zero-loss sets.
It follows that 
\[
    \SCdim(\cE) \leq \SCdim(\Delta_\cE).
\]
Since $\cE = \bcube$, the space $\Delta_\cE$ is the full cross polytope in $\R^\cX$, which has topological dimension $|\cX|-1$. Furthermore, $\VCdim(\cE) = |\cX|$ because the binary cube shatters its entire domain. Together, these imply that
\[
    \SCdim(\cE) \leq \SCdim(\Delta_\cE) = |\cX|-1 = \VCdim(\cE)-1.
\]

For the second case, let $\cE\neq \set{\pm1}^{\cX}$. Let $\cA$ be any finite open cover of $\Delta_{\cE}$ such that $\cA \prec \set{\sigma \in \Delta_{\cE}}$. By \cref{lemma:metric_refinement} there exists an $\alpha>0$ such that $\cA \prec \set{\sigma^{(\alpha)}}_{\sigma \in \Delta_{\cE}}$. This implies $\cA \prec \set{\sigma_g^{(\alpha)} :~ g\in \cE}$ where $\{\sigma_g:g\in\cE\}$ are the maximal simplices of $\Delta_{\cE}$. Pick $\epsilon_0<\alpha$ and let $f$ and $\{U_g\}_{g \in \cE}$ be as in \cref{thm:retraction}. Then $ \{f^{-1}(U_g\cap \widetilde{\Gamma}_{\cE})\}_{g\in \cE}$  is an open cover of $\Delta_{\cE}$ of order $\VCdim(\cE)$ that refines $\cA$, whereby we conclude that $\SCdim_{\cE}(\cE) \leq \VCdim(\cE)$.
\end{proof}
As a consequence, we can exactly classify the proper ($\cH=\cE$) and improper list replicability number $\LR_{\cH}(\cE)$ of extremal concept classes:
\begin{corollary}
 Let $\cE \subseteq \set{\pm1}^{\cX}$ be an extremal class. Then for every $\cH \supseteq \cE$, we have 
     \[
        \LR_{\cH}(\cE) =
        \begin{cases}
            \VCdim(\cE) & \text{ if }\cE=\set{\pm 1}^\cX  \\
            \VCdim(\cE)+1 &\text{otherwise} 
        \end{cases}.
    \]
\end{corollary}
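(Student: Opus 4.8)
The plan is to deduce this corollary directly from the two main results already in hand, so the deduction itself is a short formal manipulation with no hidden difficulty. By the general form of \Cref{thm:SCD_LR_equivalence}, for any $\cC\subseteq\cH\subseteq\set{\pm1}^\cX$ one has $\LR_\cH(\cC)=\SCdim_\cH(\cC)+1$; specializing to $\cC=\cE$ gives $\LR_\cH(\cE)=\SCdim_\cH(\cE)+1$. Now invoke \Cref{thm: general form of THM A}, which says that for every $\cH\supseteq\cE$, $\SCdim_\cH(\cE)$ equals $\VCdim(\cE)-1$ when $\cE=\set{\pm1}^\cX$ and equals $\VCdim(\cE)$ otherwise. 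Adding $1$ to each branch produces exactly $\LR_\cH(\cE)=\VCdim(\cE)$ in the first case and $\LR_\cH(\cE)=\VCdim(\cE)+1$ in the second. So the real substance to be planned is that of \Cref{thm: general form of THM A}, which the corollary merely repackages in list-replicability language.

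For the lower half of \Cref{thm: general form of THM A} I would argue as in \Cref{lemma:extremal_lower_bound}: pick a maximum cube $Q\subseteq Q_\cE$, of dimension $\VCdim(\cE)$ by \Cref{prop:dimension_of_ccs} and extremality, let $\Pi$ be its geometric realization inside $\Gamma_\cE$, and transport the question through the $\ell_1$-normalization embedding $f_\cE\colon\Gamma_\cE\hookrightarrow\Delta_\cE$. One then exhibits the cover $\cA=\set{B_h^{(\alpha)}:h\in\set{\pm1}^\cX}$ with $\alpha<1/|\cX|$, which is refined by the zero-loss sets; the point is that along a face of $\Pi$ a coordinate is pinned to $+1$ (or $-1$), while any point of $B_h$ with $h(x)=-1$ has that coordinate $\le 0$, so after normalization the $\alpha$-neighbourhood is too small to connect opposite faces of $\Pi$, and \Cref{thm:lebesgue} forces every refinement of $\cA$ to have order at least $\dim\Pi$. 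The binary cube $\bcube$ needs a codimension-one cube here, because $f_\cE$ realizes only the boundary of $\Gamma_{\bcube}$.

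For the upper half I would use \Cref{thm:retraction}: given any finite open cover $\cA$ of $\Delta_\cE$ with $\cA\prec\set{\sigma\in\Delta_\cE}$, \Cref{lemma:metric_refinement} produces $\alpha>0$ with $\cA\prec\set{\sigma_g^{(\alpha)}}_{g\in\cE}$; choosing $\epsilon_0<\alpha$, the cover $\set{f^{-1}(U_g\cap\widetilde\Gamma_\cE)}_{g\in\cE}$ supplied by \Cref{thm:retraction} then refines $\cA$ and has order $\VCdim(\cE)$, giving $\SCdim_\cE(\cE)\le\VCdim(\cE)$ and hence $\SCdim_\cH(\cE)\le\VCdim(\cE)$ by monotonicity in $\cH$; for $\cE=\bcube$ one instead bounds $\SCdim(\cE)$ by the ordinary topological dimension of the cross-polytope $\Delta_\cE$, namely $|\cX|-1$. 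The genuine obstacle in this whole chain — and the step I expect to cost real work — is \Cref{thm:retraction} itself: constructing, for an extremal $\cE$, a retraction $\Delta_\cE\to\widetilde\Gamma_\cE$ onto the normalized cubical complex together with an open cover whose preimages simultaneously stay inside the $\epsilon_0$-fattened maximal simplices $\sigma_g^{(\epsilon_0)}$ and never have more than $\VCdim(\cE)+1$ members overlapping. This is exactly where extremality is indispensable (it makes $\Gamma_\cE$ contractible and, via \Cref{lemma:full_subcomplex}, a full subcomplex of $\Delta_{\cE,1}$), and the retraction must be built inductively — each vertex $h$ of $\Delta_{\cE,1}$ being sent into a face of $\widetilde\Gamma_{\cE,1}$ spanned by concepts extending $h$ — so that the metric control and the order control emerge from a single induction; reconciling those two constraints at once is the delicate point, and everything downstream, including the present corollary, is bookkeeping.
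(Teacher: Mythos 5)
Your first paragraph reproduces exactly the paper's argument: the corollary is just the composition of \Cref{thm:SCD_LR_equivalence} (giving $\LR_\cH(\cE)=\SCdim_\cH(\cE)+1$) with \Cref{thm: general form of THM A} (which computes $\SCdim_\cH(\cE)$ in the two cases), and your subsequent sketch of where the real content lies (the lower bound via \Cref{lemma:extremal_lower_bound} and Lebesgue's covering theorem, the upper bound via \Cref{thm:retraction} with the cube case handled separately through the cross-polytope's topological dimension) is an accurate account of what the paper actually does. The proposal is correct and follows the same route as the paper.
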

\begin{proof}
    The corollary follows directly by combining \cref{thm: general form of THM A} and \cref{thm:SCD_LR_equivalence}.
\end{proof}
\subsubsection{Toolkit}
Before proving \cref{thm:retraction}, we build up some necessary machinery, starting with some standard topology. For more detail, the reader is invited to consult~\cite{munkres2000topology,MatousekBook}.

\begin{theorem}[Universal property of the quotient topology]\label{thm:universal_property}
Let $X$ be a topological space, $\sim$ an equivalence relation on $X$, and
$q:X\to X/{\sim}$ the canonical surjection to the quotient space. Then for any topological space $Y$ and any continuous map $f:X\to Y$ for which $x\sim x'$ implies $f(x)=f(x')$,
there exists a unique continuous map $\tilde f: X/{\sim}\to Y$ such that
\[
f=\tilde f\circ q.
\]
\begin{center}
\begin{tikzcd}
X \arrow[dr, "f"'] \arrow[r, "q"] & X/{\sim} \arrow[d, dashed, "\tilde f"] \\
& Y
\end{tikzcd}
\end{center}

\end{theorem}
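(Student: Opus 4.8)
The plan is to construct $\tilde f$ directly as the set-theoretic factorization of $f$ through the equivalence classes, and then verify, in turn, well-definedness, uniqueness, and continuity. First I would define $\tilde f \colon X/{\sim} \to Y$ by $\tilde f([x]) \coloneqq f(x)$, where $[x] = q(x)$ denotes the equivalence class of $x$. The hypothesis that $x \sim x'$ implies $f(x) = f(x')$ is exactly what guarantees this assignment does not depend on the representative chosen, so $\tilde f$ is a well-defined function, and by construction it satisfies $\tilde f \circ q = f$.

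Next I would dispatch uniqueness. Since $q$ is surjective, any map $g \colon X/{\sim} \to Y$ with $g \circ q = f$ must satisfy $g([x]) = g(q(x)) = f(x) = \tilde f([x])$ for every $x \in X$, and as every element of $X/{\sim}$ is of the form $q(x)$ this forces $g = \tilde f$. This requires no topology at all, only surjectivity of $q$.

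The only genuinely topological step — and the one worth stating carefully — is continuity of $\tilde f$. Here I would invoke the defining property of the quotient topology on $X/{\sim}$: a subset $U \subseteq X/{\sim}$ is open precisely when $q^{-1}(U)$ is open in $X$. Given an open set $V \subseteq Y$, I compute
\[
q^{-1}\bigl(\tilde f^{-1}(V)\bigr) = (\tilde f \circ q)^{-1}(V) = f^{-1}(V),
\]
which is open in $X$ because $f$ is continuous; hence $\tilde f^{-1}(V)$ is open in $X/{\sim}$ by definition of the quotient topology, and $\tilde f$ is continuous. There is no real obstacle in this proof — it is a routine unwinding of definitions — but if anything needs care it is precisely this last equality of preimages together with the correct direction of the quotient-topology characterization (open sets downstairs correspond to saturated open sets upstairs), so I would make sure to phrase that cleanly rather than gloss over it.
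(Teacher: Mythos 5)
Your proof is correct and is the standard textbook argument for the universal property of the quotient topology; the paper states this result without proof, treating it as a known fact from point-set topology. Your three-step structure (well-definedness from the compatibility hypothesis, uniqueness from surjectivity of $q$, continuity from the defining property of the quotient topology via $q^{-1}(\tilde f^{-1}(V)) = f^{-1}(V)$) is exactly right and complete.
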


\begin{definition}[Contractible]
   A topological space $X$ is contractible if there is an element $x\in X$ and a continuous contraction map $F\colon X\times [0, 1] \rightarrow X$ such that $F(\cdot,0)$ is the identity map and $F(\cdot, 1) \equiv x$.
\end{definition}

\begin{definition}[Geometric cone]
    Let $X\subseteq\mathbb{R}^n$, and suppose $p\in \mathbb{R}^n$ is a point such that any line through $p$ intersects $X$ at most one point. Then the geometric cone over $X$ with vertex point $p$ is
\[C(X,p)\coloneq\{\alpha p+(1-\alpha)x:~\alpha \in [0,1], x\in X \}.\] 
\end{definition}
Note that the cone over a simplex $\sigma$ is itself a simplex with dimension one higher. 

The following is a routine exercise in topology showing the connection between a retraction of a cone to its base and the contractibility of the base. 
\begin{lemma}[Retractions of cones]\label{lemma:retraction_of_cone}
 Let $X\subseteq \mathbb{R}^n$ be contractible to $x_0 \in X$ with contraction map $F$. Let $C(X,p$) be a geometric cone over $X$ with vertex point $p$. Then the map $\widetilde{F}\colon C(X,p)\rightarrow X$ defined by $\widetilde{F}(\alpha p+(1-\alpha)x)\coloneq F(x,\alpha)$ is a retraction.
\end{lemma}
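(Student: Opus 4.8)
The plan is to check the three requirements for $\widetilde{F}$ to be a retraction of $C(X,p)$ onto $X$ — that $\widetilde{F}$ is a well-defined function into $X$, that $\widetilde{F}$ restricts to the identity on $X$, and that $\widetilde{F}$ is continuous — with essentially all of the work in the last.

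\emph{Well-definedness and the retraction property.} The first observation is that, for a point $y\in C(X,p)$, the representation $y=\alpha p+(1-\alpha)x$ with $x\in X$ and $\alpha\in[0,1)$ is unique: such a $y$ cannot equal $p$ (since $(1-\alpha)(x-p)=0$ would force $x=p\notin X$, the degenerate case $X=\{p\}$ being trivial), and then $x$ is the unique point of $X$ on the line through $p$ and $y$ by the hypothesis on $C(X,p)$, while $\alpha=1-\norm{y-p}_1/\norm{x-p}_1$ is determined accordingly. When $\alpha=1$ the point equals $p$ for every choice of $x$, but $F(\cdot,1)\equiv x_0$, so every admissible representation assigns $\widetilde{F}$ the same value. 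Hence $\widetilde{F}$ is a well-defined function, and it lands in $X$ because $F$ maps $X\times[0,1]$ into $X$. For the identity property, each $x\in X$ admits the representation $0\cdot p+1\cdot x$, which by the uniqueness just noted is its only representation with $\alpha<1$, so $\widetilde{F}(x)=F(x,0)=x$.

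\emph{Continuity.} This is the main point, and I would obtain it from the universal property of the quotient topology (\cref{thm:universal_property}). Let $q\colon X\times[0,1]\to C(X,p)$ be the continuous surjection $q(x,\alpha)=\alpha p+(1-\alpha)x$; its only non-singleton fibre is $q^{-1}(p)=X\times\{1\}$. In every application of this lemma $X$ is a compact polyhedron, so $X\times[0,1]$ is compact and $C(X,p)\subseteq\mathbb{R}^n$ is Hausdorff; therefore $q$ is a closed map, hence a quotient map. The composite $F\colon X\times[0,1]\to X$ is continuous and constant on the fibres of $q$ — trivially on singleton fibres, and on $X\times\{1\}$ because $F(\cdot,1)\equiv x_0$ — so the universal property produces a unique continuous map $C(X,p)\to X$ through which $F$ factors. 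Comparing formulas shows this map is precisely $\widetilde{F}$, which is therefore continuous; together with the previous paragraph, $\widetilde{F}$ is a retraction.

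The one delicate ingredient is that $q$ is a quotient map, and this is exactly where compactness of $X$ enters. Away from the cone point, $q$ restricts to a homeomorphism of $X\times[0,1)$ onto the open set $C(X,p)\setminus\{p\}$, so continuity of $\widetilde{F}$ there is automatic; the genuine content is continuity at $p$, i.e.\ that $F(x_k,\alpha_k)\to x_0$ whenever $\alpha_k p+(1-\alpha_k)x_k\to p$, and this is what the quotient-map argument packages cleanly.
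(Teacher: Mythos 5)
Your proof is correct and follows the same basic route as the paper: both factor $\widetilde{F}$ through the quotient $X\times[0,1]\to C(X,p)$ and invoke the universal property (\cref{thm:universal_property}). The difference is that the paper simply asserts the homeomorphism $C(X,p)\cong (X\times[0,1])/{\sim}$ and moves on, whereas you correctly identify that this is the only nontrivial step: the continuous bijection from the quotient to $C(X,p)$ (with the subspace topology from $\mathbb{R}^n$) need not be a homeomorphism in general, and you supply the compact-to-Hausdorff argument to close that gap. Your remark that this is where compactness of $X$ genuinely enters is apt — for non-compact $X$ the quotient topology on the cone can be strictly finer than the subspace topology on $C(X,p)\subseteq\mathbb{R}^n$, so the lemma as literally stated (for arbitrary $X\subseteq\mathbb{R}^n$) would be false; it is saved by the fact that every $X$ the paper applies it to is a compact polyhedron, which you flag explicitly. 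In short: same skeleton as the paper's proof, but you make explicit a hypothesis (compactness) that the paper's one-line argument silently relies on, and you also spell out well-definedness, which the paper leaves implicit.
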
 
\begin{proof}
    Notice $C(X,p) \cong (X \times [0,1]) / \sim$ where $\sim$ is the equivalence relation generated by $(x,1)=(y,1)$ for all $x,y \in X$. Since $F(x,1)=F(y,1)=x_0$ for all $x,y\in X$, by the universal property of quotient spaces, \cref{thm:universal_property}, the map $\widetilde{F}$ is continuous. It is a retraction since $\widetilde{F}(x)=F(x,0)=x$ for all $x$.
\end{proof}

The following lemma utilizes the fact that $\Delta_{\cE,1}$ is a finite simplicial complex and will be used to prevent some unwanted intersections when constructing a suitable cover of $\Delta_{\cE,1}$.
Recall that $d_{\ell_1}(X,Y) = \inf_{x\in X, y\in Y}d_{\ell_1}(x,y)$.

\begin{lemma}\label{distance subcomplexes}There exists $\gamma_0>0$ such that for any disjoint subcomplexes $X$ and $Y$ of $\Delta_{\cE, 1}$, we have \[d_{\ell_1}(X,Y)\geq \gamma_0.\] 
\end{lemma}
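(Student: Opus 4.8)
The plan is to exploit the finiteness of the simplicial complex $\Delta_{\cE,1}$. Since both $\cX$ and $\cE$ are finite, $\Delta_{\cE,1}$ consists of finitely many simplices, and consequently it has only finitely many subcomplexes (each subcomplex is determined by the subset of simplices it contains). Moreover, every subcomplex $X$ of $\Delta_{\cE,1}$ is a finite union of closed simplices in $\mathbb{R}^\cX$, hence a compact subset of $\mathbb{R}^\cX$.

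First I would recall the standard fact that two disjoint compact subsets of a metric space are at positive distance: if $X$ and $Y$ are disjoint nonempty subcomplexes with $d_{\ell_1}(X,Y) = 0$, choose $x_k \in X$ and $y_k \in Y$ with $\norm{x_k - y_k}_1 \to 0$; by compactness of $X$, pass to a subsequence with $x_{k_j} \to x^\ast \in X$; then $y_{k_j} \to x^\ast$ as well, and since $Y$ is closed we get $x^\ast \in Y$, contradicting $X \cap Y = \emptyset$. (If either $X$ or $Y$ is empty, then $d_{\ell_1}(X,Y) = +\infty$ by convention, so there is nothing to prove.)

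Then it suffices to set
\[
\gamma_0 \coloneqq \min \Set{ d_{\ell_1}(X,Y) :~ X, Y \text{ are disjoint nonempty subcomplexes of } \Delta_{\cE,1} },
\]
with the convention that this minimum equals $1$ if there are no such pairs. By the previous paragraph this is the minimum of a finite set of strictly positive reals, so $\gamma_0 > 0$, and by construction $d_{\ell_1}(X,Y) \ge \gamma_0$ for every pair of disjoint subcomplexes.

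There is no genuine obstacle here: every step is an immediate consequence of compactness together with the finiteness of the ambient complex. The only point worth emphasizing in the writeup is that finiteness of $\cX$ (equivalently, of $\Delta_{\cE,1}$) is essential — in an infinite simplicial complex, disjoint subcomplexes can come arbitrarily close, so no uniform $\gamma_0$ exists.
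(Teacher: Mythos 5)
Your proof is correct and takes essentially the same approach as the paper: observe that disjoint nonempty subcomplexes are disjoint compact sets (hence at positive distance), and then take the minimum over the finitely many such pairs. The paper's version is just more terse, leaving the compactness step and the edge case of empty subcomplexes implicit.
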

\begin{proof}
    Any disjoint subcomplexes $X$ and $Y$ are nonempty, disjoint compact sets, so $d_{\ell_1}(X,Y)$ is positive. Now take $\gamma_0$ to be the minimum of $d_{\ell_1}(X,Y)$ over the finitely many disjoint subcomplexes $X$ and $Y$.
\end{proof}

We proceed with some properties of extremal classes.

\begin{definition}[$h$-restrictions \cite{chornomaz2025spherical}]  Let $\cE\subseteq \{\pm1\}^{\cX}$ be an extremal class and $h$ be a partial concept on $X$. The \emph{$h$-restriction} of $\cE$, denoted $\cE_h$, is the subclass of $\cE$ consisting of all concepts that extend $h$:
\[\cE_h\coloneq \{g\in \cE:~g\geq h \}.\]   
\end{definition}
Note $\cE_h$ is non-empty if and only if $h$ is realizable by $\cE$. Next, we consider the vertex set of the subdivision $Q_{\cE_h,1}$.

\begin{lemma}[{\cite[Appendix D4]{chornomaz2025spherical}}]  \label{lemma:cubical vertex span} Let $\cE$ be extremal and $h$ be a  realizable partial concept. Then $Q_{\cE_h,1}$ is a full subcomplex of $Q_{\cE,1}$ with vertex set $V(Q_{\cE_h,1})=\{g\in V(Q_{\cE,1}):~g\geq h\}$. Moreover, if $h\geq h'$, then $Q_{\cE_{h},1}$ is a subcomplex of $Q_{\cE_{h'},1}$ and so $\Gamma_{\cE_{h},1}$ embeds into $\Gamma_{\cE_{h'},1}$.
\end{lemma}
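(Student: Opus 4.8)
The plan is to prove the three assertions in turn, reducing everything to the combinatorics of strongly shattered sets and using that $\cE$ is extremal. Throughout, recall that cubes of $Q_{\cE,1}$ are indexed by partial concepts $h'$ all of whose completions lie in $\cE$ (equivalently, $Q_\cE$ strongly shatters $\supp(h')^c$... more precisely $h'$ corresponds to a cube iff every completion of $h'$ is in $\cE$), and that a simplex of the barycentric subdivision $Q_{\cE,1}$ is a chain $g_1\ge\cdots\ge g_k$ of such partial concepts. First I would verify that $V(Q_{\cE_h,1})=\{g\in V(Q_{\cE,1}):g\ge h\}$. If $g$ is a cube of $\cE_h$, then every completion of $g$ lies in $\cE_h\subseteq\cE$, so $g$ is a cube of $\cE$; and completions of $g$ extend $g\ge h$, so in fact $g\ge h$ forces $g$ to be a partial concept extending $h$ — wait, $g$ as a partial concept need not literally satisfy $g\ge h$ unless $\supp(h)\subseteq\supp(g)$. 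Here is the correct argument: a cube $g$ of $Q_{\cE_h}$ is a set of concepts all extending $h$, all of whose members are the completions of the associated partial concept $h_g$; since all completions agree with $h$ on $\supp(h)$, and $\supp(h)$ is disjoint from the ``free'' coordinates of the cube, we get $h_g\ge h$. Conversely if $h_g$ is a cube of $\cE$ with $h_g\ge h$, then every completion of $h_g$ extends $h_g\ge h$ and lies in $\cE$, hence lies in $\cE_h$, so $h_g$ is a cube of $\cE_h$. This gives the vertex-set identity.

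Next I would prove that $Q_{\cE_h,1}$ is a \emph{full} subcomplex of $Q_{\cE,1}$. That it is a subcomplex is immediate from the vertex identity: a chain $g_1\ge\cdots\ge g_k$ of cubes of $\cE_h$ is in particular a chain of cubes of $\cE$. For fullness, suppose $g_1\ge\cdots\ge g_k$ is a chain of cubes of $\cE$ with every $g_i$ a vertex of $Q_{\cE_h,1}$, i.e. every $g_i\ge h$. Then the chain is literally a chain of cubes of $\cE_h$ by the previous paragraph, so the simplex belongs to $Q_{\cE_h,1}$; this is exactly fullness. (This step is where extremality of $\cE$ is used implicitly, via the fact that $\cE_h$ is again extremal — the $h$-restriction of an extremal class is extremal — so that $Q_{\cE_h}$ is genuinely its cubical complex; I would cite or quickly recall this closure property.) The monotonicity claim $h\ge h'\Rightarrow Q_{\cE_h,1}\subseteq Q_{\cE_{h'},1}$ then follows formally: $g\ge h\ge h'$ shows $V(Q_{\cE_h,1})\subseteq V(Q_{\cE_{h'},1})$, and since both are full subcomplexes of $Q_{\cE,1}$, containment of vertex sets upgrades to containment of complexes.

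Finally, the embedding $\Gamma_{\cE_h,1}\hookrightarrow\Gamma_{\cE_{h'},1}$: by \Cref{prop:abstract_Simp}, an inclusion of abstract simplicial complexes $Q_{\cE_h,1}\subseteq Q_{\cE_{h'},1}$ induced by an injective (indeed inclusion) simplicial map on vertices yields an injective continuous map between any geometric realizations; and the specific realizations $\Gamma_{\cE_h,1}$, $\Gamma_{\cE_{h'},1}$ are compatible since both are built by sending a partial concept $g$ to the same vector $y_g\in\mathbb{R}^\cX$ ($\operatorname{sign}(g(x))$ on $\supp(g)$, $0$ elsewhere) and extending linearly — so the inclusion is realized by the literal identity on coordinates, hence a genuine embedding. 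The only subtle point worth spelling out is that the realization of $\Gamma_{\cE_h,1}$ sits inside that of $\Gamma_{\cE_{h'},1}$ as a subpolyhedron, not merely up to homeomorphism; this is immediate from the explicit coordinates. I expect the main obstacle to be purely bookkeeping: carefully matching the ``partial concept'' and ``cube'' viewpoints (the $\ge$ relation on partial concepts versus set-inclusion of completion-sets) so that the vertex-set identity is stated and proved cleanly; once that is pinned down, fullness, monotonicity, and the embedding are formal.
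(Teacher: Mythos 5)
Your proof is correct and takes essentially the same route as the paper's, which compresses the whole argument to the observation that a chain of cubes $g_1\geq\dots\geq g_k$ in $Q_{\cE}$ with $g_k\geq h$ is exactly a chain of cubes in $Q_{\cE_h}$ and vice versa, and then deduces the ``moreover'' clause from both restrictions being full subcomplexes of $Q_{\cE,1}$. The only thing to correct is your parenthetical claim that extremality of $\cE$ enters ``so that $Q_{\cE_h}$ is genuinely its cubical complex'': the construction of $Q_\cC$ (and hence $Q_{\cC,1}$) in \cref{def:Abs_cub_class} applies to any concept class, and none of the steps you actually carry out — the vertex-set identity, fullness, monotonicity, or the geometric embedding via matching coordinates — invoke extremality at all; the hypothesis is carried in the statement because this is the context in which the lemma is used (and because extremality of $\cE_h$, via \cref{contraction proposition}, is what is needed downstream in \cref{thm:retraction}, not here).
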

\begin{proof} 
For the first statement, it is easy to verify that a chain of cubes $g_1\geq \dots \geq g_k$ in $Q_{\cE}$ such that $g_k\geq h$ is also a chain of cubes in $Q_{\cE_h}$ and vice versa. The second statement follows immediately as both  $Q_{\cE_{h},1}$ and $Q_{\cE_{h'},1}$ are full subcomplexes of  $Q_{\cE,1}$.
\end{proof}
The following proposition shows that the cubical complexes of extremal classes and their $h$-restrictions are contractible. 
\begin{proposition}[{\cite[Prop. 4.12 and Thm. 3.1(6)]{chalopin2022unlabeled}}]\label{contraction proposition}Let $\cE$ be extremal. Then:
	\begin{itemize}
		\item The cubical complex $\Gamma_{\cE}$ is contractible, that is there exist $x\in \Gamma_{\cE}$ and a contraction map $F\colon \Gamma_{\cE} \times [0, 1] \rightarrow \Gamma_{\cE}$ such that $F(0, \cdot)$ is the identity map and $F(1, \cdot) \equiv x$;
		\item For any partial concept $h$ realizable by $\cE$, the class $\cE_h$ is extremal. In particular, the cubical complex $\Gamma_{\cE_h}$ is contractible.
	\end{itemize}
\end{proposition}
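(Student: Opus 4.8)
The plan is to prove the two assertions in the order (ii)-then-(i): first establish two elementary ``closure'' lemmas for extremal classes, then deduce the contractibility of $\Gamma_\cE$ by induction on $|\cX|$ using the way $\Gamma_\cE$ decomposes along a coordinate. Throughout, $\cE$ is tacitly nonempty (otherwise $\Gamma_\cE$ is empty and the statement is vacuous). For $x\in\cX$ and $b\in\set{\pm1}$ write $\cE_{x=b}\coloneqq\set{c\in\cE:c(x)=b}$ for the corresponding half, viewed as a class over $\cX\setminus\set x$ via the bijective projection away from $x$, and $\cE^{/x}\coloneqq\set{a\in\set{\pm1}^{\cX\setminus\set x}:a^{+x},a^{-x}\in\cE}$ for the reduction, where $a^{\pm x}$ is the completion of $a$ with $x\mapsto\pm1$. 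The claim is that if $\cE$ is extremal then so are $\cE_{x=b}$ (when nonempty) and $\cE^{/x}$, and both proofs use only the definition of extremality. For $\cE_{x=b}$: any $S$ it shatters has $x\notin S$ and is shattered, hence strongly shattered, by $\cE$; if a strong-shattering witness for $S$ already fixes $x$ to $b$ we are done, and otherwise one checks that $S\cup\set x$ is shattered by $\cE$ (combining the concepts from the two halves of $\cE$), so $S\cup\set x$ is strongly shattered, and fixing the $x$-value of that witness to $b$ gives a strong-shattering witness for $S$ inside $\cE_{x=b}$. For $\cE^{/x}$: if $S$ is shattered by $\cE^{/x}$ then $S\cup\set x$ is shattered by $\cE$, hence strongly shattered, and the same witness restricts to a strong-shattering witness for $S$ in $\cE^{/x}$. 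Iterating the half closure over $\supp(h)$ then yields that $\cE_h$ is extremal for every realizable partial concept $h$, which is the first half of (ii).

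Next, fix $x\in\cX$ and split the cubes of $\Gamma_\cE$ according to the role of $x$: those with $x$ in the free set, and those with $x$ fixed to $+1$ or to $-1$. The latter two families are precisely $\Gamma_{\cE_{x=+1}}$ and $\Gamma_{\cE_{x=-1}}$ lying in the hyperplanes $y_x=\pm1$, while a cube of the first family is a product $[-1,1]\times\kappa'$, with $\kappa'$ ranging over the cubes of $\Gamma_{\cE^{/x}}$ and its two end faces lying in $\Gamma_{\cE_{x=+1}}$ and $\Gamma_{\cE_{x=-1}}$ respectively. Hence $\Gamma_\cE$ is homeomorphic to the double mapping cylinder of the two subcomplex inclusions $\Gamma_{\cE_{x=+1}}\hookleftarrow\Gamma_{\cE^{/x}}\hookrightarrow\Gamma_{\cE_{x=-1}}$, with the ``$x$-prism'' $\Gamma_{\cE^{/x}}\times[-1,1]$ providing the gluing. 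If one of the halves is empty, then $\Gamma_\cE$ is homeomorphic to the cube complex of an extremal class over $\cX\setminus\set x$; if both halves are nonempty then $\set x$ is shattered by $\cE$, hence strongly shattered, which guarantees $\cE^{/x}\neq\emptyset$.

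Finally I would prove (i) by induction on $|\cX|$, the base case $|\cX|=0$ being a single point. In the inductive step, $\cE_{x=\pm1}$ and $\cE^{/x}$ are extremal over $\cX\setminus\set x$ by the closure lemmas, so $\Gamma_{\cE_{x=\pm1}}$ and $\Gamma_{\cE^{/x}}$ are contractible by the inductive hypothesis. Since $\Gamma_{\cE^{/x}}$ includes into each half as a subcomplex (a cofibration), the double mapping cylinder is homotopy equivalent to the ordinary pushout $\Gamma_{\cE_{x=+1}}\cup_{\Gamma_{\cE^{/x}}}\Gamma_{\cE_{x=-1}}$; collapsing the contractible subcomplex $\Gamma_{\cE^{/x}}$ is a homotopy equivalence and turns this pushout into a wedge of two contractible spaces, hence it is contractible, so $\Gamma_\cE$ is contractible. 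The contractibility assertion in (ii) then follows by applying (i) to the extremal class $\cE_h$. The main obstacle, I expect, is getting the cubical decomposition of $\Gamma_\cE$ exactly right — in particular identifying the cross-sections of the $x$-free cubes with $\Gamma_{\cE^{/x}}$ and checking the face relations so that the prism really is the double mapping cylinder — together with resisting the temptation to simply retract the prism onto one end: that retraction does not extend to a deformation retraction of all of $\Gamma_\cE$, which is precisely why one must pass through the homotopy pushout and the gluing lemma, and why extremality is needed (via ``shattered $\Rightarrow$ strongly shattered'' for singletons) to rule out a disconnecting empty reduction.
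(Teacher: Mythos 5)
The paper does not prove this proposition; it is imported directly from Chalopin--Chepoi--Moran--Warmuth, so there is no in-paper proof to compare against. Your self-contained argument is correct and takes a genuinely different route. The closure lemmas for $\cE_{x=b}$ and $\cE^{/x}$ check out: in the nontrivial case for $\cE_{x=b}$, the existing strong-shattering witness for $S$ with $a(x)=-b$ shows $S$ is shattered by $\cE_{x=-b}$, and combined with the hypothesis that $S$ is shattered by $\cE_{x=b}$ one gets $S\cup\set{x}$ shattered, hence strongly shattered, and the resulting witness extended by $x\mapsto b$ does the job. The cubical decomposition as a double mapping cylinder is also correct: a cube of $\cE$ with $x$ free corresponds bijectively to a cube of $\cE^{/x}$ times $[-1,1]$, with end faces at $y_x=\pm 1$ landing in $\Gamma_{\cE_{x=\pm1}}$, and extremality of $\cE$ (via shattered $\Rightarrow$ strongly shattered for $\set{x}$) rules out a disconnecting empty reduction when both halves are nonempty. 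The induction then closes because the double mapping cylinder of cofibrations is homotopy equivalent to the ordinary pushout, and collapsing the contractible CW subcomplex $\Gamma_{\cE^{/x}}$ yields a wedge of two contractible CW complexes, which is contractible.

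The cited source establishes the stronger property of collapsibility (via corner peelings), which is combinatorial and yields a concrete CW collapse of $\Gamma_\cE$ to a point. Your route is shorter and purely homotopy-theoretic, at the cost of invoking standard machinery (cofibration property of CW pairs, quotient by contractible subcomplex is a homotopy equivalence, Whitehead). Two points worth spelling out in a polished write-up: that a wedge of two contractible CW complexes is contractible needs the CW structure (e.g., deformation retract each wedge summand to the wedge point), and your cautionary remark about the prism retraction is slightly overstated --- by induction $\Gamma_{\cE^{/x}}$ is a contractible subcomplex of the contractible $\Gamma_{\cE_{x=+1}}$, hence a deformation retract of it by Whitehead plus homotopy extension, so the naive prism retraction \emph{can} be salvaged by first deformation retracting $\Gamma_{\cE_{x=+1}}$ onto $\Gamma_{\cE^{/x}}$; your point that the unadorned prism retraction alone does not extend is correct.
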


\subsubsection{Proof of \cref{thm:retraction}}
We will work with the subdivisions $D_{\cE,1}, \Delta_{\cE,1}$ and $Q_{\cE,1}, \Gamma_{\cE,1}$ throughout the proof. Recall from \Cref{lemma:full_subcomplex} that $Q_{\cE,1}$ is a full subcomplex of $D_{\cE,1}$, which induces a simplicial embedding $\Gamma_{\cE,1} \hookrightarrow \Delta_{\cE,1}$. Recall $\widetilde{\Gamma}_{\cE,1}$ is the image of this embedding, as defined in \eqref{eq:embedding_image}. Let $\gamma_0$ be as in \cref{distance subcomplexes} and let $\epsilon<\min(\gamma_0,\epsilon_0/2)$.

First, we construct the open cover $\{U_g\}$ of $\widetilde{\Gamma}_{\cE,1}$. Note $\cB\coloneq \{\sigma_g^{(\epsilon)}\}_{g\in \cE}$ is an open cover of $\widetilde{\Gamma}_{\cE,1}$. Since $\Gamma_{\cE}$, $\Gamma_{\cE,1}$, and $\widetilde{\Gamma}_{\cE,1}$ are homeomorphic, \cref{prop:dimension_of_ccs} implies $\dim(\widetilde{\Gamma}_{\cE,1}) = \VCdim(\cE)$. Hence, there exists an open cover $\{U_g\}_{g\in \cE}$ of $\widetilde{\Gamma}_{\cE,1}$ of order $\VCdim(\cE)$  that is a shrinkage of $\cB$.

 We proceed with constructing the retraction $f$. We say $W\subseteq V(D_{\cE, 1})= \cE^\bullet\setminus\set{h_*}$ is \emph{upwards closed} if for every partial concept $w\in W$, all partial concepts extending $w$ are also in $W$. Denote by $L_W$ the full subcomplex of $D_{\cE, 1}$, spanned by the vertex set $W$, and denote by $\Lambda_{W}$ its geometric realization inherited from $\Delta_{\cE,1}$. Recall that for each $\mu \in \Delta_{\cE,1}$, the support of $\mu$ is a chain  $\supp_{\Delta_{\cE,1}}(\mu)=h_1\geq\dots\geq h_k$
 which is the smallest simplex containing $\mu$. We denote by $h[\mu]$ the minimal partial concept in the chain $\supp_{\Delta_{\cE,1}}(\mu)$, i.e., $h[\mu]=h_k$. 
 
 The strategy of the proof is to use induction on upwards closed sets $W$ starting from $W=V(Q_{\cE, 1})$. In particular, for each $W$, we show that there exists a corresponding retraction $f_W$ from $\Lambda_W$ to $\widetilde{\Gamma}_{\cE,1}$ such that the following properties hold:
 \begin{enumerate}
     \item\label{property:1} For each $\mu \in \Lambda_{W}$, we have $f_W(\mu)\in \widetilde{\Gamma}_{\cE_{h[\mu]},1}$;
     \item \label{property:2} The collection $\{f_W^{-1}(U_{g}\cap \widetilde{\Gamma}_{\cE,1})\}_{g\in \cE}$ is an open cover of $\Lambda_W$ of order $\VCdim(\cE)$, and  $f_W^{-1}(U_g\cap \widetilde{\Gamma}_{\cE,1})\subseteq \sigma_g^{(\epsilon_0)}$ for any $g\in \cE$.
 \end{enumerate}
  Finally, setting $W=V(D_{\cE, 1})$ completes the proof. 
\\~\\
\emph{Base case}: Let $W = V(Q_{\cE, 1})$. By \cref{lemma:full_subcomplex}, we have $L_{V(Q_{\cE, 1})} = Q_{\cE, 1}$, and consequently $\Lambda_{V(Q_{\cE, 1})}=\widetilde{\Gamma}_{\cE,1}$. Let $f_{V(Q_{\cE, 1})} \colon \Lambda_{V(Q_{\cE, 1})} \rightarrow \widetilde{\Gamma}_{\cE,1}$ be the identity map $f_{V(Q_{\cE, 1})}: \mu \mapsto \mu$. \cref{property:1} holds immediately by \cref{lemma:cubical vertex span}. Moreover, note that \[f^{-1}_{V(Q_{\cE, 1})}(U_g\cap \widetilde{\Gamma}_{\cE,1})=U_g\cap \widetilde{\Gamma}_{\cE,1}\subseteq \sigma_g^{(\epsilon)}\subseteq \sigma_g^{(\epsilon_0)}\]
where the first inclusion follows from the construction of $\set{U_g}_{g\in \cE}$ and the second from our choice of $\epsilon < \epsilon_0/2$. Hence, \Cref{property:2} also holds, completing the base case.
\\~\\
\emph{Induction step}: Let $W \supsetneq V(Q_{\cE,1})$, and choose $w \in W \setminus V(Q_{\cE,1})$ to be minimal, that is, $w$ does not extend any other partial concept in $W$. Define $W' = W \setminus \set{w}$. By the induction hypothesis, we assume the existence of $f_{W'}$, and our goal is to construct $f_W$.

We now introduce some standard notions from algebraic topology that describe the local structure of a simplicial complex near a vertex~\cite{bryant2002piecewise}.
For a vertex $w$ of $\Lambda_W$, the \emph{star} and \emph{link} of $w$ in $\Lambda_W$ are the subcomplexes
\[ 
\St_W(w) = \{\sigma \in \Lambda_W:~\exists
\tau \in \Lambda_W\text{~s.t.~} \sigma,w \subseteq \tau\} \ \text{ and } \ \Lk_W(w) = \{\sigma \in \St_W(w):~w\notin \sigma\}.\] 
The star consists of all simplices of $\Lambda_W$ that contain $w$, together with all of their faces, while the link consists of those simplices in the star that do not contain $w$.

\begin{figure}[H]
    \begin{center}
        \tikzset{
            % every node/.style={
            %     % shift only,
            %     % fill=blue!10, % Default fill color
            %     % draw=blue,    % Default draw color
            %     % thick         % Default line thickness
            %     font = \footnotesize
            % }
        }
        
        %%%% f_C(Gamma_C,1) \subseteq Delta_C,1
        \begin{tikzpicture}[baseline=(current bounding box.center), x={(1cm,0cm)}, y={(0cm,1cm)}, scale=0.8]

            \clip (-5.5,-1) rectangle (3,4);
                    
            \coordinate (TL) at (-1, 3);
            \coordinate (TR) at (5, 3);
            \coordinate (BR) at (5, -3);
            \coordinate (BL) at (-1, -3);
            \coordinate (LL) at (-4, 0);

            \coordinate (tl) at (-2.5, 1.5);
            \coordinate (bl) at (-2.5, -1.5);
            \coordinate (TT) at (2, 3);
            \coordinate (RR) at (5, 0);
            \coordinate (BB) at (2, -3);

            \coordinate (A) at (-2, 0);
            \coordinate (AB) at (-1, 0);
            \coordinate (B) at (0, 0);
            \coordinate (BC) at (0.5, 1.5);
            \coordinate (C) at (2, 2);
            \coordinate (CD) at (3.5, 1.5);
            \coordinate (D) at (4, 0);
            \coordinate (DE) at (3.5, -1.5);
            \coordinate (E) at (2, -2);
            \coordinate (EB) at (0.5, -1.5);
            \coordinate (M) at (2, 0);

            \filldraw[rounded corners=0.3mm, fill=gray!10]
                (TR) -- (TL) -- (M) -- (TR)
                (TR) -- (BR) -- (M) -- (TR)
                (BR) -- (BL) -- (M) -- (BR)
                (BL) -- (TL) -- (M) -- (BL)
                (TL) -- (LL) -- (BL) -- (TL);

            \draw[dashed]
                (A) -- (B) -- (BC) -- (C) -- (CD) -- (D) -- (DE) -- (E) -- (EB) -- (B);

            \filldraw[rounded corners=0.3mm, very thick, blue, fill=blue!20]
                (tl) -- (A) -- (B) -- (BC) -- (C) --
                (TT) -- (TL) -- cycle;

            \draw[rounded corners=0.3mm]
                (TL) -- (M)
                (TR) -- (BR) -- (M) -- (TR)
                (BR) -- (BL) -- (M) -- (BR)
                (BL) -- (TL) -- (M) -- (BL)
                (LL) -- (BL) -- (TL);

            \filldraw[shift only, fill=blue] 
                (TL) circle (2pt) node[anchor=south]{$w \coloneqq$~\texttt{+**}}
                (TT) circle (2pt) node[anchor=south]{\texttt{+-*}}
                (TR) circle (2pt) node[anchor=south]{\texttt{*-*}}
                (RR) circle (2pt) node[anchor=west]{\texttt{--*}}
                (BR) circle (2pt) node[anchor=north]{\texttt{-**}}
                (BB) circle (2pt) node[anchor=north]{\texttt{-+*}}
                (BL) circle (2pt) node[anchor=north]{\texttt{*+*}}
                (bl) circle (2pt) node[anchor=north east]{\texttt{*+-}}
                
                (tl) circle (2pt) node[anchor=south east]{\texttt{+*-}}
                (A) circle (2pt) node[anchor=north east, outer sep=1mm]{\texttt{++-}}
                (B) circle (2pt) node[anchor=north west, outer sep=1mm]{\texttt{+++}}
                (C) circle (2pt) node[anchor=135, outer sep=2mm]{\texttt{+-+}}
                % (D) circle (2pt);% node[anchor=west]{\texttt{--+}}
                % (E) circle (2pt);% node[anchor=north]{\texttt{-++}}
                (AB) circle (2pt) node[anchor=140, outer sep=0.5mm]{\texttt{++*}}
                (BC) circle (2pt) node[anchor=172, outer sep=1mm]{\texttt{+*+}}
                % (CD) circle (2pt);% node[anchor=south west]{\texttt{*-+}}
                % (DE) circle (2pt);% node[anchor=north west]{\texttt{-*+}}
                % (EB) circle (2pt);% node[anchor=north east]{\texttt{*++}}
                % (M) circle (2pt);% node[anchor=south]{\texttt{**+}}
                ;

            \filldraw[shift only] (LL) circle (2pt) node[anchor=east]{\texttt{**-}};
            
            \draw[dashed] 
                (B) -- (D)
                (BC) -- (DE)
                (C) -- (E)
                (TR) -- (BL);

            \draw[dashed]
                (tl) -- (BL)
                (bl) -- (TL) -- (C) -- (TR) -- (D) -- (BR) -- (E) -- (BL) -- (B) -- (TL)
                (LL) -- (A)
                (TT) -- (C)
                (RR) -- (D)
                (BB) -- (E);

            \node[font=\large] at (-4, 3) {$\St_W(w)$};
        \end{tikzpicture}
        \quad \quad
        \begin{tikzpicture}[baseline=(current bounding box.center), x={(1cm,0cm)}, y={(0cm,1cm)}, scale=0.8]

            \clip (-5.5,-1) rectangle (3,4);
                    
            \coordinate (TL) at (-1, 3);
            \coordinate (TR) at (5, 3);
            \coordinate (BR) at (5, -3);
            \coordinate (BL) at (-1, -3);
            \coordinate (LL) at (-4, 0);

            \coordinate (tl) at (-2.5, 1.5);
            \coordinate (bl) at (-2.5, -1.5);
            \coordinate (TT) at (2, 3);
            \coordinate (RR) at (5, 0);
            \coordinate (BB) at (2, -3);

            \coordinate (A) at (-2, 0);
            \coordinate (AB) at (-1, 0);
            \coordinate (B) at (0, 0);
            \coordinate (BC) at (0.5, 1.5);
            \coordinate (C) at (2, 2);
            \coordinate (CD) at (3.5, 1.5);
            \coordinate (D) at (4, 0);
            \coordinate (DE) at (3.5, -1.5);
            \coordinate (E) at (2, -2);
            \coordinate (EB) at (0.5, -1.5);
            \coordinate (M) at (2, 0);

            \filldraw[rounded corners=0.3mm, fill=gray!10]
                (TR) -- (TL) -- (M) -- (TR)
                (TR) -- (BR) -- (M) -- (TR)
                (BR) -- (BL) -- (M) -- (BR)
                (BL) -- (TL) -- (M) -- (BL)
                (TL) -- (LL) -- (BL) -- (TL);

            \draw[dashed]
                (A) -- (B) -- (BC) -- (C) -- (CD) -- (D) -- (DE) -- (E) -- (EB) -- (B);

            \draw[rounded corners=0.3mm, very thick, blue]
                (tl) -- (A) -- (B) -- (BC) -- (C) --
                (TT);

            \draw[rounded corners=0.3mm]
                (TL) -- (M)
                (TR) -- (BR) -- (M) -- (TR)
                (BR) -- (BL) -- (M) -- (BR)
                (BL) -- (TL) -- (M) -- (BL)
                (LL) -- (BL) -- (TL);

            \filldraw[shift only, fill=blue]
                (TT) circle (2pt) node[anchor=south]{\texttt{+-*}}
                (TR) circle (2pt) node[anchor=south]{\texttt{*-*}}
                (RR) circle (2pt) node[anchor=west]{\texttt{--*}}
                (BR) circle (2pt) node[anchor=north]{\texttt{-**}}
                (BB) circle (2pt) node[anchor=north]{\texttt{-+*}}
                (BL) circle (2pt) node[anchor=north]{\texttt{*+*}}
                (bl) circle (2pt) node[anchor=north east]{\texttt{*+-}}
                
                (tl) circle (2pt) node[anchor=south east]{\texttt{+*-}}
                (A) circle (2pt) node[anchor=north east, outer sep=1mm]{\texttt{++-}}
                (B) circle (2pt) node[anchor=north west, outer sep=1mm]{\texttt{+++}}
                (C) circle (2pt) node[anchor=135, outer sep=2mm]{\texttt{+-+}}
                % (D) circle (2pt);% node[anchor=west]{\texttt{--+}}
                % (E) circle (2pt);% node[anchor=north]{\texttt{-++}}
                (AB) circle (2pt) node[anchor=140, outer sep=0.5mm]{\texttt{++*}}
                (BC) circle (2pt) node[anchor=172, outer sep=1mm]{\texttt{+*+}}
                % (CD) circle (2pt);% node[anchor=south west]{\texttt{*-+}}
                % (DE) circle (2pt);% node[anchor=north west]{\texttt{-*+}}
                % (EB) circle (2pt);% node[anchor=north east]{\texttt{*++}}
                % (M) circle (2pt);% node[anchor=south]{\texttt{**+}}
                ;

            \filldraw[shift only] 
                (TL) circle (2pt) node[anchor=south]{$w \coloneqq$~\texttt{+**}}
                (LL) circle (2pt) node[anchor=east]{\texttt{**-}};
            
            \draw[dashed] 
                (B) -- (D)
                (BC) -- (DE)
                (C) -- (E)
                (TR) -- (BL);

            \draw[dashed]
                (tl) -- (BL)
                (bl) -- (TL) -- (C) -- (TR) -- (D) -- (BR) -- (E) -- (BL) -- (B) -- (TL)
                (LL) -- (A)
                (TT) -- (C)
                (RR) -- (D)
                (BB) -- (E);

            \node[font=\large] at (-4, 3) {$\Lk_W(w)$};
        \end{tikzpicture}
    \end{center}
    \caption{The subcomplexes $\St_W(w)$ and $\Lk_W(w)$ for $w =$~\texttt{+**}.
    Since $w$ is minimal in $W \setminus V(Q_{\cE, 1})$, its link and star are independent of $W$.} 
    \label{fig:embedding}
\end{figure}
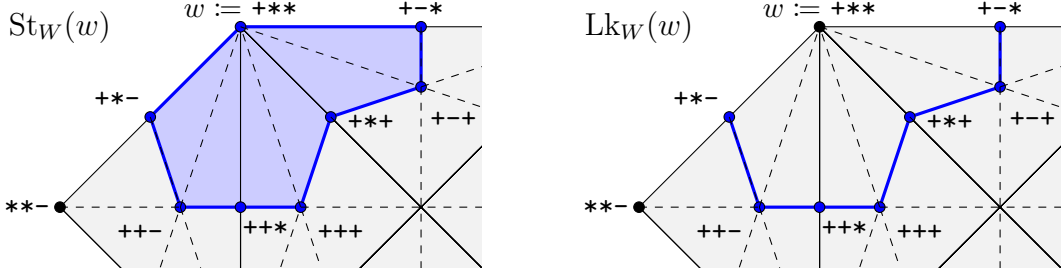 

It is well known that the simplices of the star are precisely those of the link or of the link coned with $w$, namely,
\begin{equation}\label{eq:star of w}
\St_W(w) = \Lk_W(w) \sqcup \set{C(\sigma, w) :~ \sigma \in \Lk_W(w)},
\end{equation}
where $C(\sigma, w)$ denotes the geometric cone over $\sigma$ with apex $w$, which is itself a simplex. 

Since the link $\Lk_W(w)$ does not contain $w$, it is a subcomplex of $\Lambda_{W'}$. Consequently, 
\[\Lk_W(w) = \Lambda_{W'} \cap \St_W(w) \ \text{ and } \ \Lambda_W = \Lambda_{W'} \cup \St_W(w).\] 
Finally, note that $\widetilde{\Gamma}_{\cE,1}\cap\Lk_W(w)=\widetilde{\Gamma}_{\cE_w,1}$, by \Cref{lemma:cubical vertex span} and the minimality of $w$ in $W$. 

Observe that $\St_W(w)$ naturally inherits the geometry of a cone. Indeed, by \eqref{eq:star of w} we have

\[ \St_W(w)=\{\alpha w+(1-\alpha) \mu': \alpha \in[0,1], \mu' \in \Lk_{W}(w) \}\]
which is precisely the geometric cone $C(\Lk_W(w), w)$. Consequently, \[\Lambda_W=\Lambda_{W'}\cup C(\Lk_W(w),w).\]

The retraction $f_W$ will be defined as the composition of two intermediate retractions 
 \[ p_W\colon \Lambda_{W'}\cup C(\Lk_W(w),w)\to \widetilde{\Gamma}_{\cE,1} \cup C(\widetilde{\Gamma}_{\cE_{w,1}},w) \ \text{ and } \ q_W\colon  \widetilde{\Gamma}_{\cE,1} \cup C(\widetilde{\Gamma}_{\cE_w,1},w)\to \widetilde{\Gamma}_{\cE,1}.\] 

We begin by defining
\[p_W(\mu) \coloneqq \begin{cases}
    f_{W^{'}}(\mu) & \text{if~} \mu\in \Lambda_{W^{'}} \\
    \alpha w+(1-\alpha)f_{W^{'}}(\mu') & \text{if~} \mu= \alpha w+(1-\alpha)\mu'\in C(\Lk_W(w),w)
\end{cases}\]

Firstly, note that $\Lk_W(w) = \Lambda_{W'} \cap C(\Lk_W(w),w)$, which shows that $p_W$ is well-defined. Next, we verify that $p_W$ has the correct image: 
\begin{itemize}
\item If $\mu\in \Lambda_{W'}$, then by the induction hypothesis $\Im(f_{W'})\subseteq\widetilde{\Gamma}_{\cE,1}$. 
\item If $\mu=\alpha w + (1-\alpha) \mu' \in C(\Lk_W(w),w)$, then 
by \cref{property:1}, $f_{W'}(\mu')\in \widetilde{\Gamma}_{\cE_{h[\mu']},1}$, where recall that $h[\mu']$ is the minimal partial concept in the support of $\mu'$. Since $w$ is minimal in $W$, we have $h[\mu']\geq w$. By \cref{lemma:cubical vertex span} this implies $f_{W'}(\mu')\in \widetilde{\Gamma}_{\cE_w,1}$. Hence, the image of $p_W$ restricted to $C(\Lk_W(w),w)$ is contained in $C(\widetilde{\Gamma}_{\cE_w,1},w)$. 
\end{itemize}

Note $p_W$ is continuous on $\Lambda_{W'}$ by induction and is continuous on $C(\Lk_W(w),w)$ by the universal property of quotient spaces,  \cref{thm:universal_property}. Since both sets are closed, $p_W$ is continuous. Finally, $p_W$ is a retraction as $f_{W'}$ is the identity map on $\widetilde{\Gamma}_{\cE,1}$ and, hence, on $\widetilde{\Gamma}_{\cE_w,1}=\widetilde{\Gamma}_{\cE,1}\cap\Lk_W(w)$.

We proceed with the construction of the retraction 
\[q_W\colon \widetilde{\Gamma}_{\cE,1} \cup C(\widetilde{\Gamma}_{\cE_w,1},w)\rightarrow \widetilde{\Gamma}_{\cE,1}.\]

By~\cref{contraction proposition}, there exists a contraction map $F_w \colon \widetilde{\Gamma}_{\cE_w,1}\times [0,1]\rightarrow \widetilde{\Gamma}_{\cE_w,1}$ to some $\mu_w\in \widetilde{\Gamma}_{\cE_w,1}$. We may, if necessary, modify the contraction map and assume, without loss of generality, that
\begin{equation}\label{eq:modified contraction}F_w(\mu',\alpha)=\mu'  \ \ \forall \mu'\in \widetilde{\Gamma}_{\cE_w,1} \text{~and~} \alpha \in [0,1-\epsilon].
\end{equation}
Define
\[q_W(\mu) \coloneqq \begin{cases}
    \mu & \text{if~} \mu \in \widetilde{\Gamma}_{\cE,1}\\
    F_w(\mu',\alpha) & \text{if~} \mu= \alpha w+(1-\alpha)\mu'\in C(\widetilde{\Gamma}_{\cE_w,1},w)
\end{cases}.\]
Since $F_w(\mu',0)=\mu'$, the map $q_W$ is well-defined on $\widetilde{\Gamma}_{\cE,1}\cap C(\widetilde{\Gamma}_{\cE_w,1},w)=\widetilde{\Gamma}_{\cE_w,1}$. Moreover, as $\Im(F_w)\subseteq \widetilde{\Gamma}_{\cE_w,1}$, the image of $q_W$ is contained in $\widetilde{\Gamma}_{\cE,1}$ as desired. By \cref{lemma:retraction_of_cone}, $F_w$ is continuous on $C(\widetilde{\Gamma}_{\cE_w,1},w)$, and therefore, $q_W$ is also continuous. Finally, $q_W$ is a retraction as it acts as the identity on $\widetilde{\Gamma}_{\cE,1}$.

Since the composition of two retractions is also a retraction, we conclude that $f_W \coloneq q_W \circ p_W$ is a retraction from $\Lambda_W$ to $\widetilde{\Gamma}_{\cE,1}$. 

\textbf{\cref{property:1}:} By the induction hypothesis, the case $\mu \in\Lambda_{W'}$ is immediate. It remains to consider the case $\mu \coloneq \alpha w+(1-\alpha)\mu'\in C(\Lk_{W}(w),w)$. Observe
\begin{equation*}
\begin{split}
f_W(\alpha w +(1-\alpha)\mu') & =q_W\circ p_W(\alpha w +(1-\alpha)\mu')\\
& =q_W(\alpha w +(1-\alpha)f_{W'}(\mu'))\\ 
& = F_w(f_{W'}(\mu'),\alpha).
\end{split}
\end{equation*}
Hence, $f_W(\mu)\in \widetilde{\Gamma}_{\cE_w,1}$. Since every $\mu \in C(\Lk_{W}(w),w)$ satisfies $h[\mu]\geq w$, the property follows.

\textbf{\cref{property:2}:} Since $f_W$ is continuous, it follows immediately that the collection
$\{f_W^{-1}(U_{g}\cap \widetilde{\Gamma}_{\cE,1})\}_{g\in \cE}$ is an open cover of $\Lambda_W$ of order $\VCdim(\cE)$. 

We now show that for every $g \in \cE$,  $f_W^{-1}(U_g\cap \widetilde{\Gamma}_{\cE,1})\subseteq \sigma_g^{(\epsilon_0)}$. Fix $g\in \cE$ and suppose $f_W(\mu)\in U_g$. If $\mu\in \Lambda_{W'}$, then $f_W(\mu)=f_{W'}(\mu)\in U_g$, and, by the induction hypothesis,  $\mu\in \sigma_g^{(\epsilon_0)}$.

It remains to prove the same for $\mu \in C(\Lk_W(w),w)$. To do this, we use the following claim
\begin{claim}\label{claim intersection}
Pick any $g \in \cE$. Then $U_g \cap \widetilde{\Gamma}_{\cE_w,1}\neq \emptyset$ implies that $\sigma_g\cap \widetilde{\Gamma}_{\cE_w,1}\neq \emptyset$. In particular, $g$ extends $w$, and $w \in \sigma_g$.
\end{claim}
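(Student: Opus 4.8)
The plan is to derive the claim from the metric separation of disjoint subcomplexes of $\Delta_{\cE,1}$ (\cref{distance subcomplexes}), exploiting that $\{U_g\}_{g\in\cE}$ is a shrinkage of $\cB=\{\sigma_g^{(\epsilon)}\}_{g\in\cE}$ with $\epsilon<\gamma_0$. The first point to record is that both $\sigma_g$ and $\widetilde{\Gamma}_{\cE_w,1}$ are honest subcomplexes of the \emph{subdivided} complex $\Delta_{\cE,1}$: the subdivision of the maximal simplex $\sigma_g$ is the subcomplex of $D_{\cE,1}$ spanned by the (automatically realizable) partial concepts $h\le g$ with $h\neq h_*$, while $\widetilde{\Gamma}_{\cE_w,1}$ is the geometric realization of $Q_{\cE_w,1}$, which is a full subcomplex of $D_{\cE,1}$ by \cref{lemma:full_subcomplex,lemma:cubical vertex span}. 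Being compact subcomplexes of the single complex $\Delta_{\cE,1}$, the pair $(\sigma_g,\widetilde{\Gamma}_{\cE_w,1})$ is in the scope of \cref{distance subcomplexes}.

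For the first assertion, suppose $U_g\cap\widetilde{\Gamma}_{\cE_w,1}\neq\emptyset$ and pick $\mu$ in this intersection. Since $U_g\subseteq\sigma_g^{(\epsilon)}$, we have $d_{\ell_1}(\mu,\sigma_g)<\epsilon<\gamma_0$, and since $\mu\in\widetilde{\Gamma}_{\cE_w,1}$ this forces $d_{\ell_1}(\sigma_g,\widetilde{\Gamma}_{\cE_w,1})<\gamma_0$. By \cref{distance subcomplexes} the two subcomplexes cannot be disjoint, so $\sigma_g\cap\widetilde{\Gamma}_{\cE_w,1}\neq\emptyset$.

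For the ``in particular'' part, fix $\nu\in\sigma_g\cap\widetilde{\Gamma}_{\cE_w,1}$ and compare the two membership conditions on the coordinates in $\supp(w)$. On one hand, $\nu\in\sigma_g=B_g$ means $\sign(\nu(x))=g(x)$ for every $x\in\supp(\nu)$, so the minimal partial concept $h[\nu]$ satisfies $h[\nu]\le g$. On the other hand, write $\nu=\mu/\norm{\mu}_1$ with $\mu\in\Gamma_{\cE_w,1}$, so $\mu=\sum_i\alpha_i y_{h_i}$ over a simplex $h_1\ge\dots\ge h_k$ of $Q_{\cE_w,1}$ with $\alpha_i\ge0$ and $\sum_i\alpha_i=1$. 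Each $h_i$ is a cube of $Q_{\cE_w}$, i.e.\ all of its completions lie in $\cE_w$, hence extend $w$; a short case check (if some $x\in\supp(w)$ had $h_i(x)=*$ or $h_i(x)\neq w(x)$, one could complete $h_i$ to a total function not extending $w$) then shows $h_i\ge w$. Consequently $y_{h_i}(x)=\sign(w(x))$ for all $i$ and all $x\in\supp(w)$, so $\mu(x)=\sign(w(x))\neq0$, and the same holds for $\nu$ after normalization; thus $\supp(w)\subseteq\supp(\nu)$ and $\sign(\nu(x))=w(x)$ there, i.e.\ $h[\nu]\ge w$. Combining, $w\le h[\nu]\le g$, so $g$ extends $w$ and $s_w\subseteq s_g$, i.e.\ $w\in\sigma_g$.

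The step that needs the most care is the bookkeeping in the first paragraph: making sure $\sigma_g$ and $\widetilde{\Gamma}_{\cE_w,1}$ are presented as subcomplexes of one and the same simplicial complex $\Delta_{\cE,1}$, so that \cref{distance subcomplexes} is legitimately applicable. Alongside it, the only nontrivial combinatorial input is the observation that a cube of $Q_{\cE_w}$ automatically extends $w$, which rests on the bijection between cubes of a class and partial concepts all of whose completions lie in that class. Everything else is a routine sign-pattern computation.
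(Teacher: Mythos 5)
Your proof is correct and takes essentially the same route as the paper: both deduce the first implication from the metric separation of disjoint subcomplexes (\cref{distance subcomplexes}), and both deduce the ``in particular'' clause from the fact that every vertex of $\widetilde{\Gamma}_{\cE_w,1}$ extends $w$ (your ``case check'' is exactly the content of \cref{lemma:cubical vertex span}) while every vertex supporting a point of $\sigma_g$ is extended by $g$. One small imprecision worth fixing: the clause ``thus $\supp(w)\subseteq\supp(\nu)$ and $\sign(\nu(x))=w(x)$ there, i.e.\ $h[\nu]\ge w$'' is a non-sequitur, since $h[\nu]$ is the \emph{minimal} element of the support chain and $\supp(h[\nu])$ may be a strict subset of $\supp(\nu)$, so $\supp(w)\subseteq\supp(\nu)$ does not by itself yield $\supp(w)\subseteq\supp(h[\nu])$. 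The correct way to close the loop is what you already established in the preceding sentence: each $h_i$ in the chain satisfies $h_i\ge w$, and $h[\nu]$ is one of those $h_i$; alternatively, as the paper does, pass directly to a common vertex $v$ of the two subcomplexes and read off $w\le v\le g$.
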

\begin{proof}
    We prove the contrapositive. Assume that $\sigma_g$ and $\widetilde{\Gamma}_{\cE_w,1}$ are disjoint. Then by \cref{distance subcomplexes} we have $d_{\ell_1}(\sigma_g, \widetilde{\Gamma}_{\cE_w,1}) \geq \gamma_0$. Recall $U_g\subseteq \sigma_g^{(\epsilon)}\subseteq \sigma_g^{(\gamma_0)}$ where the second containment follows from our choice of $\epsilon<\gamma_0$. Hence, $U_g$ and $\widetilde{\Gamma}_{\cE_w,1}$ are disjoint. 
    
    Finally, $\sigma_g\cap \widetilde{\Gamma}_{\cE_w,1}\neq \emptyset$ implies that these two simplicial complexes have a common vertex $v$. All vertices in the barycentric subdivision of $\sigma_g$ are extended by $g$. Hence, by \cref{lemma:cubical vertex span} we have $g\geq v \geq w$, and $w\in\sigma_g$.
\end{proof}
Now let $\mu \coloneq \alpha w + (1-\alpha)\mu' \in C(\Lk_W(w),w)$ and suppose $f_W(\mu)\in U_g$. By \cref{property:1}, $f_W(\mu)\in U_g \cap \widetilde{\Gamma}_{\cE_w,1}$. By \cref{claim intersection}, we have $w\in \sigma_g$. This implies
\[d_{\ell_1}(\alpha w+(1-\alpha)\mu',\sigma_g)\leq (1-\alpha) d_{\ell_1}(\mu',\sigma_g)\]
where we used the convexity of $\sigma_g$. It remains to bound $(1-\alpha)d_{\ell_1}(\mu',\sigma_g)$. We consider two cases:

\emph{First case}: Let $\alpha\in [0,1-\epsilon]$. By \cref{eq:modified contraction} we have 
\[f_{W}(\mu)=F_w(f_{W'}(\mu'),\alpha)=f_{W'}(\mu')\]
Since $f_{W'}(\mu')\in U_g$, by the induction hypothesis, we have that $\mu'\in \sigma_g^{(\epsilon_0)}$. Hence, 

\[d_{\ell_1}(\alpha w+(1-\alpha)\mu',\sigma_g)\leq (1-\alpha) d_{\ell_1}(\mu',\sigma_g)< 1\times\epsilon_0=\epsilon_0.\] 

\emph{Second case}: Let $\alpha\in (1-\epsilon,1]$. Then
\[d_{\ell_1}(\alpha w+(1-\alpha)\mu',\sigma_g)\leq (1-\alpha) d_{\ell_1}(\mu',\sigma_g)< \epsilon\times2<\epsilon_0.\]
where we used the fact that the maximum $\ell_1$-distance between two points of $\Delta_{\cE}$ is two, together with our choice of $\epsilon<\epsilon_0/2$.

Both cases are complete. Hence, $\mu \in\sigma_g^{(\epsilon_0)}$ and the induction is complete.

\bibliographystyle{alphaurl}  
\bibliography{refs}
\appendix

\end{document}